\documentclass[preprint,12pt]{elsarticle}

\usepackage{amssymb}
\usepackage{amsmath}

\usepackage{amsthm}
\usepackage{bm}
\usepackage{placeins}
\usepackage{graphicx}
\usepackage{epsfig}
\usepackage{epstopdf}
\usepackage{ragged2e}
\usepackage{url}
\usepackage{float}
\usepackage{stfloats}
\usepackage{subfigure}
\usepackage{algorithmic}
\usepackage{algorithm}
\usepackage{threeparttable}
\usepackage{lineno}
\usepackage{multirow}
\usepackage{mathrsfs}

\usepackage[pagebackref,breaklinks=true,colorlinks,citecolor=royalblue,bookmarks=false]{hyperref}
\usepackage{marvosym}

\newtheorem{theorem}{Theorem}[section]

\newtheorem{proposition}[theorem]{Proposition}




\begin{document}

\begin{frontmatter}

\title{Semi-Supervised Multi-Label Feature Selection with Consistent Sparse Graph Learning}
\author[inst1]{Yan~Zhong}
\author[inst2]{Xingyu~Wu}
\author[inst3]{Xinping~Zhao}
\author[inst4]{Li~Zhang}
\author[inst5]{Xinyuan~Song}
\author[inst6]{Lei~Shi}
\author[inst7]{Bingbing~Jiang}

\affiliation[inst1]{organization={School of Mathematical Sciences},
            addressline={Peking University}, 
            city={Beijing},
            postcode={100871}, 
            country={China}}

\affiliation[inst2]{organization={Department of Data Science and Artificial Intelligence},
            addressline={Hong Kong Polytechnic University}, 
            city={Hong Kong},
            postcode={999077},
            country={China}}

\affiliation[inst3]{organization={School of Computer Science and Technology},
            addressline={Harbin Institute of Technology (Shenzhen)}, 
            city={Shenzhen},
            postcode={518055},
            country={China}}

\affiliation[inst4]{organization={School of Computer Science and Technology},
            addressline={University of Science and Technology of China}, 
            city={Hefei},
            postcode={230027},
            country={China}}

\affiliation[inst5]{organization={Department of Computer Science and Engineering},
            addressline={University of Texas at Arlington}, 
            city={Arlington TX},
            postcode={76019}, 
            country={the USA}}

\affiliation[inst6]{organization={State Key Laboratory of Media Convergence and Communication},
            addressline={Communication University of China}, 
            city={Beijing},
            postcode={100024},
            country={China}}

\affiliation[inst7]{organization={School of Information Science and Technology},
            addressline={Hangzhou Normal University}, 
            city={Hangzhou},
            postcode={311121},
            country={China}}

\begin{abstract}
In practical domains, high-dimensional data are usually associated with diverse semantic labels, whereas traditional feature selection methods are designed for single-label data. Moreover, existing multi-label methods encounter \textbf{two main challenges in semi-supervised scenarios: (\romannumeral1)}. Most semi-supervised methods fail to evaluate the label correlations without enough labeled samples, which are the critical information of multi-label feature selection, making label-specific features discarded.
\textbf{(\romannumeral2)}. The similarity graph structure directly derived from the original feature space is suboptimal for multi-label problems in existing graph-based methods, leading to unreliable soft labels and degraded feature selection performance.
To overcome them, we propose a consistent sparse graph learning method for multi-label semi-supervised feature selection (SGMFS), which can enhance the feature selection performance by maintaining space consistency and learning label correlations in semi-supervised scenarios.
Specifically, for Challenge \textbf{(\romannumeral1)}, SGMFS learns a low-dimensional and independent label subspace from the projected features, which can compatibly cross multiple labels and effectively achieve the label correlations. For Challenge \textbf{(\romannumeral2)}, instead of constructing a fixed similarity graph for semi-supervised learning, SGMFS thoroughly explores the intrinsic structure of the data by performing sparse reconstruction of samples in both the label space and the learned subspace simultaneously. In this way, the similarity graph can be adaptively learned to maintain the consistency between label space and the learned subspace, which can promote propagating proper soft labels for unlabeled samples, facilitating the ultimate feature selection.
An effective solution with fast convergence is designed to optimize the objective function. Extensive experiments validate the superiority of SGMFS.
\end{abstract}

\begin{keyword}
Multi-label learning \sep semi-supervised feature selection \sep sparse graph learning \sep subspace learning \sep label correlation 
\end{keyword}

\end{frontmatter}


\section{Introduction}
\label{S1}
Recent years have witnessed a dramatic growth of data dimensionality in various domains, such as text categorization \cite{liu2017deep}, bioinformatics~\cite{wang2022feature} and image processing \cite{guo2022causal}. Except for associating with multiple labels, the high-dimensional data often contains redundant features and noisy dimensions.  As an effective dimension reduction technique, feature selection can identify discriminative features by removing irrelevant features without changing the original feature space, receiving extensive attention \cite{jiang2024multiview}. Due to the high cost of obtaining labeled data, there has been a growing focus on semi-supervised feature selection methods~\cite{tang2014feature}, which can utilize both unlabeled and labeled data to improve performance \cite{liu2023semifree}.

\begin{figure}[h]
	\centering
	\vspace{-0mm}
	\includegraphics[width=1\columnwidth]{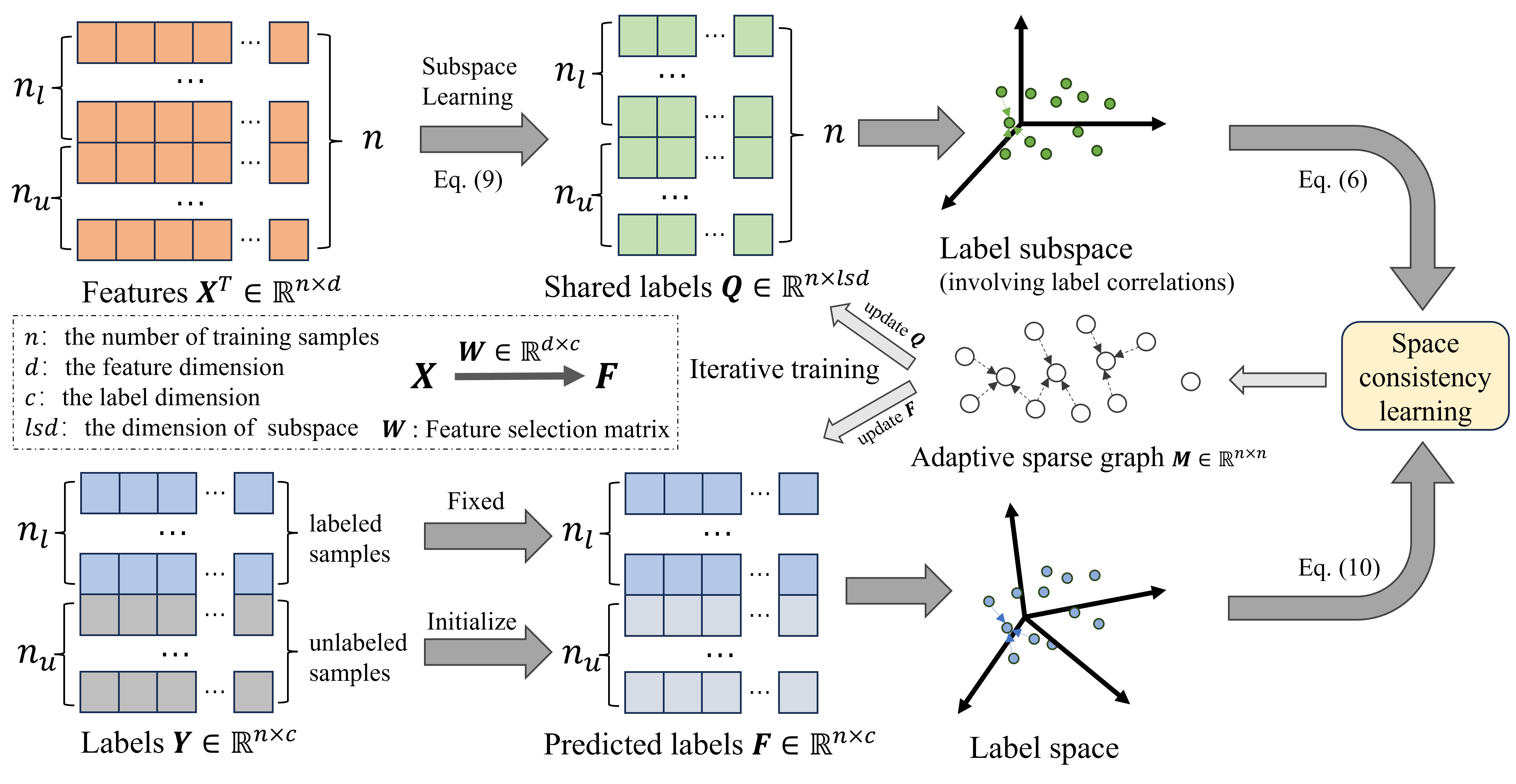}
	\vspace{-0mm}
	\caption{Overview of the consistent Sparse Graph learning for Multi-label semi-supervised Feature Selection (SGMFS): Firstly, the predicted labels ( soft-labels) $\textbf{F}$ and sparse graph matrix $\textbf{M}$ are initialized. Then the shared label subspace can be obtained by subspace learning. Subsequently, the sparse graph matrix $\textbf{M}$ is updated by space consistency learning between label subspace and original label space, and the updated $\textbf{M}$ is used to further update soft-labels $\textbf{F}$ and shared labels $\textbf{Q}$.
    After iterative training until convergence, the optimal feature weight matrix $\textbf{W}$ can be obtained for semi-supervised feature selection.
	}
	\label{fig1}
\end{figure}

Inspired by label propagation (LP) \cite{zhou2004learning}, 
many graph-based feature selection methods have been developed, which often use the pseudo-labels predicted by LP to make up for the lack of label information in the training stage and guide the process of semi-supervised feature selection. For example, Liu et al.~\cite{liu2013efficient} divided the LP and the importance evaluation of features into two independent steps and proposed a noise-insensitive Trace Ratio Criterion for semi-supervised Feature Selection (TRCFS).  Luo et al. \cite{luo2018semi} presented an insensitive sparse regression method (ISR) that directly employs the predicted labels to guide feature selection. Subsequently, a robust sparse feature selection method \cite{sheikhpour2020robust} is proposed based on non-convex $l_{2,p}$-norm to improve the robustness against outliers.  To learn the soft prediction labels, Chen et al. \cite{chen2018semi} imposed the sum-to-one constraint on them and ranked features by regression coefficients.
However, these methods are designed under the assumption that each sample should belong to one of the classes with a probability, limiting them to single-label tasks. In real-world applications, samples are usually associated with multiple semantic labels, thus some researchers began to explore the multi-label feature selection methods \cite{you2021online,shi2023unsupervised}. In this work, we focus on multi-label feature selection in semi-supervised scenarios.
\begin{figure}[t]
	\centering
	\vspace{-0mm}
	\includegraphics[width=0.75\columnwidth]{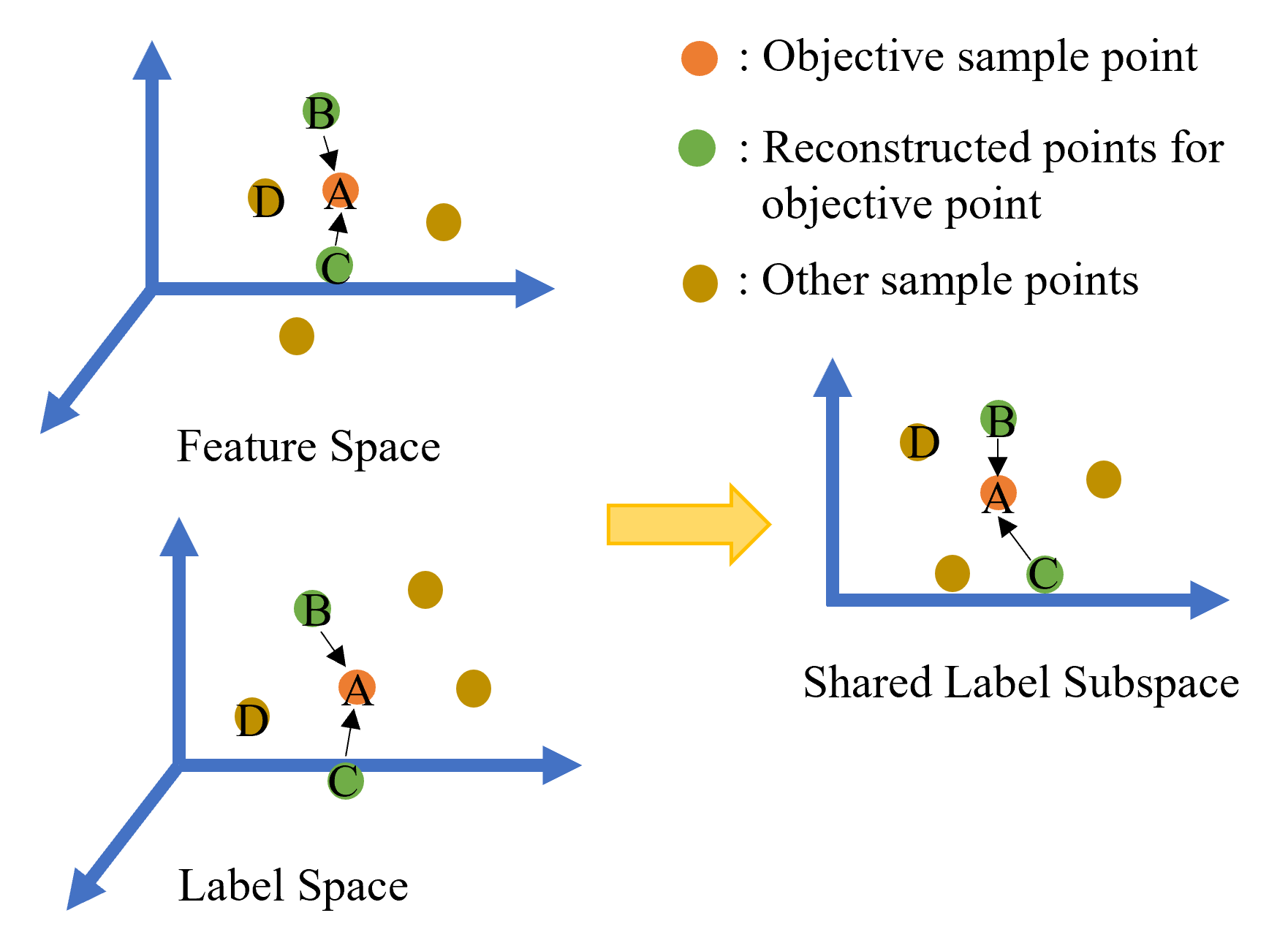}
	\vspace{-0mm}
	\caption{The process of sparse reconstruction of samples space consistency preservation in SGMFS: (i) The shared label subspace is generated by original feature and label spaces. (ii) The most suitable samples $B$ and $C$ are adaptively selected by sparse learning to reconstruct sample $A$. Note that the reconstructed sample points for the same objective sample point in label space and subspace are consistent.
    }
	\label{fig2}
\end{figure}

To deal with multi-label data, one of the most intuitive ways is to directly extend existing single-label feature selection models. For instance, Alalga et al.\cite{alalga2016soft} proposed a soft-constrained Laplacian score to rank features for multi-label data. To properly explore the label correlations, the sparse regression model integrated with feature-label correlation is designed in \cite{lin2023multi}. Thereafter, Yang et al.\cite{yang2023multi} and Zhong et al.~\cite{zhong2021multi} combined global and local label correlations to learn the label-specific features. To learn the structural correlations of labels, the multi-label model via adaptive spectral graph is presented to fit the relation between feature space and label distribution in~\cite{fan2024learning}. In the semi-supervised scenario, Chang et al.~\cite{chang2014semi} and Wang et al.~\cite{wang2017semi} combined multi-label feature selection and semi-supervised learning into a unified framework to learn the low-rank representation of feature weight matrix. Shi et al.~\cite{shi2021binary} introduced label learning with binary hashing to the semi-supervised feature selection. Recently,~\cite{yi2024sfs} proposed a semi-supervised feature selection based on an adaptive graph with global and local constraints, which can maintain the data structures within the selected feature subset. 
However,  existing semi-supervised multi-label feature selection methods still suffer from the following two main challenges: \textbf{(\romannumeral1)} It is difficult to adequately consider label correlations\footnote{Label correlations provide critical information for selecting discriminative features in the multi-label scenario~\cite{zhang2023multi}.}, since the label information is often incomplete in semi-supervised scenarios. \textbf{(\romannumeral2)} 
In LP-based methods, the graph structure derived from the original feature space is untrainable and suboptimal, as the sum-to-one constraint \cite{zhang2013pairwise} in the LP process is unsuitable for multi-label problems. Additionally, these semi-supervised methods overlook the consistency\footnote{The process of space consistency learning enables the dissemination of appropriate labels to unlabeled samples, thereby facilitating feature selection~\cite{xu2018semi}.} of manifold structures between the original label space and the shared label subspace, which degrades the performance of feature selection.

To address these problems, a novel consistent Sparse Graph learning model for Multi-label semi-supervised Feature Selection (SGMFS) is proposed, the overview of which is summarized in Fig.~\ref{fig1}. Specifically speaking, for Challenge \textbf{(\romannumeral1)}, 
we first map the feature information into label space and then the obtained label information is mapped into a low-dimensional reduced space, which is termed the shared label subspace. Since each dimension in this subspace is independent and serves as an integration of original partial related labels, the correlations among the original multiple labels can be effectively captured within the shared label subspace.
For Challenge \textbf{(\romannumeral2)}, instead of constructing a fixed similarity graph for LP-based semi-supervised learning, we proposed to jointly learn an adaptive weight matrix in the original multi-label space and the shared label subspace with the sparse regularization based on $l_1$-norm. This process is performed by sparse reconstruction of samples in both the label space and the shared subspace simultaneously, which can preserve the consistency among these spaces via minimizing the reconstruction error (as shown in Fig.~\ref{fig2}), and can generate reliable soft labels for semi-supervised feature selection. To learn a more accurate sparse graph, the nonnegative and zero-diagonal constraints are used in the objective function of SGMFS. In this way, the proper neighbors for sample reconstruction can be automatically determined by sparse learning, which can effectively improve the robustness of SGMFS. Finally, more reliable features for multi-label learning can be selected by fully considering the intrinsic data structure and label correlations.
The main contributions of our proposed framework in this paper are as follows:
\begin{enumerate}
       \item In this paper, we propose a new multi-label semi-supervised feature selection method named SGMFS. The core goal is to embed label correlation learning and space consistency learning into the process of semi-supervised feature selection, which can generate reliable soft labels for unlabeled samples and promote effective feature selection.
       \item We propose a novel shared subspace learning method to explore label correlations in a semi-supervised setting, enabling effective consideration of label correlations even when label information is incomplete in multi-label datasets, thereby facilitating robust feature selection.
       \item This paper proposes a novel adaptive sparse graph learning method that minimizes sample reconstruction error while maintaining consistency in both the label space and shared label subspace. This approach ensures that the learned soft labels and graph structure are jointly optimized for multi-label feature selection.
      \item  A fast iterative optimization strategy is developed to solve the non-smooth objective function with rigorous theoretical guarantees. Extensive experiments are conducted to validate the effectiveness and superiority of the proposed SGMFS on benchmarks from diverse domains.
\end{enumerate}

The rest of this paper is organized as follows. The related work is reviewed in Section~\ref{S2}, and Section~\ref{S3} presents the SGMFS framework in detail. The optimization procedures are provided in Section~\ref{S4}, then  Section~\ref{S5} validate the advantages of SGMFS for multi-label feature selection. Finally, the conclusion is drawn in Section~\ref{S6}.

\section{Related Work}
\label{S2}

This section briefly reviews the researches related to our work, i.e., multi-label feature selection and graph learning.

\subsection{Multi-label Feature Selection}

In real-world scenarios, multi-label feature selection serves as an effective dimensionality reduction technique to eliminate redundant and noisy features from data with multiple semantic labels. This process enhances performance while reducing computational costs~\cite{zhou2024multi,yuan2024multi}. Existing methods for multi-label feature selection fall into three categories: filter \cite{duda2006pattern}, embedded \cite{jian2016multi}, and wrapper \cite{zhang2009feature} approaches. Among these, embedded methods have gained increasing attention for their ability to integrate feature selection into the learning process, leading to improved performance \cite{tang2014feature, LIAO2024103727}. For instance, MIFS~\cite{jian2016multi} selects features by decomposing multi-label information into a low-dimensional space, while FSNM~\cite{nie2010efficient} employs joint $l_{2,1}$-norm minimization on both the loss function and regularization for feature selection. Huang \emph{et al.}\cite{huang2017joint} proposed a method to learn shared and label-specific features while building a multi-label classifier. Similarly, Ma \emph{et al.}\cite{ma2023discriminative} introduced a discriminative feature selection approach that captures high-order data structures using adaptive graphs.

In practice, label information is often incomplete. While unsupervised feature selection methods operate without label guidance, ground-truth labels remain crucial for model effectiveness. This has led to the development of semi-supervised feature selection methods that leverage both labeled and unlabeled data \cite{sheikhpour2017survey, LI2024103633}. Examples include insensitive sparse regression \cite{luo2018semi} and re-scaled linear square regression \cite{chen2018semi}, which evaluate feature importance by learning feature projections. However, single-label approaches are not directly applicable to multi-label problems, especially when high-dimensional data is associated with multiple labels, as labeling all samples manually can be prohibitively expensive.

To address semi-supervised multi-label feature selection, several methods have been proposed, including CSFS~\cite{chang2014convex}, SFSS~\cite{ma2012discriminating}, and SCFS~\cite{xu2018semi}. Additionally, Chang et al.\cite{chang2014semi} and Wang et al.\cite{wang2017semi} integrated multi-label feature selection with semi-supervised learning to derive low-rank representations of feature weight matrices. Shi et al.\cite{shi2021binary} proposed SFS-BLL, which applies a binary hash constraint on the label propagation process to handle multi-label data. More recently, Yi et al.\cite{yi2024sfs} introduced a semi-supervised multi-label method that learns a similarity graph with global and local constraints to capture data relationships in the selected feature subspace. Similarly, Tan et al.~\cite{tan2024partial} explored compact embedding for partial multi-label learning under semi-supervised conditions, improving efficiency via shared embedding spaces. Despite these advancements, challenges arising from insufficient label information remain. In contrast, this paper focuses on simultaneously minimizing the sample reconstruction error through sparse embedding in both label space and shared label subspace, ensuring that the learned labels and graph are jointly optimized for semi-supervised multi-label feature selection.

\subsection{Graph Learning}
Graph learning is a valuable technique in data mining. Recently, many researchers have focused on learning the graph weights between data samples, since the neighbor structure of unlabeled samples can be explored by graph with better performance.
There are various methods for constructing the graph weight matrix $\textbf{M}$, with the kernel-based $k$ nearest neighbor ($k$NN) \cite{belkin2003laplacian} being a common approach. Let $\textbf{X}=[\textbf{x}_{1},\textbf{x}_{2},...,\textbf{x}_{n}]\in \mathbb{R}^{d \times n}$ represent the feature matrix. Sample $\textbf{x}_{i}$ is connected to sample $\textbf{x}_{j}$ if $\textbf{x}_{i}$ is among the $k$ nearest neighbors of $\textbf{x}_{j}$ in the $k$NN graph.
And the $k$NN graph $\textbf{M}$ is defined as:
\begin{equation}
	m_{i j}=\left\{\begin{array}{l}
		\exp \left(-\frac{\left\|\textbf{x}_{i}-\textbf{x}_{j}\right\|^{2}}{\sigma^{2}}\right), \text { if } \textbf{x}_{i} \text { is linked to } \textbf{x}_{j}  \\
		0, \text { otherwise }
	\end{array}\right.
	\label{eq1}
\end{equation}
Note that $\textbf{x}_{j}$ may not be among the $k$ nearest neighbors of $\textbf{x}_{i}$ when $\textbf{x}_{i}$ is that of $\textbf{x}_{j}$. In existing methods, for the sake of obtaining a symmetrical graph matrix $\tilde{\textbf{M}}$, $\textbf{M}$ can be rewritten as $\tilde{\textbf{M}}=(\textbf{M}+\textbf{M}^{T})/2$. Although the $k$NN graphs are concise and explicit, 
selecting the appropriate kernel parameter $\sigma$ for various datasets is challenging.
Therefore, Nie et al. \cite{nie2019structured} proposed an adaptive $k$NN graph, formulated as:
\begin{equation}
	\begin{aligned}
		\min _{\textbf{M}}\sum_{i,j=1}^{n}\left(\left\|\textbf{x}_{i}-\textbf{x}_{j}\right\|_{2}^{2} m_{i j}+\alpha m_{i j}^{2}\right) \\
		\text { s.t. } \quad \forall i, \sum_{j=1}^{n}m_{i j}=1,0 \leq m_{i j} \leq 1.
	\end{aligned}
	\label{eq2}
\end{equation}
Compared with the kernel-based graph construction method, Eq.~(\ref{eq2}) learns a graph based on the distances between samples in the original feature space. To determine the parameter $\alpha$, we need to manually tune the numbers of the nearest neighbors for each sample. Moreover, solving  Eq.~(\ref{eq2}) involves the optimization of multiple subproblems, requiring higher computational costs.

On the other hand, weight graphs learned by sparsity representation are parameter-free, which can adaptively optimize the nearest neighbors to reconstruct each sample based on the $l_{1}$-norm sparse embedding. Inspired by the dictionary learning \cite{dornaika2019sparse}, sparse graph learning is designed as follows:
\begin{equation}
	\min_{\textbf{M}} \left\| \textbf{M}\right\|_{1}+\|{\textbf{E}}\|_{1}, \text { s.t. } \textbf{X}=\textbf{X}\textbf{M}+\textbf{E}
	\label{eq3}
\end{equation}
where $\textbf{E}$ is the reconstruction error matrix and the $l_1$-norm can achieve sparsity. $\textbf{M}$ in Eq.~(\ref{eq3}) can be considered as a graph matrix since each element in $\textbf{M}$ describes the distance between two samples in the feature space. 

Compared with Eq.~(\ref{eq2}), this way of graph learning in Eq.~(\ref{eq3}) is parameterless and effective. Therefore, inspired by this, we integrate semi-supervised feature selection and sparse graph learning into one framework, which is applicable to large multi-label datasets with effectiveness and robustness.

\section{The proposed algorithm}
\label{S3}
In this section, we first provide Preliminaries and Model Formulation in Section~\ref{S3_1}, followed by our solutions to Challenge \textbf{(\romannumeral2)} in Section~\ref{S3_2} and Challenge \textbf{(\romannumeral1)} in Section~\ref{S3_3}, respectively.

\subsection{Preliminaries and Model Formulation}
\label{S3_1}
As mentioned above, $\textbf{X}=[\textbf{x}_{1},\textbf{x}_{2},...,\textbf{x}_{n}]\in \mathbb{R}^{d \times n}$ denotes the feature matrix of training data, where $n$ and $d$ are the number of samples and features respectively. $\textbf{Y}_{l}=[\textbf{y}_{1};\textbf{y}_{2};...;\textbf{y}_{n_{l}}]\in \mathbb{R}^{n_{l} \times c}$ is the labeled matrix and $c$ is the number of class labels. $\textbf{F}=\left[\begin{array}{l}
	\textbf{F}_{l} \\
	\textbf{F}_{u}
\end{array}\right] \in \mathbb{R}^{n \times c}$ denotes the predicted label matrix. where $\textbf{F}_{l}\in \mathbb{R}^{n_{l} \times c}$ and $\textbf{F}_{u}\in \mathbb{R}^{n_{u} \times c}$, $n=n_{u}+n_{l}$. Inspired by a semi-supervised learning framework CSFS \cite{chang2014convex}, we have the constraint $\textbf{F}_{l}=\textbf{Y}_{l}$ for all the labeled samples in our proposed model, and the soft multi-label matrix $\textbf{F}_{u}$ is initialized to null matrix $\mathbf{0}$. $\textbf{I}$ is the unit matrix with size of $lsd\times lsd$, where $lsd$ is the dimension of label subspace. And $\textbf{Q}=[\textbf{q}_{1};\textbf{q}_{2};...;\textbf{q}_{n}]\in \mathbb{R}^{n \times lsd}$ denotes the shared labels in label subspace.

Let $\textbf{W}\in\mathbb{R}^{d\times c}$ denote the feature weight matrix of all the sample sets, including both labeled and unlabeled samples, which can be estimated by the following framework for semi-supervised feature selection:
\begin{equation}
	\begin{aligned}
		\min _{\textbf{W}, b, \textbf{F}_{l}=\textbf{Y}_{l}}\left\|\textbf{X}^{T} \textbf{W}+\mathbf{1}_{n} \textbf{b}^{T}-\textbf{F}\right\|_{F}^{2}+\gamma\|\textbf{W}\|_{2,1}\\
		\text { s.t. } \quad 0\leq \textbf{F}_{i,j} \leq 1\quad\quad\quad\quad\quad
	\end{aligned}
	\label{eq4}
\end{equation}
where $\|\cdot\|_{F}$ and $\|\cdot\|_{2,1}$ denote the Frobenius and $\ell_{2,1}$ norm of matrix respectively, $\textbf{b} \in \mathbb{R}^{c \times 1}$ is the bias term, and $\mathbf{1}_{n}$ is the column vector with all elements being 1. The first term of Eq.~(\ref{eq4}) is the loss term and the second is the sparse regularization term with parameter $\gamma>0$. And the constraint  $0\leq \textbf{F}_{i,j} \leq 1$ is imposed to avoid improper prediction values of labels. 

\subsection{Adaptive Sparse Graph Learning}
\label{S3_2}
Although the framework of Eq.~(\ref{eq4}) can effectively utilize the tiny proportion of labeled samples to learn a soft multi-label matrix $\textbf{F}$ for feature selection, the space consistency\footnote{Space Consistency: Nearby data points in feature space possess the high probabilities to have the similar labels} and structure consistency\footnote{Structure Consistency: Data points on the same cluster or manifold in feature space possess the high probabilities to have the similar labels.} of unlabeled sample sets are completely ignored, which may lead to limited performance of soft multi-label matrix learning, and impact the reliability of selected features. 

To address this issue, some LP-based methods try to construct a fixed similarity graph for space consistency maintaining and semi-supervised learning, the graph structure derived from the original feature space is suboptimal.
Therefore, we propose a sparse graph learning method inspired by graph-based learning \cite{belkin2003laplacian,nie2019structured}. This method can effectively explore the space consistency and structure consistency of sample points, which is a key aspect of semi-supervised learning problems \cite{zhou2004learning,jiang2022semi,tang2021cross}. 
However, inaccurately considering the inner graph structure of data points leads to unreliable soft label prediction in semi-supervised learning, especially for multi-label datasets. Therefore, instead of using a traditional manifold regularizer based on specifying prior knowledge, we minimize the reconstruction error of samples in both feature space and label space to learn the joint optimal adaptive weight matrix and soft multi-labels. Furthermore, to enhance feature selection performance, we integrate the feature weight matrix into the original feature space. This transformation enables the information contained in high-dimensional feature data to be represented in a lower-dimensional manifold. Then the weight matrix $\textbf{M}$ and soft labels $\textbf{F}$ can be learned by:
\begin{equation}
	\begin{aligned}
		\min _{\textbf{M},\textbf{F}}\left\|\textbf{M} \textbf{X}^{T} \textbf{W}-\textbf{X}^{T} \textbf{W}\right\|_{F}^{2}+\|\textbf{M} \textbf{F}-\textbf{F}\|_{F}^{2}+\|\textbf{M}\|_{F}^{2}\\
		\text { s.t. } \quad 0\leq \textbf{F}_{i,j} \leq 1\quad\quad\quad\quad\quad\quad\quad
	\end{aligned}
	\label{eq5}
\end{equation}
where the third item is the regularization item which is imposed to avoid the trivial solution $\textbf{M}=\textbf{I}_{n\times n}$. However, Frobenius norm-based regularization cannot learn the optimal neighbors for each sample.

Inspired by dictionary learning \cite{dornaika2019sparse}, we replace the Frobenius norm of $\textbf{M}$ with LASSO type minimization to obtain the sparsity of $\textbf{M}$. Additionally, the reconstruction weights $\textbf{M}_{i,j}$ are constrained to be non-negative and symmetric, meanwhile we keep diagonal elements $\textbf{M}_{i,i}=0\;(i=1, \ldots, n)$ in our method, which can adaptively learn the sparse reconstruction neighbors of each data point. Therefore,  $\textbf{M}$ and $\textbf{F}$ are estimated by an adaptive sparse-graph learning process:
\begin{equation}
	\begin{aligned}
		\min _{\textbf{M},\textbf{F}}\left\|\textbf{M} \textbf{X}^{T} \textbf{W}-\textbf{X}^{T} \textbf{W}\right\|_{F}^{2}+\|\textbf{M} \textbf{F}-\textbf{F}\|_{F}^{2}+\gamma\|\textbf{M}\|_{1}\\
		\text { s.t. } \quad 0\leq \textbf{F}_{i,j} \leq 1,\textbf{M}_{i,i}=0,\textbf{M}_{i,j}=\textbf{M}_{j,i}>0 \quad
	\end{aligned}
	\label{eq6}
\end{equation}
where $\|\textbf{M}\|_{1}=\sum_{i, j}\left|\textbf{M}_{i, j}\right|$ and $\gamma$ is a positive parameter that controls the sparsity degree of the reconstruction graph weight matrix $\textbf{M}$.
\subsection{Label Correlations learning}
\label{S3_3}
In multi-label feature selection problems, label correlations provide critical information for selecting discriminative features for multi-label learning. However, existing multi-label feature selection methods cannot effectively consider label correlations due to the lack of label information in the semi-supervised learning scenario.
According to shared feature subspace learning \cite{wang2017semi,ma2012web}, 
there is a low-dimensional label subspace shared by all the original labels, where the shared labels are uncorrelated and fully independent in multi-label learning problems\cite{chang2014semi}. 
Therefore, we propose a label correlations learning method based on subspace learning in the case of missing label information in semi-supervised scenarios.

Concretely speaking, define $\textbf{x}$ denote the original feature vector of a sample data, which pertains to $\textbf{c}$ concepts indicated by the label vector $\textbf{y}\in \mathbb{R}^{c}$, the prediction function $\left\{f_{t}\right\}_{t=1}^{c}$ can be formulated as follows:
\begin{equation}
	f_{t}(\textbf{x})=\textbf{v}_{t}^{T} \textbf{x}+\textbf{p}_ {t}^{T}\textbf{q}
	\label{eq7}
\end{equation}
where $\textbf{p}_{t}$ and $\textbf{v}_{t}$ are weight vectors, and $\textbf{q}$ is the shared label of data $\textbf{x}$ in label subspace.

Assume there are $n$ training samples $\left\{\textbf{x}_{i},\textbf{y}_{i}\right\}_{i=1}^{n}$, in accordance with embedding shared feature subspace uncovering in objective function and defining the mapping process $\textbf{X}^{T}\textbf{W}= \textbf{X}^{T}\textbf{V}+\textbf{Q}\textbf{P}$, where $\textbf{V}=\left[\textbf{v}_{1}, \textbf{v}_{2}, \cdots, \textbf{v}_{c}\right] \in \mathbb{R}^{d \times c}$, $\textbf{P}=\left[\textbf{p}_{1}, \textbf{p}_{2}, \cdots, \textbf{p}_{c}\right] \in \mathbb{R}^{lsd \times c}$, $\textbf{Q} \in \mathbb{R}^{n \times lsd}$  and $lsd$ is the dimension of label subspace, we can consider the label correlations by learning the label subspace with the following framework:
\begin{equation}
	\begin{aligned}
		\min _{f_{t},\textbf{p}_{t},\textbf{v}_{t}} \frac{1}{n} \sum_{i=1}^{n} \operatorname{loss}\left(\textbf{V}^{T} \textbf{x}_i+\textbf{P}^{T}\textbf{q}_i, \textbf{y}_{i}\right)\!+\!\mu \Omega\left(\left\{\textbf{V},\textbf{P}\right\}\right)
		\\\text{s.t.}\quad \textbf{Q}^{T} \textbf{Q}=\textbf{I}\quad\quad\quad\quad\quad\quad\quad\quad\quad
	\end{aligned}
	\label{eq8}
\end{equation}
where $\operatorname{loss}(\cdot)$  and $\mu \Omega(\cdot)$ are the loss function and regularization with $\mu$ as its parameter respectively, and the constraint $\textbf{Q}^{T}\textbf{Q}=\textbf{I}$ is used to make Eq.~(\ref{eq8}) tractable and reduce the impact of label correlations.

By minimizing the distance of label information predicted by original features and the shared labels of samples, we apply the least square loss in the second term of Eq.~(\ref{eq8}), which can arrive at:
\begin{equation}
	\begin{aligned}
		\min _{\textbf{Q},\textbf{P}} \|\textbf{X}^{T}\textbf{W}-\textbf{Q} \textbf{P}\|_{F}^{2},\quad
		\text{s.t.}\quad \textbf{Q}^{T}\textbf{Q}=\textbf{I}\quad\quad\quad\quad
	\end{aligned}
	\label{eq9}
\end{equation}
Therefore, label correlations can be considered to select more discriminating features for multi-label datasets by the process of label subspace learning, which can implicitly encode the sample labels with correlations into the shared labels which are independent of each other.

After label subspace learning, space and structure consistencies learning process, Eq.~(\ref{eq6}) can be improved to:
\begin{equation}
	\begin{aligned}
		\min _{\textbf{M},\textbf{F}}\|\textbf{M} \textbf{F}-\textbf{F}\|_{F}^{2}
		+\|\textbf{M} \textbf{Q}-\textbf{Q}\|_{F}^{2}+\gamma\|\textbf{M}\|_{1}\\
		\text { s.t. } \quad 0\leq \textbf{F}_{i,j} \leq 1,\textbf{M}_{i,i}=0,\textbf{M}_{i,j}=\textbf{M}_{j,i}>0
	\end{aligned}
	\label{eq10}
\end{equation}
\subsection{Objective Function of SGMFS}
Eventually, the  optimization problem of semi-supervised Multi-label Feature Selection with Sparse Graph learning (SGMFS) can be formulated as:
\begin{equation}
	\begin{aligned}
		\min _{\textbf{M}, \textbf{b}, \textbf{F},\textbf{W},\textbf{P},\textbf{Q}}\left\|\textbf{X}^{T} \textbf{W}\!+\mathbf{1}_{n} \textbf{b}^{T}\!-\!\textbf{F}\right\|_{F}^{2}+\alpha\|\textbf{X}^{T}\textbf{W}-\textbf{Q} \textbf{P}\|_{F}^{2}\\ +\beta\left(\|\textbf{M} \textbf{F}-\textbf{F}\|_{F}^{2}+\|\textbf{M} \textbf{Q}-\textbf{Q}\|_{F}^{2}\right)+\gamma\Omega\left(\textbf{W},\textbf{M}\right)\\
		\text { s.t. }  0\leq \textbf{F}_{i,j} \leq 1,\textbf{F}_{l}=\textbf{Y}_{l},\textbf{Q}^{T} \textbf{Q}=\textbf{I},\quad\quad\quad\\
		\textbf{M}_{i,i}=0,\textbf{M}_{j,i}=\textbf{M}_{i,j}>0\quad\quad\quad\quad
	\end{aligned}
	\label{eq11}
\end{equation}
where $\Omega\left(\textbf{W},\textbf{M}\right)=\|\textbf{W}\|_{2,1}+\|\textbf{M}\|_{1}$, $\alpha$ is the positive tolerance parameter and $\beta$ and $\gamma$ are space consistency parameters and sparse regularization parameters, respectively. The objective function of SGMFS Eq.~(\ref{eq11}) is composed of semi-supervised framework Eq.~(\ref{eq4}), the exploration of label correlations Eq.~(\ref{eq9}) and adaptive sparse-graph learning process Eq.~(\ref{eq10}). Therefore, SGMFS aims to enhance feature selection performance by maintaining space consistency and learning label correlations in semi-supervised scenarios.

\section{Optimization}
\label{S4}
On account of the non-smoothness of $l_{2,1}$-norm and $l_{1}$-norm regularization terms, there are many constraints of the proposed objective function Eq.~(\ref{eq11}), it is difficult to obtain the optimal solution. Thus, we propose an efficient and fast convergent algorithm for SGMFS.
\subsection{Learning by Alternating Minimization}
In order to illustrate the joint convexity of the optimization problem of Eq.~(\ref{eq11}) with respect to $\textbf{M}, \textbf{b}, \textbf{F},\textbf{W},\textbf{P}$ and $\textbf{Q}$, we have the following proposition:
\begin{proposition}\label{prop1}
Denote $\textbf{W} \in \mathbb{R}^{d \times c}, \textbf{b} \in \mathbb{R}^{c \times 1},\textbf{F} \in \mathbb{R}^{n \times c},\textbf{M} \in \mathbb{R}^{n \times n},\textbf{P} \in \mathbb{R}^{lsd \times c}$ and $\textbf{Q} \in \mathbb{R}^{n \times lsd}$,
$g\left(\textbf{M}, \textbf{b}, \textbf{F},\textbf{W},\textbf{Q},\textbf{P}\right)=\left\|\textbf{X}^{T} \textbf{W}+\mathbf{1}_{n} \textbf{b}^{T}-\textbf{F}\right\|_{F}^{2}+\alpha\|\textbf{X}^{T}\textbf{W}-\textbf{Q} \textbf{P}\|_{F}^{2}+\gamma\|\textbf{W}\|_{2,1}
+\gamma\|\textbf{M}\|_{1}+\beta\|\textbf{M} \textbf{F}-\textbf{F}\|_{F}^{2}+\beta\|\textbf{M} \textbf{Q}-\textbf{Q}\|_{F}^{2}$.
The minimization of $g\left(\textbf{M}, \textbf{b}, \textbf{F},\textbf{W},\textbf{Q},\textbf{P}\right)$ is jointly convex with respect to $\textbf{M}, \textbf{b}, \textbf{F},\textbf{W},\textbf{P}$ and $\textbf{Q}$.
\end{proposition}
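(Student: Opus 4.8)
The plan is to exploit two standard stability properties of convexity: a finite sum of jointly convex functions is jointly convex, and the pre-composition of a convex function with a jointly affine map is jointly convex. Accordingly, I would split $g$ into its six additive summands and establish joint convexity (in the full tuple $(\textbf{M},\textbf{b},\textbf{F},\textbf{W},\textbf{Q},\textbf{P})$) of each summand, then add the results. Throughout I would vectorize via $\mathrm{vec}(\cdot)$ and the Kronecker identity $\mathrm{vec}(\textbf{A}\textbf{B})=(\textbf{B}^{T}\otimes\textbf{I})\,\mathrm{vec}(\textbf{A})=(\textbf{I}\otimes\textbf{A})\,\mathrm{vec}(\textbf{B})$, so that every Frobenius term becomes a genuine quadratic form in a single stacked variable and the convexity question reduces to positive semidefiniteness of a Hessian.

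Three of the six terms are immediate. The regularizers $\gamma\|\textbf{W}\|_{2,1}$ and $\gamma\|\textbf{M}\|_{1}$ are norms, hence convex in their own block and constant in the others, so each is jointly convex on the product space. The fidelity term $\|\textbf{X}^{T}\textbf{W}+\mathbf{1}_{n}\textbf{b}^{T}-\textbf{F}\|_{F}^{2}$ is the squared Frobenius norm composed with the map $(\textbf{W},\textbf{b},\textbf{F})\mapsto \textbf{X}^{T}\textbf{W}+\mathbf{1}_{n}\textbf{b}^{T}-\textbf{F}$, which is jointly (in fact homogeneously) linear in those three blocks; after vectorizing it reads $\|\textbf{A}z\|_{2}^{2}$ for a fixed $\textbf{A}$ and the stacked vector $z$, so its Hessian is the constant positive semidefinite matrix $2\textbf{A}^{T}\textbf{A}$, and joint convexity follows.

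The real work lies in the three remaining terms, each of which couples two blocks multiplicatively: $\alpha\|\textbf{X}^{T}\textbf{W}-\textbf{Q}\textbf{P}\|_{F}^{2}$ pairs $\textbf{Q}$ with $\textbf{P}$, while $\beta\|\textbf{M}\textbf{F}-\textbf{F}\|_{F}^{2}$ and $\beta\|\textbf{M}\textbf{Q}-\textbf{Q}\|_{F}^{2}$ pair $\textbf{M}$ with $\textbf{F}$ and with $\textbf{Q}$, respectively. For these I would write each as a quadratic form in the stacked variables and compute the full Hessian; the decisive step is then to certify that this Hessian is positive semidefinite on the entire variable space, and not merely on each block with the others frozen, since the latter only gives convexity separately in each argument and is strictly weaker than the stated joint convexity.

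This last certification is exactly where I expect the main obstacle. Because the products $\textbf{Q}\textbf{P}$, $\textbf{M}\textbf{F}$, and $\textbf{M}\textbf{Q}$ are bilinear in their respective pairs, each coupled term contributes off-diagonal Hessian blocks mixing the two paired variables, while its diagonal blocks degenerate when one factor is small (for instance, in $\|\textbf{M}\textbf{Q}-\textbf{Q}\|_{F}^{2}$ the $\textbf{Q}$-block scales with $\|\textbf{M}-\textbf{I}\|$ and the $\textbf{M}$-block with $\|\textbf{Q}\|$). The crux of the proof is to show these cross-terms cannot drive the assembled Hessian out of the positive semidefinite cone; this is precisely the obstruction familiar from matrix-factorization objectives, and it is here — rather than in the affine or norm summands — that the whole content of the proposition concentrates. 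I would therefore devote the bulk of the effort to this coupled-term Hessian analysis, possibly invoking the constraint $\textbf{Q}^{T}\textbf{Q}=\textbf{I}$ to tame the $\textbf{Q}$-dependence, while keeping in mind that the three easy terms contribute only fixed positive semidefinite pieces and so cannot by themselves compensate for any indefiniteness introduced by the bilinear coupling.
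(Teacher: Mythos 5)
Your decomposition and your treatment of the three uncoupled terms (the two norm regularizers and the affine-composed fidelity term) are correct, but the proposal stalls exactly at the step you yourself flagged as the crux, and that step cannot be completed: the three bilinearly coupled terms are genuinely \emph{not} jointly convex, so the proposition as stated is false. A scalar instance suffices to see this. Take $n=1$ and consider the summand $\|\textbf{M}\textbf{Q}-\textbf{Q}\|_F^2$, which becomes $\phi(m,q)=q^2(m-1)^2$: at $(m,q)=(3,1)$ and $(m,q)=(1,3)$ it takes the values $4$ and $0$, yet at the midpoint $(2,2)$ it equals $4>\tfrac{4+0}{2}$, violating convexity along a segment. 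Likewise for $\alpha\|\textbf{X}^{T}\textbf{W}-\textbf{Q}\textbf{P}\|_F^2$: with $\textbf{W}=0$ the scalar version $(qp)^2$ has Hessian $\begin{pmatrix}2p^{2} & 4qp\\ 4qp & 2q^{2}\end{pmatrix}$, whose determinant $-12\,q^{2}p^{2}$ is negative whenever $qp\neq 0$, so it is indefinite. Invoking the constraint $\textbf{Q}^{T}\textbf{Q}=\textbf{I}$ cannot repair this, both because the proposition is a claim about the function $g$ on the whole product space and because the orthogonality set is itself non-convex. The most that is true --- and all that the alternating scheme of Algorithm~\ref{Alg1} actually requires --- is blockwise convexity: $g$ is convex in each variable block with the others held fixed, which your ``easy terms plus quadratic-in-one-block'' observations already essentially establish.

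You should also know that the paper's own proof does not resolve the obstruction you identified; it sidesteps it. The paper merely expands $g$ into trace form and asserts that since every summand (including $\|\textbf{W}\|_{2,1}$ and $\|\textbf{M}\|_{1}$) is ``positive semi-definite,'' the sum is jointly convex. This conflates nonnegativity of function values with convexity and never engages with the bilinear products $\textbf{Q}\textbf{P}$, $\textbf{M}\textbf{F}$, $\textbf{M}\textbf{Q}$ at all. So your proposal, while incomplete as a proof, is diagnostically sounder than the published argument: carried to completion, your vectorized Hessian computation would reveal that the off-diagonal coupling blocks make the Hessian indefinite, i.e., that only separate per-block convexity holds. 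The provable (and sufficient) statement is that each subproblem solved in Algorithm~\ref{Alg1} is well posed: the $\textbf{b}$-, $\textbf{W}$-, $\textbf{F}$-, and $\textbf{P}$-subproblems are convex quadratics (with a convex $\ell_{2,1}$ term for $\textbf{W}$), while the $\textbf{Q}$- and $\textbf{M}$-updates admit the closed-form eigen-decomposition and fixed-point solutions whose monotonicity is what the convergence analysis in the appendix actually uses.
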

\begin{proof}
	According to the definition of Frobenius norm, $g\left(\textbf{M}, \textbf{b}, \textbf{F},\textbf{W},\textbf{Q},\textbf{P}\right)$ is equal to:\\
	$Tr\left(\left(\textbf{X}^{T} \textbf{W}+\mathbf{1}_{n} \textbf{b}^{T}-\textbf{F}\right)^{T}\left(\textbf{X}^{T} \textbf{W}+\mathbf{1}_{n} \textbf{b}^{T}-\textbf{F}\right)\right)+\gamma\|\textbf{W}\|_{2,1}+\\
	\beta Tr\left(\left(\textbf{M} \textbf{F}-\textbf{F}\right)^{T}\left(\textbf{M} \textbf{F}-\textbf{F}\right)\right)+\beta Tr\left(\left(\textbf{M} \textbf{Q}-\textbf{Q}\right)^{T}\left(\textbf{M} \textbf{Q}-\textbf{Q}\right)\right)\\
	+\alpha Tr\left(\left(\textbf{X}^{T}\textbf{W}-\textbf{Q} \textbf{P}\right)^{T}\left(\textbf{X}^{T}\textbf{W}-\textbf{Q} \textbf{P}\right)\right)+\gamma\|\textbf{M}\|_{1}$~, where each item including $\|\textbf{W}\|_{2,1}$ and $\|\textbf{M}\|_{1}$ is positive semi-definite, so the sum $g\left(\textbf{M}, \textbf{b}, \textbf{F},\textbf{W},\textbf{Q},\textbf{P}\right)$ is jointly convex.
\end{proof}

Therefore, the objective function of SGMFS is minimized by alternate iteration in this paper.

First, according to setting $\textbf{M},\textbf{b},\textbf{F},\textbf{W},\textbf{Q}$ fixed and the derivative of Eq.~(\ref{eq11}) w.r.t. $\textbf{P}$ equal to zero, we have:
\begin{equation}
	2\left(\textbf{Q}^{T}\textbf{X}^{T}\textbf{W}-\textbf{Q}^{T} \textbf{Q} \textbf{P}\right)=0 \quad \!\!\Rightarrow\!\! \quad \textbf{P}=\textbf{Q}^{T}\textbf{X}^{T} \textbf{W}
	\label{eq12}
\end{equation}

Then, we substitute $\textbf{P}$ in Eq.~(\ref{eq11}) with Eq.~(\ref{eq12}) and make $\textbf{M},\textbf{b},\textbf{F},\textbf{W},\textbf{P}$ fixed to optimize $\textbf{Q}$, the constrained minimization problem of Eq.~(\ref{eq11}) w.r.t. $\textbf{Q}$ is equal to:
\begin{equation}
	\begin{aligned}
		\min _{\textbf{Q}}\|\textbf{X}^{T}\textbf{W}-\textbf{Q} \textbf{Q}^{T}\textbf{X}^{T} \textbf{W}\|_{F}^{2}\!+\beta\|\textbf{M} \textbf{Q}-\textbf{Q}\|_{F}^{2}\\
		\text { s.t. } \textbf{Q}^{T} \textbf{Q}=\textbf{I}\quad\quad\quad\quad\quad\quad\
	\end{aligned}
	\label{eq13}
\end{equation}
Note that $\left(\textbf{I}-\textbf{Q} \textbf{Q}^{T}\right)\left(\textbf{I}-\textbf{Q} \textbf{Q}^{T}\right)=\textbf{I}-\textbf{Q} \textbf{Q}^{T}$, then Eq.~(\ref{eq13}) can be rewritten as a spectral clustering optimization problem. The derivation goes as follows:
\begin{equation}
	\begin{aligned}
		\text {Eq.}~(\ref{eq13}) \Leftrightarrow \min _{\textbf{Q}}\left\|\left(\textbf{I}-\textbf{Q} \textbf{Q}^{T}\right) \textbf{X}^{T} \textbf{W}\right\|_{F}+\beta\|(\textbf{M}-\textbf{I}) \textbf{Q}\|_{F} \\
		\Leftrightarrow \min _{\textbf{Q}} \operatorname{Tr}\left(\textbf{W}^{T} \textbf{X}\left(\textbf{I}-\textbf{Q} \textbf{Q}^{T}\right)\left(\textbf{I}-\textbf{Q} \textbf{Q}^{T}\right) \textbf{X}^{T} \textbf{W}\right)\\+\beta \operatorname{Tr}\left(\textbf{Q}^{T}(\textbf{M}-\textbf{I})^{T}(\textbf{M}-\textbf{I}) \textbf{Q}\right)\quad\quad\quad\quad\quad \\
		\Leftrightarrow \min _{\textbf{Q}}-\operatorname{Tr}\left(\textbf{Q}^{T} \textbf{X}^{T} \textbf{W} \textbf{W}^{T} \textbf{X} \textbf{Q}\right)\quad\quad\quad\quad\quad\quad\quad\quad\\+\beta \operatorname{Tr}\left(\textbf{Q}^{T}(\textbf{M}-\textbf{I})^{T}(\textbf{M}-\textbf{I}) \textbf{Q}\right)\quad\quad\quad\quad\quad \\
		\Leftrightarrow \max _{\textbf{Q}} \operatorname{Tr}\left(\textbf{Q}^{T} \textbf{C} \textbf{Q}\right)\quad\quad\quad\quad\quad\quad\quad\quad\quad\quad\quad\quad\quad
		\\
		\text { s.t. } \textbf{Q}^{T} \textbf{Q}=\textbf{I}\quad\quad\quad\quad\quad\quad\quad\quad\quad\quad\quad\quad\quad\quad
	\end{aligned}
	\label{eq14}
\end{equation}
where $\textbf{C}=\textbf{X}^{T} \textbf{W} \textbf{W}^{T} \textbf{X}-\beta(\textbf{M}-\textbf{I})^{T}(\textbf{M}-\textbf{I})$.

\begin{proposition}\label{prop2}
The optimal solution of $\textbf{Q}$ in Eq.~(\ref{eq14}) can be obtained by the eigen-decomposition of $\textbf{C}$, and the optimal $\textbf{Q}$ is the matrix that is composed of the eigenvectors corresponding to the largest $lsd$ eigenvalues of $\textbf{C}$.
\end{proposition}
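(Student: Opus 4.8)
The plan is to recognize Eq.~(\ref{eq14}) as the canonical trace-maximization problem $\max_{\textbf{Q}^{T}\textbf{Q}=\textbf{I}}\operatorname{Tr}(\textbf{Q}^{T}\textbf{C}\textbf{Q})$ with $\textbf{Q}\in\mathbb{R}^{n\times lsd}$, and to solve it through the eigen-structure of $\textbf{C}$. First I would record the one structural fact that drives everything: since $\textbf{C}=\textbf{X}^{T}\textbf{W}\textbf{W}^{T}\textbf{X}-\beta(\textbf{M}-\textbf{I})^{T}(\textbf{M}-\textbf{I})$ is a difference of two symmetric matrices, $\textbf{C}$ is real symmetric and thus admits an orthonormal eigen-decomposition $\textbf{C}=\textbf{U}\bm{\Lambda}\textbf{U}^{T}$ with $\textbf{U}=[\textbf{u}_{1},\dots,\textbf{u}_{n}]$ orthogonal and $\bm{\Lambda}=\operatorname{diag}(\lambda_{1},\dots,\lambda_{n})$ whose eigenvalues I order as $\lambda_{1}\geq\cdots\geq\lambda_{n}$. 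This symmetry is exactly what licenses the eigenvector solution, so it must be stated explicitly rather than assumed.

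Next I would reparametrize the objective by the energy that $\textbf{Q}$ places along each eigendirection. Setting $p_{i}=\textbf{u}_{i}^{T}\textbf{Q}\textbf{Q}^{T}\textbf{u}_{i}=\|\textbf{Q}^{T}\textbf{u}_{i}\|_{2}^{2}$, expanding $\textbf{C}=\sum_{i}\lambda_{i}\textbf{u}_{i}\textbf{u}_{i}^{T}$, and using the cyclic property of the trace gives $\operatorname{Tr}(\textbf{Q}^{T}\textbf{C}\textbf{Q})=\sum_{i=1}^{n}\lambda_{i}p_{i}$. Two observations then pin down the feasible region for the $p_{i}$: because $\textbf{Q}^{T}\textbf{Q}=\textbf{I}$, the matrix $\textbf{Q}\textbf{Q}^{T}$ is the orthogonal projector onto the $lsd$-dimensional column space of $\textbf{Q}$, so each quadratic form obeys $0\leq p_{i}\leq 1$; and summing gives $\sum_{i}p_{i}=\operatorname{Tr}(\textbf{U}^{T}\textbf{Q}\textbf{Q}^{T}\textbf{U})=\operatorname{Tr}(\textbf{Q}\textbf{Q}^{T})=\operatorname{Tr}(\textbf{Q}^{T}\textbf{Q})=lsd$.

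The problem therefore relaxes to the linear program $\max\sum_{i}\lambda_{i}p_{i}$ over the box $0\leq p_{i}\leq 1$ subject to $\sum_{i}p_{i}=lsd$. Since the objective is linear and the $\lambda_{i}$ are sorted, the optimum sets $p_{i}=1$ on the $lsd$ largest eigenvalues and $p_{i}=0$ elsewhere, yielding the upper bound $\operatorname{Tr}(\textbf{Q}^{T}\textbf{C}\textbf{Q})\leq\sum_{i=1}^{lsd}\lambda_{i}$. To close the argument I would verify attainability: the choice $\textbf{Q}=[\textbf{u}_{1},\dots,\textbf{u}_{lsd}]$ satisfies $\textbf{Q}^{T}\textbf{Q}=\textbf{I}$ and produces exactly $p_{i}=1$ for $i\leq lsd$, hence meets the bound. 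This proves that a maximizer is the matrix of eigenvectors of $\textbf{C}$ associated with its $lsd$ largest eigenvalues, as claimed.

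I expect the main obstacle to be the upper-bound direction rather than attainability, since it requires showing that \emph{no} feasible $\textbf{Q}$ can exceed the top-$lsd$ eigenvector choice; this is precisely where the constraints $0\leq p_{i}\leq1$ and $\sum_{i}p_{i}=lsd$ must be invoked together, and it is the step most easily glossed over. As an alternative route I could appeal directly to the Ky Fan maximum principle, or form the Lagrangian and derive the stationarity condition $\textbf{C}\textbf{Q}=\textbf{Q}\bm{\Lambda}$; however, the Lagrangian only certifies that columns of $\textbf{Q}$ are eigenvectors, so I would still need the $p_{i}$ comparison (or Ky Fan) to select the correct eigenvalues and guarantee global optimality.
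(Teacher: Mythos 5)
Your proof is correct, but it takes a genuinely different route from the paper. The paper's own argument is a reduction plus citation: it observes that since $\textbf{Q}^{T}\textbf{Q}=\textbf{I}$, the problem in Eq.~(\ref{eq14}) can be cast as the trace-ratio problem $\max_{\textbf{Q}^{T}\textbf{Q}=\textbf{I}}\operatorname{Tr}\left(\left(\textbf{Q}^{T}\textbf{G}\textbf{Q}\right)^{-1}\textbf{Q}^{T}\textbf{R}\textbf{Q}\right)$ with $\textbf{G}=\textbf{I}$, $\textbf{R}=\textbf{C}$, and then appeals to prior work (the references \cite{chang2014semi} and \cite{ma2012web}) for the fact that this is solved by the eigen-decomposition of $\textbf{G}^{-1}\textbf{R}=\textbf{C}$; no first-principles optimality argument is given. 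You instead prove the statement from scratch: you note the symmetry of $\textbf{C}$ (a point the paper never states but which is essential), expand the objective along the orthonormal eigenbasis, reduce to a linear program over $0\leq p_{i}\leq 1$, $\sum_{i}p_{i}=lsd$ via the projector $\textbf{Q}\textbf{Q}^{T}$, and close with attainability — in effect a self-contained proof of the Ky Fan maximum principle specialized to this problem. What your approach buys is rigor and completeness: it certifies global optimality (the upper-bound direction, which you correctly identify as the nontrivial step) without deferring to external results, and it makes explicit why only symmetry of $\textbf{C}$ is needed even though $\textbf{C}$ is indefinite. What the paper's approach buys is brevity and a connection to the generalized trace-ratio machinery already used elsewhere in that literature, at the cost of leaving the actual optimality argument implicit in the cited works.
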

\begin{proof}
	Since $\textbf{Q}$ satisfies the property of $\textbf{Q}^{T} \textbf{Q}=\textbf{I}$, then $\left(\textbf{Q}^{T}\textbf{IQ}\right)=\textbf{I}$, the optimization problem Eq.~(\ref{eq14}) for $\textbf{Q}$ is equivalent to:\\
	\begin{equation}
		\begin{aligned}
			\max _{\textbf{Q}} \operatorname{Tr}\left(\left(\textbf{Q}^{T}\textbf{GQ}\right)^{-1}\textbf{Q}^{T}\textbf{RQ}\right)\quad\quad	\text { s.t. } \textbf{Q}^{T} \textbf{Q}=\textbf{I}\quad
		\end{aligned}
		\label{eq15}
	\end{equation}
	where $\textbf{G} = \textbf{I}, \textbf{R} = \textbf{C}$, According to \cite{chang2014semi} and \cite{ma2012web}, the optimal $\textbf{Q}$ in  Eq.~(\ref{eq15}) can be obtained by conducting the eigen-decomposition of $\textbf{G}^{-1}\textbf{R}$. Since $\textbf{G}^{-1}\textbf{R}=\textbf{C}$, the optimal $\textbf{Q}$ is equal to the matrix composed of the eigenvectors corresponding to the largest $lsd$ eigenvalues of $\textbf{C}$.
\end{proof}

By setting the derivative of Eq.~(\ref{eq11}) w.r.t. $\textbf{b}$ to 0, we have:
\begin{equation}
	\textbf{b}=\frac{1}{n}\left(\textbf{F}^{T} \mathbf{1}_{n}-\textbf{W}^{T} \textbf{X} \mathbf{1}_{n}\right)
	\label{eq16}
\end{equation}

To overcome the non-smoothness of $l_{2,1}$ norm, we rewrite $\|\textbf{W}\|_{2,1}$ in Eq.~(\ref{eq11}) as $ Tr\left(\textbf{W}^{T}\textbf{DW}\right)$, where $\textbf{D}$ is a dialogue matrix with $n \times n$ size, and its dialogue elements are $\frac{1}{2\left\|\textbf{W}_{i,:}\right\|_{2}},i=1,\dots,n$. Moreover, according to substituting Eq.~(\ref{eq16}) into Eq.~(\ref{eq11}) and fixing  $\textbf{M},\textbf{F},\textbf{Q}$, the optimal problem for $\textbf{W}$ can be written as:
\begin{equation}
	\begin{aligned}
		\min _{\textbf{W}}Tr\left(\left(\textbf{HX}^{T} \textbf{W}\!\!-\!\textbf{HF}\right)^{T}\!\!\left(\textbf{HX}^{T} \textbf{W}\!\!-\!\textbf{HF}\right)\!\right)\!+\!\gamma Tr\left(\textbf{W}^{T}\textbf{DW}\right)\!\\+\alpha Tr\left(\left(\textbf{X}^{T}\textbf{W}\!-\!\textbf{QQ}^{T}\textbf{X}^{T} \textbf{W}\right)^{T}\!\left(\textbf{X}^{T}\textbf{W}\!-\!\textbf{QQ}^{T}\textbf{X}^{T} \textbf{W}\right)\right)
	\end{aligned}
	\label{eq17}
\end{equation}
where $\textbf{H}=\textbf{I}-\frac{1}{n}\mathbf{1}_{n}^{T}\mathbf{1}_{n}$.

Note that $\textbf{M}=\textbf{M}^{T}$ and $\textbf{H}=\textbf{H}^{T}=\textbf{H}^{2}$, then we set the derivative of Eq.~(\ref{eq17}) w.r.t. $\textbf{W}$ equal to zero, to obtain the optimal $\textbf{W}$:
\begin{equation}
	\begin{array}{l}
		2\left(\textbf{S}-\alpha \textbf{XQQ}^{T} \textbf{X}^{T}\right) \textbf{W}=2 \textbf{XHF} \\
		\Rightarrow \textbf{W}=\left(\textbf{S}-\alpha \textbf{XQQ}^{T} \textbf{X}^{T}\right)^{-1} \textbf{XHF}\quad\quad
	\end{array}
	\label{eq18}
\end{equation}
where $\textbf{S}=\textbf{XHX}^{T}+\gamma \textbf{D}+\alpha \textbf{X} \textbf{X}^{T}$.

According to setting the derivative of Eq.~(\ref{eq11}) w.r.t. $\textbf{F}$ equal to zero, we predict the soft multi-label matrix $\textbf{F}$ with $\textbf{M}$ and the updated $\textbf{W}$ and $\textbf{b}$ in Eq.~(\ref{eq18}) and Eq.~(\ref{eq16}) respectively. Since there are constraints of $ \quad 0\leq \textbf{F}_{i,j} \leq 1$ and $\textbf{F}_{l}=\textbf{Y}_{l}$ in objective function Eq.~(\ref{eq11}), $\textbf{F}$ is limited to the range 0-1 inspired by CSFS \cite{chang2014convex}:
\begin{equation}
	\begin{array}{l}
		\textbf{F}=\textbf{K}^{-1}\left(\textbf{X}^{T} \textbf{W}+\mathbf{1}_{n} \textbf{b}^{T}\right) \\
		\textbf{F}_{i, j}=\left\{\begin{array}{ll}
			\textbf{Y}_{i, j}, & \text { if } i \leq n_{l} \\
			0, & \text { if } \textbf{F}_{i, j} \leq 0 \text { and } i>n_{l} \\
			\textbf{F}_{i, j}, & \text { if } 0 \leq \textbf{F}_{i, j} \leq 1 \text { and } i>n_{l} \\
			1, & \text { if } 1 \leq \textbf{F}_{i, j} \text { and } i>n_{l}
		\end{array}\right.
	\end{array}
	\label{eq19}
\end{equation}
where $\textbf{K} = \textbf{I}+\beta\left(\textbf{M}-\textbf{I}\right)^{T}\left(\textbf{M}-\textbf{I}\right)$.
\begin{algorithm}[H]
	\caption{
Optimization procedure of the proposed SGMFS}
	\label{Alg1}
	\begin{algorithmic}[1] 
		\REQUIRE
		The dimension number of label subspace $lsd$;Parameters $\alpha$, $\beta$, $\gamma$; \\
		\quad \quad Training data feature matrix $\textbf{X} \in \mathbb{R}^{d \times n}$;\\
		\quad \quad Label matrix of partly labeled data $\textbf{Y}\in \mathbb{R}^{n_{l} \times c}$.
		\ENSURE The selected features that are top-ranked.
		\STATE Randomly initialize $\textbf{W}_{1}\in\mathbb{R}^{d \times c}$ and initialize $\textbf{M}_{1}\in\mathbb{R}^{n \times n}$ by the symmetric version of Eq.~(\ref{eq1}) ($\textbf{M}_{1}$ satisfies $[\textbf{M}_{1}]_{i,i}=0,[\textbf{M}_{1}]_{i,j}=[\textbf{M}_{1}]_{j,i}>0$). \\Set $\textbf{F}_{1}=\left[\begin{array}{l}
			\textbf{Y} \\
			\mathbf{0}
		\end{array}\right] \in \mathbb{R}^{n \times c}$ and $t=1$;
		\STATE \textbf{Repeat:}
		\label{code:fram:Repeat}
		\STATE \quad Compute the dialogue matrix $\textbf{D}_{t+1}$ by $\textbf{d}_{t}^{i}=\frac{1}{2\left\|[\textbf{W}_{t}]_{i,:}\right\|_{2}}, i=1,\ldots,d$;
		\label{code:fram:update_D}
		\STATE \quad Compute $\textbf{C}_{t+1}=\textbf{X}^{T} \textbf{W}_{t} \textbf{W}_{t}^{T} \textbf{X}-\beta(\textbf{M}_{t}-\textbf{I})^{T}(\textbf{M}_{t}-\textbf{I})$;
		\label{code:fram:derivation_W}
		\STATE \quad Obtain $\textbf{Q}_{t+1}$ according to the eigen-decomposition of $\textbf{C}_{t+1}$ ($\textbf{Q}_{t+1}$ is constituted by the eigenvectors of the largest $lsd$ eigenvalues);
		\label{ code:fram:derivation_APQ}
		\STATE \quad Compute $\textbf{S}_{t}=\textbf{XHX}^{T}+\gamma \textbf{D}_{t}+\alpha \textbf{XX}^{T}$ ;
		\label{code:fram:stepsizes}
		\STATE \quad Compute $\textbf{W}_{t+1}=\left(\textbf{S}_{t}-\alpha \textbf{XQ}_{t} \textbf{Q}_{t}^{T} \textbf{X}^{T}\right)^{-1} \textbf{XHF}_{t}$;
		\label{code:fram:update}
		\STATE \quad Compute $\textbf{b}_{t+1}=\frac{1}{n}\left(\textbf{F}_{t}^{T} \mathbf{1}_{n}-\textbf{W}_{t}^{T} X \mathbf{1}_{n}\right)$;
		\label{code:fram:update}
		\STATE \quad Compute $\textbf{K} = \textbf{I}+\beta\left(\textbf{M}-\textbf{I}\right)^{T}\left(\textbf{M}-\textbf{I}\right)$;
		\label{code:fram:update}
		\STATE \quad Compute $\tilde{\textbf{F}}_{t+1}=\textbf{K}^{-1}\left(\textbf{X}^{T} \textbf{W}_{t+1}+\mathbf{1}_{n} \textbf{b}_{t+1}^{T}\right)$;
		\label{code:fram:update}
		\STATE \quad Compute $\textbf{F}_{t+1}$ according to:\\
		$\left[\textbf{F}_{t+1}\right]_{i, j}=\!\! \left\{\begin{array}{ll}
			\textbf{Y}_{i, j},  \quad\quad \text { if } i \leq n_{l} \\
			0,  \quad\quad\quad\text { if } \left[\tilde{\textbf{F}}_{t+1}\right]_{i, j} \leq 0 \text { and } i>n_{l} \\
			\!\!\left[\tilde{\textbf{F}}_{t+1}\right]_{i, j}, \!\! \text { if } 0 \leq \left[\tilde{\textbf{F}}_{t+1}\right]_{i, j} \leq 1 \text { and } i>n_{l} \\
			1,  \quad\quad\quad\text { if } 1 \leq \left[\tilde{\textbf{F}}_{t+1}\right]_{i, j} \text { and } i>n_{l}
		\end{array}\right.$
		\label{code:fram:update}
		\STATE \quad Compute $\textbf{A}_{t+1}^{+}=\textbf{F}_{t+1}\textbf{F}_{t+1}^{T}+\left(\textbf{Q}_{t+1}\textbf{Q}_{t+1}^{T}\right)^{+}$,$\textbf{B}_{t+1}^{+}=\textbf{A}_{t+1}^{+}$, \\
		\vspace{2mm} \! \!$\textbf{A} _{t+1}^{-}=\left(\textbf{Q}_{t+1}\textbf{Q}_{t+1}^{T}\right)^{-}$, $\textbf{B}_{t+1}^{-}=\left(\textbf{Q}_{t+1}\textbf{Q}_{t+1}^{T}\right)^{-}+\frac{\gamma}{2\beta} \textbf{E}$;
		\label{code:fram:update}
		\STATE \quad Update $\textbf{M}_{t+1}$ according to:\\
		$\quad \left[\textbf{M}_{t+1}\right]_{i, j}=\left[\textbf{M}_{t}\right]_{i, j} \sqrt{\frac{\left(\textbf{M}_{t} \textbf{A}_{t+1}^{-}+\textbf{A}_{t+1}^{-} \textbf{M}_{t}\right)_{i, j}+2 \left[\textbf{B}_{t+1}\right]_{i, j}^{+}}{\left(\textbf{M}_{t} \textbf{A}_{t+1}^{+}+\textbf{A}_{t+1}^{+} \textbf{M}_{t}\right)_{i, j}+2 \left[\textbf{B}_{t+1}\right]_{i, j}^{-}}}\quad\quad\quad\quad\quad$
		\label{code:fram:update}
		\STATE \quad Compute $t=t+1$;
		\STATE \textbf{Until} convergence;
		\label{code:fram:convergence}
		\RETURN Feature weight matrix $\textbf{W}^{*}$, soft multi-label matrix $\textbf{F}^{*}$ and sparse graph matrix $\textbf{M}^{*}$; 
		\label{code:fram:return}
		\STATE Rank features by $\left\|\textbf{W}^{*}_{i,:}\right\|_{2},i=1,\ldots,d$ in a descending order and return top-ranked features.
		\label{code:fram:convergence}
	\end{algorithmic}
\end{algorithm}

Finally, to optimize the sparse graph matrix $\textbf{M}$, Eq.~(\ref{eq11}) can be rewritten as the following sparse optimization problem with $\textbf{Q},\textbf{W},\textbf{F}$ fixed:
\begin{equation}
	\begin{aligned}
		\min _{\textbf{M}}\quad\gamma\|\textbf{M}\|_{1}\!\!+	\beta Tr\left(\left(\textbf{M} \textbf{F}-\textbf{F}\right)^{T}\left(\textbf{M} \textbf{F}-\textbf{F}\right)\right)\\+\beta Tr\left(\left(\textbf{M} \textbf{Q}-\textbf{Q}\right)^{T}\left(\textbf{M} \textbf{Q}-\textbf{Q}\right)\right)\quad\quad\quad\\
		\text { s.t. } \textbf{M}_{i,i}=0,\textbf{M}_{j,i}=\textbf{M}_{i,j}>0\quad\quad\quad\quad\quad
	\end{aligned}
	\label{eq20}
\end{equation}
The objective function in Eq.~(\ref{eq20}) can be rewritten as:
\begin{equation}
	\begin{aligned}
		J(\textbf{M})=& Tr\left\{\textbf{M}\left[\beta\left(\textbf{FF}^{T}+\textbf{QQ}^{T}\right)\right]\textbf{M}^{T}\right.\\
		&\left.-2\left[\beta\left(\textbf{FF}^{T}+\textbf{QQ}^{T}\right)-\frac{\gamma}{2} \textbf{E}\right] \textbf{M}^{T}\right.\\
		&\left.+\left[\textbf{F}^{T}\textbf{F}+\textbf{I}\right]\right\}
	\end{aligned}
	\label{eq21}
\end{equation}
where $\textbf{E}=\mathbf{1}_{n}\mathbf{1}_{n}^{T}$. For simplicity and following the method in \cite{chen2014similarity}, we can rewrite Eq.~(\ref{eq21}) as:
\begin{equation}
	\begin{aligned}
		J(\textbf{M})=& Tr\left(\textbf{MAM}^{T}-2\textbf{BM}^{T}\right)\\
		=& Tr\left(\textbf{MA}^{+}\textbf{M}^{T}-\textbf{MA}^{-}\textbf{M}^{T}-2\textbf{B}^{+}\textbf{M}^{T}+2\textbf{B}^{-}\textbf{M}^{T}\right)
	\end{aligned}
	\label{eq22}
\end{equation}
where $\textbf{A}=\textbf{FF}^{T}+\textbf{QQ}^{T},\textbf{B}=\textbf{FF}^{T}+\textbf{QQ}^{T}-\frac{\gamma}{2\beta} \textbf{E}$. $\textbf{A}^{+},\textbf{A}^{-},\textbf{B}^{+},\textbf{B}^{-}$ are two pair matrices satisfying $\textbf{A}=\textbf{A}^{+}-\textbf{A}^{-},\textbf{B}=\textbf{B}^{+}-\textbf{B}^{-}$, and all the elements in them are nonnegative. Therefore, we can set $\textbf{A}_{i j}^{+}=\left(\left|\textbf{A}_{i j}\right|+\textbf{A}_{i j}\right) / 2+\textbf{Z}_{i j}, \textbf{A}_{i j}^{-}=\left(\left|\textbf{A}_{i j}\right|-\textbf{A}_{i j}\right) / 2+\textbf{Z}_{i j}$ for any nonnegative entries $\textbf{Z}_{i,j}$, which is the same with $\textbf{B}$.

In our method, we set $\textbf{A}^{+}=\textbf{B}^{+}=\textbf{FF}^{T}+\left(\textbf{QQ}^{T}\right)^{+},\textbf{A}^{-}=\left(\textbf{QQ}^{T}\right)^{-}$ and $\textbf{B}^{-}=\left(\textbf{QQ}^{T}\right)^{-}+\frac{\gamma}{2\beta} \textbf{E}$. Note that $\textbf{A},\textbf{A}^{+},\textbf{A}^{-},\textbf{B},\textbf{B}^{+},\textbf{B}^{-}$ are all the symmetric matrices. Because of the constraints of $\textbf{M}$, $\textbf{M}$ is updated by fixed point iteration for each element of $\textbf{M}$:
\begin{equation}
	\begin{aligned}
		\frac{\partial J(\textbf{M})}{\partial \textbf{M}_{i, j}}=(\textbf{MA}+\textbf{AM})_{i, j}-2 \textbf{B}_{i, j}\quad\quad\quad\quad\quad\quad\quad\quad\quad\quad\quad\\= \left(\textbf{MA}^{+}+\textbf{A}^{+} \textbf{M}\right)_{i, j}+2 \textbf{B}_{i, j}^{-}-\left(\textbf{MA}^{-}+\textbf{A}^{-} \textbf{M}\right)_{i, j}\\-2 \textbf{B}_{i, j}^{+}=0\quad\quad\quad\quad\quad\quad\quad\quad\quad\quad\quad\quad\quad\quad\quad\quad\\
		\Leftrightarrow \textbf{M}_{i, j}^{2}\left[\left(\textbf{M} \textbf{A}^{+}+\textbf{A}^{+} \textbf{M}\right)_{i, j}+2 \textbf{B}_{i, j}^{-}\right]\quad\quad\quad\quad\quad\quad\quad\quad\quad\\
		=\textbf{M}_{i, j}^{2}\left[\left(\textbf{MA}^{-}+\textbf{A}^{-} \textbf{M}\right)_{i, j}+2 \textbf{B}_{i, j}^{+}\right]\quad\quad\quad\quad\quad\quad \\
		\Leftrightarrow \textbf{M}_{i, j}=\textbf{M}_{i, j} \sqrt{\frac{\left(\textbf{MA}^{-}+\textbf{A}^{-} \textbf{M}\right)_{i, j}+2 \textbf{B}_{i, j}^{+}}{\left(\textbf{M} \textbf{A}^{+}+\textbf{A}^{+} \textbf{M}\right)_{i, j}+2 \textbf{B}_{i, j}^{-}}}\quad\quad\quad\quad\quad
	\end{aligned}
	\label{eq23}
\end{equation}

If the initialized $\textbf{M}$ satisfies $\textbf{M}_{i,i}=0,\textbf{M}_{i,j}=\textbf{M}_{j,i}>0$, the properties of $\textbf{M}$ always hold in the iteration process of Eq.~(\ref{eq23}). As a result, we can iteratively obtain the optimal values of each variable. During the semi-supervised optimization process, label correlations are efficiently and fully explored with the feature space, label space, and shared label subspace kept consistent. Eventually, the optimal feature weight matrix $\textbf{W}$ is used to select features with high confidence for multi-label learning. The optimization process of our proposed SGMFS is summarized in Algorithm~\ref{Alg1}.

\subsection{Convergence Analysis}
In order to prove the convergence of Algorithm~\ref{Alg1}, we need to prove that $\textbf{J}\left(\textbf{M}\right)$ in Eq.~(\ref{eq21}) and Eq.~(\ref{eq22}) is monotonically decreasing with $\textbf{M}$ updated by Eq.~(\ref{eq23}), before that we need to prove the following propositions first.

\begin{proposition}\label{prop3}
Given the objective function Eq.~(\ref{eq22}), where $\textbf{A}^{+},\textbf{A}^{-},\textbf{B}^{+}$, $\textbf{B}^{-},\textbf{M}$ are all nonnegative and symmetry, the auxiliary function \cite{ding2008convex} of Eq.~(\ref{eq22}) is defined as follows:
\begin{equation}
	\begin{aligned}
		\mathscr{P}\left(\textbf{M}, \textbf{M}^{\prime}\right)=&-\sum_{i k} 2 \textbf{B}_{i k}^{+} \textbf{M}_{i k}^{\prime}\left(1+\log \frac{\textbf{M}_{i k}}{\textbf{M}_{i k}^{\prime}}\right)\\&+\sum_{i k} \textbf{B}_{i k}^{-} \frac{\textbf{M}_{i k}^{2}+\textbf{M}_{i k}^{\prime 2}}{\textbf{M}_{i k}^{\prime}}+\sum_{i k} \frac{\left(\textbf{A}^{+} \textbf{M}^{\prime}\right)_{i k} \textbf{M}_{i k}^{2}}{\textbf{M}_{i k}^{\prime}}\\&-\sum_{i k \ell} \textbf{A}_{k \ell}^{-} \textbf{M}_{i k}^{\prime} \textbf{M}_{i \ell}^{\prime}\left(1+\log \frac{\textbf{M}_{i k} \textbf{M}_{i \ell}}{\textbf{M}_{i k}^{\prime} \textbf{M}_{i \ell}^{\prime}}\right)
	\end{aligned}
	\label{eq24}
\end{equation}
which is a convex function and  the global minimum of Eq.~(\ref{eq24}) with respect to $\textbf{M}$ is:
\begin{equation}
	\begin{aligned}
		\textbf{M}_{i k} &=\arg \min _{\textbf{M}} \mathscr{P}\left(\textbf{M}, \textbf{M}^{\prime}\right) \\
		&=\textbf{M}_{i k}^{\prime} \sqrt{\frac{\left(\textbf{M}^{\prime} \textbf{A}^{-}\right)_{i k}+\left(\textbf{M}^{\prime} \textbf{A}^{-}\right)_{k i}+2\textbf{B}_{i k}^{+}}{\left(\textbf{M}^{\prime} \textbf{A}^{+}\right)_{i k}+\left(\textbf{M}^{\prime} \textbf{A}^{+}\right)_{k i}+2\textbf{B}_{i k}^{-}}} .
	\end{aligned}
	\label{eq25}
\end{equation}
\end{proposition}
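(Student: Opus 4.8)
The plan is to treat Proposition~\ref{prop3} as two claims about the surrogate $\mathscr{P}(\textbf{M},\textbf{M}')$ in Eq.~(\ref{eq24}) with $\textbf{M}'$ held fixed and strictly positive: first that $\mathscr{P}(\cdot,\textbf{M}')$ is convex in $\textbf{M}$, and second that its unique stationary point is the expression in Eq.~(\ref{eq25}). Alongside these I would record why $\mathscr{P}$ is genuinely an auxiliary function for $J$, i.e. why $\mathscr{P}(\textbf{M},\textbf{M}')\ge J(\textbf{M})$ with equality at $\textbf{M}=\textbf{M}'$, since this is exactly what the subsequent monotone-decrease argument invokes. The surrogate is obtained by splitting $J$ in Eq.~(\ref{eq22}) into its four pieces $\operatorname{Tr}(\textbf{M}\textbf{A}^{+}\textbf{M}^{T})$, $-\operatorname{Tr}(\textbf{M}\textbf{A}^{-}\textbf{M}^{T})$, $-2\operatorname{Tr}(\textbf{B}^{+}\textbf{M}^{T})$ and $2\operatorname{Tr}(\textbf{B}^{-}\textbf{M}^{T})$, and majorizing each one above by a convex, separable term in $\textbf{M}$.

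For the majorization I would use three elementary inequalities, each tight at $\textbf{M}=\textbf{M}'$: (i) $z\ge 1+\log z$ for $z>0$, applied with $z=\textbf{M}_{ik}/\textbf{M}'_{ik}$ to bound the negative linear term $-2\textbf{B}^{+}_{ik}\textbf{M}_{ik}$ above (nonnegativity of $\textbf{B}^{+}$ preserves the direction), and applied with $z=\textbf{M}_{ik}\textbf{M}_{i\ell}/(\textbf{M}'_{ik}\textbf{M}'_{i\ell})$ to bound the concave piece $-\textbf{A}^{-}_{k\ell}\textbf{M}_{ik}\textbf{M}_{i\ell}$ above; (ii) the arithmetic--geometric bound $\textbf{M}_{ik}\le(\textbf{M}_{ik}^{2}+\textbf{M}'^{2}_{ik})/(2\textbf{M}'_{ik})$ for the positive linear term $2\textbf{B}^{-}_{ik}\textbf{M}_{ik}$; and (iii) the quadratic majorization of Ding et al.~\cite{ding2008convex}, $\operatorname{Tr}(\textbf{M}\textbf{A}^{+}\textbf{M}^{T})\le\sum_{ik}(\textbf{A}^{+}\textbf{M}')_{ik}\textbf{M}_{ik}^{2}/\textbf{M}'_{ik}$, valid since $\textbf{A}^{+}\ge 0$. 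Summing the four majorizers reproduces Eq.~(\ref{eq24}); since every inequality becomes an equality when $\textbf{M}=\textbf{M}'$, this simultaneously yields $\mathscr{P}\ge J$ and $\mathscr{P}(\textbf{M}',\textbf{M}')=J(\textbf{M}')$.

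Convexity then follows term by term with $\textbf{M}'$ viewed as a positive constant: the two log-containing sums contribute terms of the form $-(\text{nonnegative})\log\textbf{M}_{ik}$, which are convex, while the remaining sums contribute $(\text{nonnegative})\,\textbf{M}_{ik}^{2}$, also convex; a nonnegative combination of convex functions is convex, so $\mathscr{P}(\cdot,\textbf{M}')$ is convex on the positive orthant. For the minimizer I would compute $\partial\mathscr{P}/\partial\textbf{M}_{ik}$: the log terms contribute pieces proportional to $1/\textbf{M}_{ik}$ (coefficients assembled from $\textbf{B}^{+}$ and, after using the symmetry of $\textbf{A}^{-}$ and of $\textbf{M}'$, from $(\textbf{M}'\textbf{A}^{-})_{ik}+(\textbf{M}'\textbf{A}^{-})_{ki}$), whereas the quadratic terms contribute pieces proportional to $\textbf{M}_{ik}$ (coefficients from $\textbf{B}^{-}$ and the symmetrized $(\textbf{M}'\textbf{A}^{+})_{ik}+(\textbf{M}'\textbf{A}^{+})_{ki}$). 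Setting the derivative to zero, clearing $\textbf{M}_{ik}$, and solving the resulting two-term equation for $\textbf{M}_{ik}^{2}$ gives Eq.~(\ref{eq25}); convexity promotes this stationary point to the global minimizer.

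The step I expect to be the main obstacle is the bookkeeping in the gradient of the fourth, double-sum term $-\sum_{ik\ell}\textbf{A}^{-}_{k\ell}\textbf{M}'_{ik}\textbf{M}'_{i\ell}\log(\textbf{M}_{ik}\textbf{M}_{i\ell})$: a fixed entry $\textbf{M}_{ik}$ appears both as the ``$ik$'' factor and as an ``$i\ell$'' factor, so differentiation produces two sums that must be merged, and it is precisely the symmetry $\textbf{A}^{-}=(\textbf{A}^{-})^{T}$ (together with the symmetric constraint $\textbf{M}=\textbf{M}^{T}$ inherited by $\textbf{M}'$) that collapses them into the symmetrized coefficient $(\textbf{M}'\textbf{A}^{-})_{ik}+(\textbf{M}'\textbf{A}^{-})_{ki}$ of Eq.~(\ref{eq25}); the analogous merge, driven by the same symmetries, produces the $\textbf{A}^{+}$ counterpart. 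I would also track the direction of each majorizing inequality carefully, so that the contributions carrying negative coefficients in $J$ (from $\textbf{A}^{-}$ and $\textbf{B}^{+}$) and those carrying positive coefficients (from $\textbf{A}^{+}$ and $\textbf{B}^{-}$) land on the correct side, which is what ultimately places them in the numerator versus the denominator of the multiplicative update.
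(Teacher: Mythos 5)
Your proposal is correct and takes essentially the same approach as the paper: convexity of $\mathscr{P}(\cdot,\textbf{M}^{\prime})$ is established entrywise (the paper does this by exhibiting the diagonal, positive Hessian in Eqs.~(\ref{eq26})--(\ref{eq28})), and the update Eq.~(\ref{eq25}) is obtained by setting the gradient to zero and merging the duplicated contributions via the symmetry of $\textbf{A}^{+},\textbf{A}^{-},\textbf{B}^{+},\textbf{B}^{-}$ and $\textbf{M}$, exactly as you describe. The majorization step you add to verify $\mathscr{P}(\textbf{M},\textbf{M}^{\prime})\ge J(\textbf{M})$ with equality at $\textbf{M}=\textbf{M}^{\prime}$ is not part of the paper's proof of this proposition---the paper invokes those same inequalities later, in the proof of Theorem~\ref{theo1} given in \ref{apen2}---so it is harmless extra content rather than a different method.
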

\begin{proof}
	Due to the symmetry of matrices $\textbf{B}^{+},\textbf{B}^{-}$ and $M$, we the first derivatives of $\mathscr{P}\left(\textbf{M}, \textbf{M}^{\prime}\right)$:
	\begin{equation}
		\begin{aligned}
			\frac{\partial \mathscr{P}\left(\textbf{M}, \textbf{M}^{\prime}\right)}{\partial \textbf{M}_{i k}}=& \frac{2\left(\textbf{M}^{\prime} \textbf{A}^{+}\right)_{i k} \textbf{M}_{i k}}{\textbf{M}_{i k}^{\prime}}+\frac{2\left(\textbf{M}^{\prime} \textbf{A}^{+}\right)_{k i} \textbf{M}_{i k}}{\textbf{M}_{i k}^{\prime}} \\
			&-\frac{2\left(\textbf{M}^{\prime} \textbf{A}^{-}\right)_{i k} \textbf{M}_{i k}^{\prime}}{\textbf{M}_{i k}}-\frac{2\left(\textbf{M}^{\prime} \textbf{A}^{-}\right)_{k i} \textbf{M}_{i k}^{\prime}}{\textbf{M}_{i k}} \\
			&-4 \textbf{B}_{i k}^{+} \frac{\textbf{M}_{i k}^{\prime}}{\textbf{M}_{i k}}+4 \textbf{B}_{i k}^{-} \frac{\textbf{M}_{i k}}{\textbf{M}_{i k}^{\prime}}
		\end{aligned}
		\label{eq26}
	\end{equation}
	Based on Eq.~(\ref{eq26}), we can get the second derivatives:
	\begin{equation}
		\frac{\partial^{2} \mathscr{P} \left(\textbf{M}, \textbf{M}^{\prime}\right)}{\partial \textbf{M}_{i k} \partial \textbf{M}_{j l}}=\delta_{i j} \delta_{k l} \Gamma_{i k}
		\label{eq27}
	\end{equation}
	where $\delta_{i j}=0, i \neq j$ and $\delta_{l,l}=1$, and
	\begin{equation}
		\begin{aligned}
			\Gamma_{i k}=&2\textbf{M}_{i k}^{\prime} \frac{\left[\left(\textbf{M}^{\prime} \textbf{A}^{-}\right)_{i k}+\left(\textbf{M}^{\prime} \textbf{A}^{-}\right)_{k i}+2\textbf{B}_{i k}^{+}\right]}{\textbf{M}_{i k}^{2}}  \\
			+& 2 \frac{\left(\textbf{M}^{\prime} \textbf{A}^{+}\right)_{i k}+\left(\textbf{M}^{\prime} \textbf{A}^{+}\right)_{k i}+2\textbf{B}_{i k}^{-}}{\textbf{M}_{i k}^{\prime}}>0.
		\end{aligned}
		\label{eq28}
	\end{equation}
	Therefore, the Hessian matrix is a diagonal matrix with positive diagonal elements. Thus Eq.~(\ref{eq24}) is a convex function and we can set Eq.~(\ref{eq24}) to zero to find the minimum, that is
	\begin{equation}
		\begin{aligned}
			4 \textbf{B}_{i k}^{-} \frac{\textbf{M}_{i k}}{\textbf{M}_{i k}^{\prime}}+\frac{2\left(\textbf{M}^{\prime} \textbf{A}^{+}\right)_{i k} \textbf{M}_{i k}}{\textbf{M}_{i k}^{\prime}}+\frac{2\left(\textbf{M}^{\prime} \textbf{A}^{+}\right)_{k i} \textbf{M}_{i k}}{\textbf{M}_{i k}^{\prime}} \\
			=\frac{2\left(\textbf{M}^{\prime} \textbf{A}^{-}\right)_{i k} \textbf{M}_{i k}^{\prime}}{\textbf{M}_{i k}}+\frac{2\left(\textbf{M}^{\prime} \textbf{A}^{-}\right)_{k i} \textbf{M}_{i k}^{\prime}}{\textbf{M}_{i k}}+4 \textbf{B}_{i k}^{+} \frac{\textbf{M}_{i k}^{\prime}}{\textbf{M}_{i k}}
		\end{aligned}
		\label{eq29}
	\end{equation}
	which is equivalent to Eq.~(\ref{eq25}).
\end{proof}

Based on Proposition~\ref{prop3}, we have the following theorem:
\begin{theorem}\label{theo1}
The objective function defined in Eq.~(\ref{eq22}) satisfy $J\left(\textbf{M}_{t}\right) \geq J\left(\textbf{M}_{t+1}\right)$ under the update rule of Eq.~(\ref{eq23}).
\end{theorem}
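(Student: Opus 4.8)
The plan is to invoke the auxiliary-function (majorize--minimize) machinery of Ding et al.~\cite{ding2008convex}, for which Proposition~\ref{prop3} has already supplied the hardest computational ingredient. Recall that $\mathscr{P}(\textbf{M},\textbf{M}^{\prime})$ qualifies as an \emph{auxiliary function} for $J(\textbf{M})$ provided two conditions hold: the \emph{tightness} condition $\mathscr{P}(\textbf{M},\textbf{M})=J(\textbf{M})$, and the \emph{majorization} condition $\mathscr{P}(\textbf{M},\textbf{M}^{\prime})\ge J(\textbf{M})$ for all feasible $\textbf{M},\textbf{M}^{\prime}$. Once both are established, the claimed monotonicity follows from the standard chain
\begin{equation}
J(\textbf{M}_{t+1})\le\mathscr{P}(\textbf{M}_{t+1},\textbf{M}_{t})\le\mathscr{P}(\textbf{M}_{t},\textbf{M}_{t})=J(\textbf{M}_{t}),\nonumber
\end{equation}
where the first inequality is majorization, the middle inequality holds because $\textbf{M}_{t+1}=\arg\min_{\textbf{M}}\mathscr{P}(\textbf{M},\textbf{M}_{t})$ is exactly the global minimizer identified in Eq.~(\ref{eq25}) (by the convexity of $\mathscr{P}$ proved in Proposition~\ref{prop3}), and the last equality is tightness. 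Thus the entire theorem reduces to verifying tightness and majorization, plus checking that the closed-form minimizer Eq.~(\ref{eq25}) coincides with the multiplicative update Eq.~(\ref{eq23}).

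I would dispose of tightness first, since it is essentially by inspection. Setting $\textbf{M}=\textbf{M}^{\prime}$ in Eq.~(\ref{eq24}) makes every logarithmic factor $\log(\textbf{M}_{ik}/\textbf{M}^{\prime}_{ik})$ and $\log(\textbf{M}_{ik}\textbf{M}_{i\ell}/\textbf{M}^{\prime}_{ik}\textbf{M}^{\prime}_{i\ell})$ vanish, while the quadratic-over-$\textbf{M}^{\prime}$ terms collapse back to genuine quadratic forms; collecting terms and using the symmetry of $\textbf{A}^{\pm},\textbf{B}^{\pm}$ recovers exactly the four traces $\operatorname{Tr}(\textbf{M}\textbf{A}^{+}\textbf{M}^{T}-\textbf{M}\textbf{A}^{-}\textbf{M}^{T}-2\textbf{B}^{+}\textbf{M}^{T}+2\textbf{B}^{-}\textbf{M}^{T})$ of Eq.~(\ref{eq22}), so $\mathscr{P}(\textbf{M},\textbf{M})=J(\textbf{M})$.

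The substance of the argument is the majorization inequality, which I would prove by bounding $J$ term by term so that the four bounds sum precisely to $\mathscr{P}$. For the convex (positive) quadratic, the standard Jensen/convexity estimate gives $\operatorname{Tr}(\textbf{M}\textbf{A}^{+}\textbf{M}^{T})\le\sum_{ik}(\textbf{M}^{\prime}\textbf{A}^{+})_{ik}\textbf{M}_{ik}^{2}/\textbf{M}^{\prime}_{ik}$. For the positive linear term one uses the elementary bound $2\textbf{M}_{ik}\le(\textbf{M}_{ik}^{2}+\textbf{M}^{\prime\,2}_{ik})/\textbf{M}^{\prime}_{ik}$ to majorize $2\textbf{B}^{-}_{ik}\textbf{M}_{ik}$. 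The two remaining \emph{concave} contributions, which must be majorized by their tangent planes, are handled through the inequality $z\ge1+\log z$: the negative linear term is controlled by $\textbf{M}_{ik}\ge\textbf{M}^{\prime}_{ik}(1+\log(\textbf{M}_{ik}/\textbf{M}^{\prime}_{ik}))$, and the concave quadratic $-\operatorname{Tr}(\textbf{M}\textbf{A}^{-}\textbf{M}^{T})$ by the two-index version $\textbf{M}_{ik}\textbf{M}_{i\ell}\ge\textbf{M}^{\prime}_{ik}\textbf{M}^{\prime}_{i\ell}(1+\log(\textbf{M}_{ik}\textbf{M}_{i\ell}/\textbf{M}^{\prime}_{ik}\textbf{M}^{\prime}_{i\ell}))$ applied with the nonnegative weights $\textbf{A}^{-}_{k\ell}$. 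Adding the four estimates reproduces Eq.~(\ref{eq24}) exactly, establishing $\mathscr{P}(\textbf{M},\textbf{M}^{\prime})\ge J(\textbf{M})$.

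I expect the main obstacle to be precisely this concave quadratic term: the two-index logarithmic inequality is where the nonnegativity and symmetry hypotheses of Proposition~\ref{prop3} are genuinely used, and one must be careful that $\textbf{A}^{-}_{k\ell}\ge0$ so that the tangent bound points in the majorizing direction. A secondary bookkeeping point is reconciling the minimizer Eq.~(\ref{eq25}) with the update Eq.~(\ref{eq23}): since $\textbf{A}^{\pm}$ are symmetric, $(\textbf{M}^{\prime}\textbf{A}^{-})_{ik}+(\textbf{M}^{\prime}\textbf{A}^{-})_{ki}$ equals $(\textbf{M}^{\prime}\textbf{A}^{-}+\textbf{A}^{-}\textbf{M}^{\prime})_{ik}$, so the two closed forms agree, and the feasibility constraints $\textbf{M}_{i,i}=0$ and $\textbf{M}_{i,j}=\textbf{M}_{j,i}>0$ are preserved along the iterates as already noted after Eq.~(\ref{eq23}). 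With these checks in place, the chain above yields $J(\textbf{M}_{t})\ge J(\textbf{M}_{t+1})$.
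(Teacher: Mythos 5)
Your proposal is correct and follows essentially the same route as the paper's own proof: the same auxiliary-function argument from Ding et al., with the same four term-by-term bounds (the convexity estimate for $\operatorname{Tr}(\textbf{M}\textbf{A}^{+}\textbf{M}^{T})$, the $2ab\le a^{2}+b^{2}$ bound for the $\textbf{B}^{-}$ term, and the $z\ge 1+\log z$ tangent bounds for the $\textbf{B}^{+}$ and $\textbf{A}^{-}$ terms), combined into the identical chain $J(\textbf{M}_{t+1})\le\mathscr{P}(\textbf{M}_{t+1},\textbf{M}_{t})\le\mathscr{P}(\textbf{M}_{t},\textbf{M}_{t})=J(\textbf{M}_{t})$ via Proposition~\ref{prop3}. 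The only difference is that you spell out tightness and the reconciliation of Eq.~(\ref{eq25}) with Eq.~(\ref{eq23}) more explicitly than the paper does, which is a matter of detail rather than of method.
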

\begin{proof}
See \ref{apen2} for the proof of Theorem~\ref{theo1}.
\end{proof}

Then we prove another proposition as follows, after which we propose and prove the convergence of Algorithm~\ref{Alg1}.

\begin{proposition}\label{prop4}
The limiting solution of $\textbf{M}$ updated by rule Eq.~(\ref{eq23}) satisfies the KKT condition \cite{boyd2004convex} of the optimization problems Eq.~(\ref{eq21}) and Eq.~(\ref{eq22}).
\end{proposition}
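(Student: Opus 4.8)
The plan is to verify the Karush--Kuhn--Tucker conditions directly at a fixed point of the update rule Eq.~(\ref{eq23}). Since the equality constraints $\textbf{M}_{i,i}=0$ and $\textbf{M}_{i,j}=\textbf{M}_{j,i}$ are preserved throughout the iteration (as noted after Eq.~(\ref{eq23})), I would treat them as structural and focus the KKT analysis on the inequality constraint, relaxing $\textbf{M}_{i,j}>0$ to $\textbf{M}_{i,j}\geq 0$. First I would introduce Lagrange multipliers $\lambda_{i,j}\geq 0$ for these constraints and form the Lagrangian $\mathcal{L}(\textbf{M},\lambda)=J(\textbf{M})-\sum_{i,j}\lambda_{i,j}\textbf{M}_{i,j}$. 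The associated KKT conditions are stationarity $\frac{\partial J}{\partial \textbf{M}_{i,j}}=\lambda_{i,j}$, primal feasibility $\textbf{M}_{i,j}\geq 0$, dual feasibility $\lambda_{i,j}\geq 0$, and complementary slackness $\lambda_{i,j}\textbf{M}_{i,j}=0$. Eliminating the multiplier, the complementary slackness reduces to the single scalar condition $\frac{\partial J}{\partial \textbf{M}_{i,j}}\,\textbf{M}_{i,j}=0$, which is what I must establish at the limit point.

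Next I would use the gradient already recorded in Eq.~(\ref{eq23}), namely $\frac{\partial J}{\partial \textbf{M}_{i,j}}=(\textbf{MA}+\textbf{AM})_{i,j}-2\textbf{B}_{i,j}$, and split it via $\textbf{A}=\textbf{A}^{+}-\textbf{A}^{-}$ and $\textbf{B}=\textbf{B}^{+}-\textbf{B}^{-}$ into the nonnegative numerator and denominator appearing in the multiplicative factor. At the limiting solution $\textbf{M}^{*}$ the update leaves every entry unchanged, so for each $(i,j)$ either $\textbf{M}^{*}_{i,j}=0$ or the square-root factor equals one. Squaring the fixed-point identity and cross-multiplying then gives $(\textbf{M}^{*}_{i,j})^{2}\big[(\textbf{M}^{*}\textbf{A}^{+}+\textbf{A}^{+}\textbf{M}^{*})_{i,j}+2\textbf{B}^{-}_{i,j}-(\textbf{M}^{*}\textbf{A}^{-}+\textbf{A}^{-}\textbf{M}^{*})_{i,j}-2\textbf{B}^{+}_{i,j}\big]=0$, that is, $(\textbf{M}^{*}_{i,j})^{2}\,\frac{\partial J}{\partial \textbf{M}_{i,j}}\big|_{\textbf{M}^{*}}=0$. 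Since this forces $\textbf{M}^{*}_{i,j}\,\frac{\partial J}{\partial \textbf{M}_{i,j}}=0$ (either the entry vanishes or the gradient does), the complementary slackness condition holds exactly.

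To finish I would check the remaining requirements: primal feasibility is immediate because Eq.~(\ref{eq23}) keeps all entries nonnegative from a nonnegative initialization, and dual feasibility $\frac{\partial J}{\partial \textbf{M}_{i,j}}=\lambda_{i,j}\geq 0$ would follow on the active set $\{\textbf{M}^{*}_{i,j}=0\}$ by inspecting the sign of the gradient there. I expect the main obstacle to be precisely this boundary sign argument together with its interplay with the symmetry constraint: the decomposition $\textbf{A}^{+}=\textbf{B}^{+}=\textbf{FF}^{T}+(\textbf{QQ}^{T})^{+}$, $\textbf{A}^{-}=(\textbf{QQ}^{T})^{-}$, $\textbf{B}^{-}=(\textbf{QQ}^{T})^{-}+\frac{\gamma}{2\beta}\textbf{E}$ must be invoked carefully to confirm that every factor is nonnegative, so that the square root is well defined and the gradient sign is controlled, and one must verify that enforcing $\textbf{M}_{i,j}=\textbf{M}_{j,i}$ (which the symmetric form of the update respects) does not introduce an extra multiplier violating stationarity.
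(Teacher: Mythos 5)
Your proposal follows essentially the same route as the paper's proof: form the Lagrangian with multipliers for the nonnegativity constraints, reduce the KKT system to the complementarity condition $\frac{\partial J(\textbf{M})}{\partial \textbf{M}_{i,j}}\,\textbf{M}_{i,j}=0$, and verify it at a fixed point of Eq.~(\ref{eq23}) by squaring the update and using that $\textbf{M}_{i,j}=0$ holds iff $\textbf{M}_{i,j}^{2}=0$. The dual-feasibility sign check you flag as the main remaining obstacle is not addressed in the paper's proof either (it verifies only the fixed-point/complementarity equation), so your argument is at least as complete as the published one.
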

\begin{proof}
	As illustrated above, the optimization problem Eq.~(\ref{eq21}) is equal to Eq.~(\ref{eq22}). Thus we only need to prove this proposition is true for Eq.~(\ref{eq22}).
	
	Since $\textbf{M}$ must satisfy $\textbf{M}_{i,j}>0$ in Eq.~(\ref{eq22}), the Lagrangian function can be constructed as:
	\begin{equation}
		L(\textbf{M})=J(\textbf{M})-T r\left(\Lambda^{T} \textbf{M}\right)
		\label{eq36}
	\end{equation}
	where $\Lambda^{T}$ is the Lagrangian multiplier. According to setting the derivative of $J\left(\textbf{M}\right)$ to zero, we have:
	\begin{equation}
		\frac{\partial L(\textbf{M})}{\partial \textbf{M}}=\frac{\partial J(\textbf{M})}{\partial \textbf{M}}-\Lambda=0
		\label{eq37}
	\end{equation}
	According to the the complementary slackness condition $\Lambda_{i j} \textbf{M}_{i j}=0$, Eq.~(\ref{eq37}) can be written as:
	\begin{equation}
		\frac{\partial J(\textbf{M})}{\partial \textbf{M}_{i j}} \textbf{M}_{i j}=\Lambda_{i j} \textbf{M}_{i j}=0
		\label{eq38}
	\end{equation}
	Notice that Eq.~(\ref{eq38}) is a fixed point equation, and the optimal solution of $\textbf{M}$ must satisfy this equation at convergence.
	
	In another aspect, since $\textbf{M}_{i,j}=0$ is equivalent to $\textbf{M}_{i,j}^{2}=0$, if Eq.~(\ref{eq23}) holds, Eq.~(\ref{eq38}) also holds and vice versa. Therefore, the limiting $\textbf{M}$ updated by Eq.~(\ref{eq23}) satisfies the fixed point equation.
\end{proof}
\begin{theorem}\label{theo2}
The objective function of the proposed SGMFS in Eq.~(\ref{eq11}) decreases monotonically with the optimization process in Algorithm~\ref{Alg1}.
\end{theorem}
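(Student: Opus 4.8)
The plan is to read Algorithm~\ref{Alg1} as block coordinate descent on the single objective $g(\textbf{M}, \textbf{b}, \textbf{F},\textbf{W},\textbf{Q},\textbf{P})$ from Proposition~\ref{prop1} and to verify that every block update leaves $g$ non-increasing. Since $g$ is a sum of squared Frobenius norms, a $\gamma\|\textbf{W}\|_{2,1}$ term and a $\gamma\|\textbf{M}\|_{1}$ term, it satisfies $g\ge 0$, so once a full sweep is shown to be non-increasing, monotonicity together with this lower bound yields convergence of the value sequence by the monotone convergence theorem. I would organize the argument block by block, in the order the variables are refreshed in Algorithm~\ref{Alg1}.

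First I would dispose of the blocks that are solved exactly. The $\textbf{P}$-update (Eq.~(\ref{eq12})) and the $\textbf{b}$-update (Eq.~(\ref{eq16})) set the gradient of the smooth, convex-in-that-block objective to zero and therefore return the exact coordinatewise global minimizer, so $g$ cannot increase. After eliminating $\textbf{P}$, the $\textbf{Q}$-subproblem collapses to the trace maximization Eq.~(\ref{eq14}), whose global optimum is exactly the span of the top-$lsd$ eigenvectors of $\textbf{C}$ by Proposition~\ref{prop2}; hence the $\textbf{Q}$-step is also non-increasing. Finally, the $\textbf{M}$-dependent part of $g$ coincides termwise with $J(\textbf{M})$ in Eqs.~(\ref{eq21})--(\ref{eq22}), and Theorem~\ref{theo1} already certifies $J(\textbf{M}_{t})\ge J(\textbf{M}_{t+1})$ under the multiplicative rule Eq.~(\ref{eq23}), so this block is settled by a direct appeal.

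The delicate block, which I expect to be the main obstacle, is $\textbf{W}$, because the non-smooth penalty $\gamma\|\textbf{W}\|_{2,1}$ is replaced by the reweighted surrogate $\gamma\,\operatorname{Tr}(\textbf{W}^{T}\textbf{D}\textbf{W})$ with $\textbf{D}$ frozen at the previous iterate; solving Eq.~(\ref{eq17})--(\ref{eq18}) only decreases this surrogate, not the genuine $\ell_{2,1}$ term. To close the gap I would invoke the standard reweighting lemma: for rows $w=[\textbf{W}_{t+1}]_{i,:}$ and $w'=[\textbf{W}_{t}]_{i,:}$ the elementary inequality $(\|w\|_{2}-\|w'\|_{2})^{2}\ge 0$ gives $\|w\|_{2}-\frac{\|w\|_{2}^{2}}{2\|w'\|_{2}}\le \|w'\|_{2}-\frac{\|w'\|_{2}^{2}}{2\|w'\|_{2}}$, and summing over $i$ yields $\|\textbf{W}_{t+1}\|_{2,1}-\operatorname{Tr}(\textbf{W}_{t+1}^{T}\textbf{D}_{t}\textbf{W}_{t+1})\le \|\textbf{W}_{t}\|_{2,1}-\operatorname{Tr}(\textbf{W}_{t}^{T}\textbf{D}_{t}\textbf{W}_{t})$. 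Adding this to the surrogate decrease guaranteed by the exact solution Eq.~(\ref{eq18}) of the convex problem Eq.~(\ref{eq17}) shows that the true objective containing $\|\textbf{W}\|_{2,1}$ decreases across the $\textbf{W}$-step. A second point needing care is the $\textbf{F}$-update, whose subproblem is a convex quadratic coupled through $\textbf{K}=\textbf{I}+\beta(\textbf{M}-\textbf{I})^{T}(\textbf{M}-\textbf{I})$ under the separable box constraint $0\le\textbf{F}_{i,j}\le1$ and the equality $\textbf{F}_{l}=\textbf{Y}_{l}$; here I would verify that the clipped closed form Eq.~(\ref{eq19}) (as adopted in CSFS) does not increase $g$, the subtlety being that clipping the unconstrained minimizer is an exact projection only when $\textbf{K}$ is diagonal, so the non-increase for coupled $\textbf{K}$ needs a short projection/majorization argument.

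Chaining the six per-block inequalities over one complete iteration then gives $g(\textbf{M}_{t+1},\textbf{b}_{t+1},\textbf{F}_{t+1},\textbf{W}_{t+1},\textbf{Q}_{t+1},\textbf{P}_{t+1})\le g(\textbf{M}_{t},\textbf{b}_{t},\textbf{F}_{t},\textbf{W}_{t},\textbf{Q}_{t},\textbf{P}_{t})$, i.e. monotone decrease of Eq.~(\ref{eq11}); combined with the lower bound $g\ge 0$ this proves the stated monotonicity and hence convergence of the objective values, while Proposition~\ref{prop4} additionally guarantees that the limiting $\textbf{M}$ satisfies the KKT conditions, so the limit point is stationary rather than merely value-convergent.
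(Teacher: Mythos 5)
Your proposal follows essentially the same route as the paper's own proof: block-coordinate descent over one sweep of Algorithm~\ref{Alg1}, with exact minimization handling the $\textbf{P}$, $\textbf{b}$, and $\textbf{Q}$ blocks (the latter via Proposition~\ref{prop2}), the reweighted surrogate argument for the $\textbf{W}$ block --- your elementary derivation from $(\|w\|_{2}-\|w'\|_{2})^{2}\ge 0$ is exactly the inequality the paper imports from FSNM as Eq.~(\ref{eq44}) --- an appeal to Theorem~\ref{theo1} for the $\textbf{M}$ block, and the zero lower bound to conclude convergence of the objective values. The only point of divergence is the $\textbf{F}$-step: you correctly observe that clipping the unconstrained minimizer $\textbf{K}^{-1}(\textbf{X}^{T}\textbf{W}+\mathbf{1}_{n}\textbf{b}^{T})$ to the box $[0,1]$ is guaranteed to be a descent step only when the quadratic is separable (i.e., $\textbf{K}$ diagonal), and you leave the required projection/majorization argument open; the paper's proof does not supply it either, asserting the corresponding inequality (Eq.~(\ref{eq47})) without justification, so the subtlety you flag is genuine and is in fact an unaddressed gap common to both arguments.
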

\begin{proof}
See \ref{apen3} for the proof of Theorem~\ref{theo2}.
\end{proof}

\subsection{Computational Complexity Analysis}
In this subsection, we briefly analyze the computational complexity of SGMFS. In each iteration of the training phase, the main computational complexity of SGMFS locates in three parts: calculating the inverse of matrix $\textbf{S}-\alpha \textbf{XQQ}^{T}\textbf{X}^{T}$ for obtaining the new feature weight matrix $\textbf{W}$, computing the shared label matrix $\textbf{Q}$ in label subspace and updating the adaptive sparse graph matrix $\textbf{M}$.

The computational complexity of calculating the inverse of matrices for $\textbf{W}$ is $\mathcal{O}\left(nd*\text{min}\left\{n,d\right\}\right)$, the detailed analysis of which is provided in \ref{apen1}. And updating $\textbf{Q}$ requires the eigen-decomposition operation of $\textbf{C}$ in Algorithm~\ref{Alg1}, which needs $\mathcal{O}\left((lsd)*d^{2}\right)$, since only the eigenvectors corresponding to the largest $lsd$ eigenvalues of $\textbf{C}$ are required.

In order to compute sparse graph $\textbf{M}$ in Eq.~(\ref{eq23}), $\textbf{A}^{+}$, $\textbf{A}^{-}$, $\textbf{B}^{+}$ and $\textbf{B}^{-}$ are calculated first, and these matrices including $\textbf{M}$ are all symmetric, thus $\left(\textbf{MA}^{-}\right)^{T}=\textbf{A}^{-}\textbf{M}$ and $\left(\textbf{MA}^{+}\right)^{T}=\textbf{A}^{+}\textbf{M}$. Hence, only two straightforward matrix multiplication operations of n-by-n matrices are needed, with a complexity of $\mathcal{O}\left(n^{3}\right)$.

However, in other semi-supervised feature selection methods that rely on $k$NN, the parameter $k$ is hard to tune to obtain a proper similarity structure and competitive performance, which require huge computational costs. And learning the structured optimal graph in Eq.~(\ref{eq2}) also needs higher computational cost when solving relevant optimization subproblems.

\section{Experiments}
\label{S5}
In this section, we perform comprehensive experiments to compare SGMFS with other state-of-the-art methods, followed by an in-depth analysis of SGMFS.
\subsection{Datasets and Baselines}
We utilize seven multi-label benchmarks spanning five distinct domains, including four small-scale datasets and three large-scale datasets, all available at \url{http://mulan.sourceforge.net/datasets-mlc.html} and \url{http://www.uco.es/kdis/mllresources/}. To ensure experimental reliability, we randomly select suitable training and testing samples. Details of these datasets are presented in Table~\ref{tab1}, where the attribute \emph{cardinality} represents the average number of labels per sample, and \emph{density} refers to the normalized \emph{cardinality}, calculated as \emph{cardinality} divided by the total number of class labels. It is worth noting that the class labels in these datasets are binary.

\begin{table}[t]
	\centering
	\renewcommand\arraystretch{1.2}
	\setlength\tabcolsep{1.3pt}
	\footnotesize
	\caption{Detailed characteristics of multi-label datasets used in our experiments}
	\begin{tabular}{|c|c|c|c|c|c|c|c|}
		\hline  
		~&Domain&Features&Labels&Training&Test&\emph{Cardinality}&\emph{Density}\\
		\hline  
		
		Emotions&music&72&6&400&100&1.869&0.311\\
		\hline 
        EUR-Lex&text&5000&201&10000&1000&2.213&0.011\\
		\hline
        Mediamill&video&120&101&10000&1000&4.376&0.043\\
		\hline
        NUS-WIDE&image&500&81&10000&1000&1.869&0.023\\
		\hline
		Scene&image&294&6&1000&500&1.074&0.179\\
		\hline
		Yeast&biology&103&14&1500&500&4.237&0.303\\
		\hline
        Plant&biology&440&12&685&293&1.079&0.090\\
		\hline
	\end{tabular}
	\label{tab1}
\end{table}

To assess the effectiveness of the proposed SGMFS, we compare it with seven advanced feature selection methods: MIFS \cite{jian2016multi}, FSNM \cite{nie2010efficient}, CSFS \cite{chang2014convex}, LSDF \cite{zhao2008locality}, SFSS \cite{ma2012discriminating}, SCFS \cite{xu2018semi}, and SFS-BLL \cite{shi2021binary}. 
The first two algorithms are fully supervised methods, while the latter five are semi-supervised approaches.
Subsequently, the representative multi-label classification algorithm ML-$k$NN \cite{zhang2007ml} is employed to complete the multi-label learning process using the selected features. 

We adopt two example-based metrics, $Hamming$ $Loss$ and $Ranking$ $Loss$, along with two label-based metrics, $F_{Macro}$ and $F_{Micro}$ (macro-averaged and micro-averaged F1-scores), to assess the performance of the feature selection methods. Lower values for the first two metrics indicate better performance, while higher values for the latter two signify improved results~\cite{tang2014feature,zhong2021multi}.

\begin{figure*}[p]
	\setlength{\abovecaptionskip}{0pt}
	\centering
    \vspace{-3cm}
	\subfigure[]{
		\label{fig:subfig:a} 
		\includegraphics[width=0.25\columnwidth]{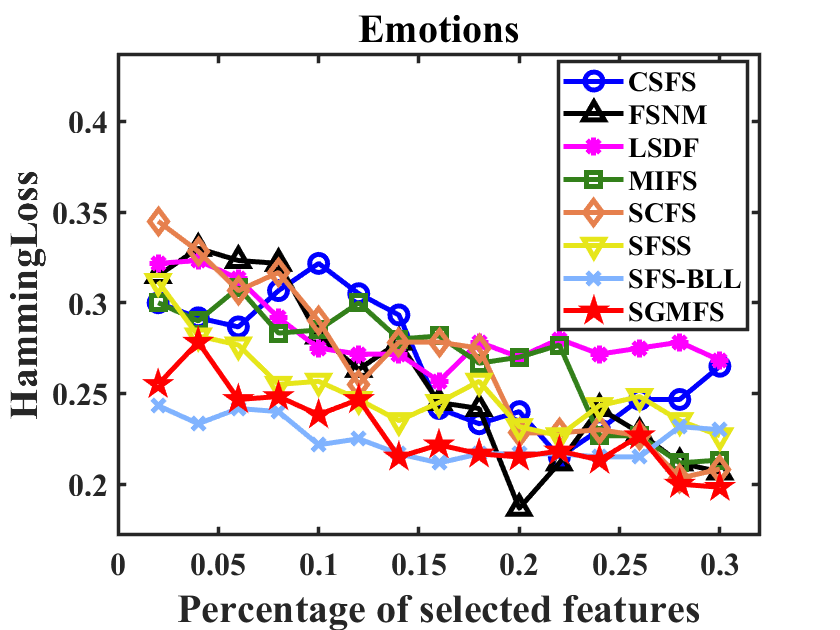}}
	\hspace{-3.1mm}
	\subfigure[]{
		\label{fig:subfig:b} 
		\includegraphics[width=0.25\columnwidth]{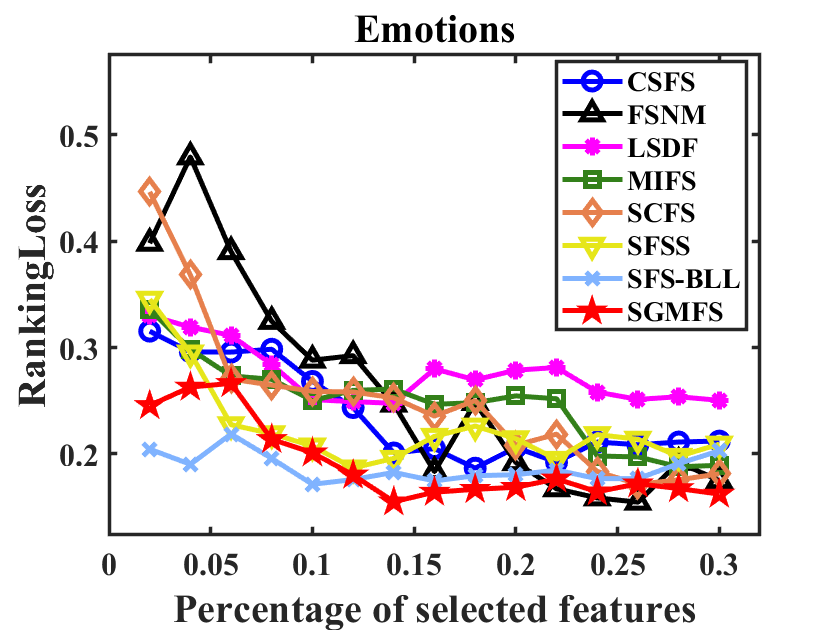}}
	\hspace{-3.1mm}
	\subfigure[]{
		\label{fig:subfig:c} 
		\includegraphics[width=0.25\columnwidth]{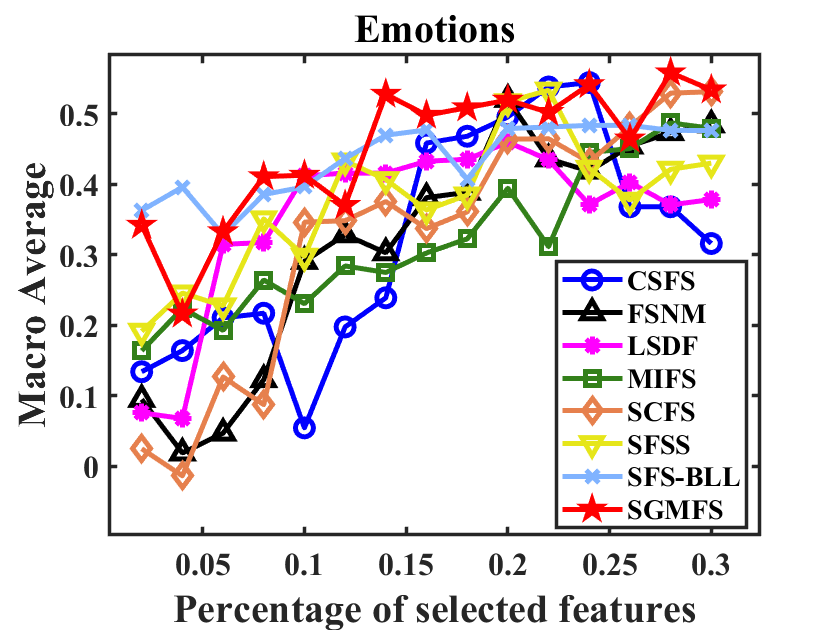}}
	\hspace{-3.1mm}
	\subfigure[]{
		\label{fig:subfig:d} 
		\includegraphics[width=0.25\columnwidth]{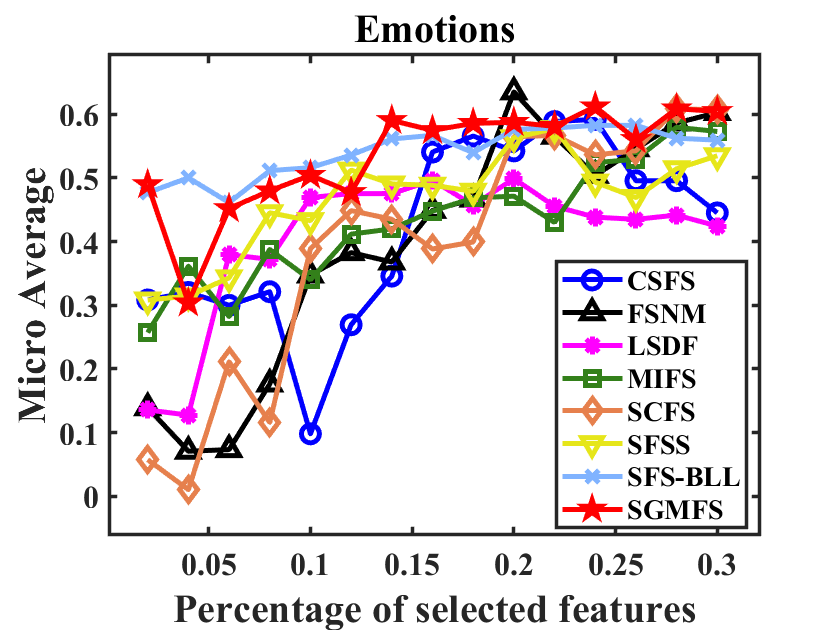}}
	\hspace{-3.1mm}
	\subfigure[]{
		\label{fig:subfig:e} 
		\includegraphics[width=0.25\columnwidth]{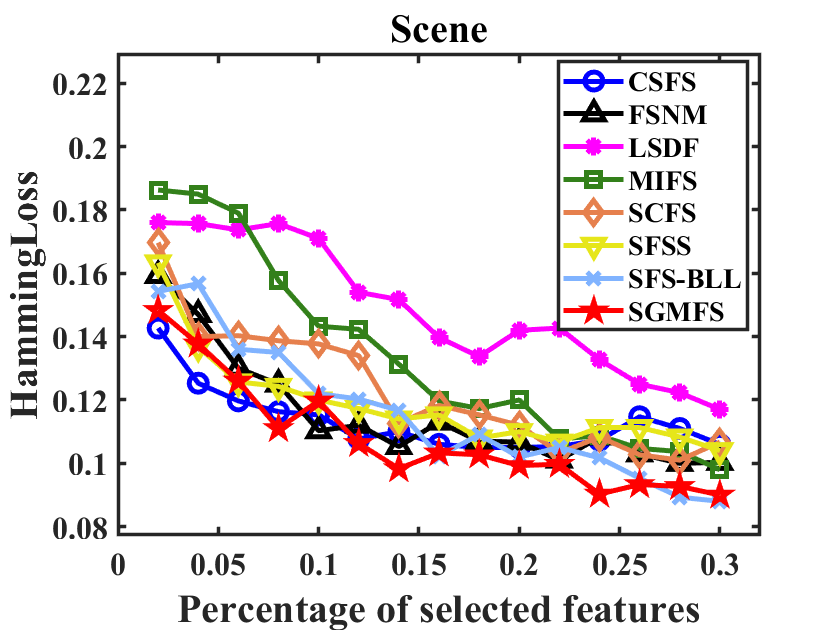}}
	\hspace{-3.1mm}
	\vspace{-0mm}
	\subfigure[]{
		\label{fig:subfig:f} 
		\includegraphics[width=0.25\columnwidth]{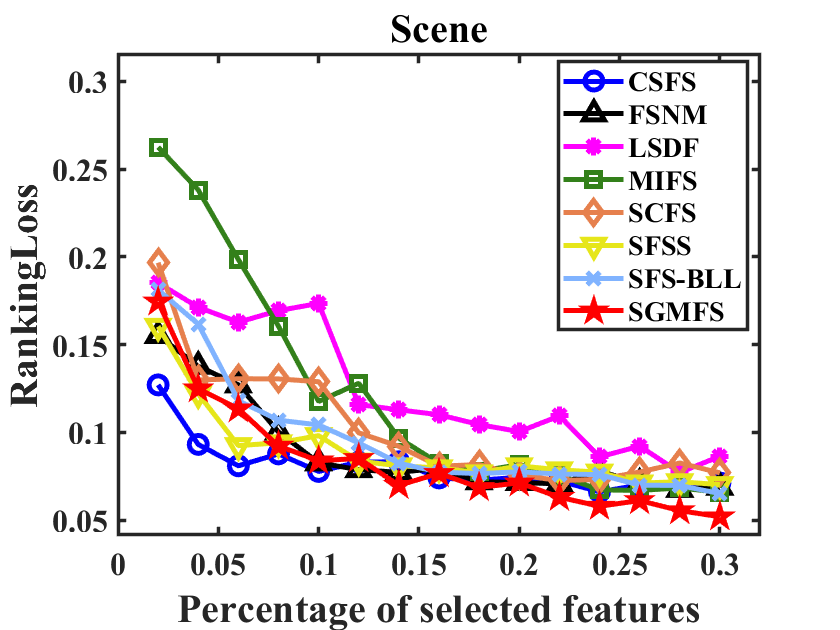}}
	\hspace{-3.1mm}
	\subfigure[]{
		\label{fig:subfig:g} 
		\includegraphics[width=0.25\columnwidth]{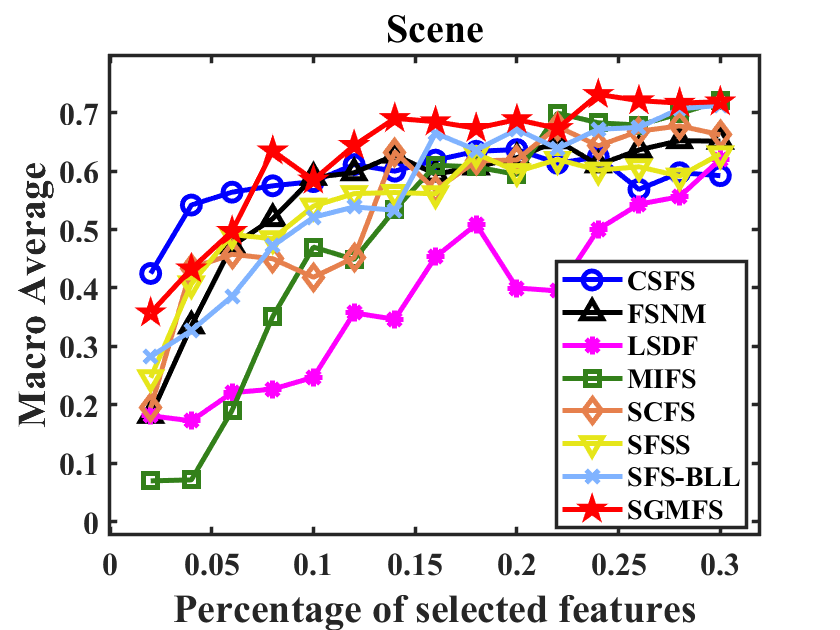}}
	\hspace{-3.1mm}
	\subfigure[]{
		\label{fig:subfig:h} 
		\includegraphics[width=0.25\columnwidth]{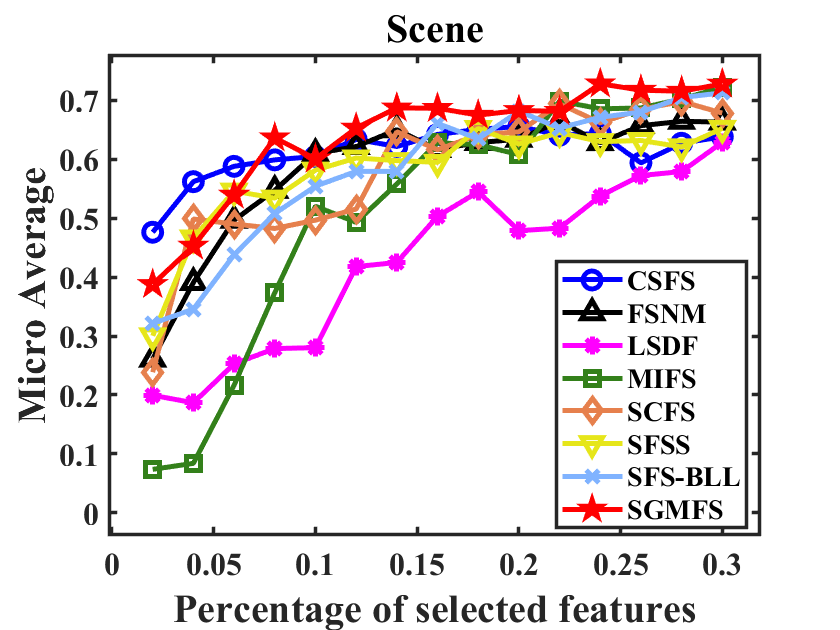}}
	\hspace{-3.1mm}
	\subfigure[]{
		\label{fig:subfig:i} 
		\includegraphics[width=0.25\columnwidth]{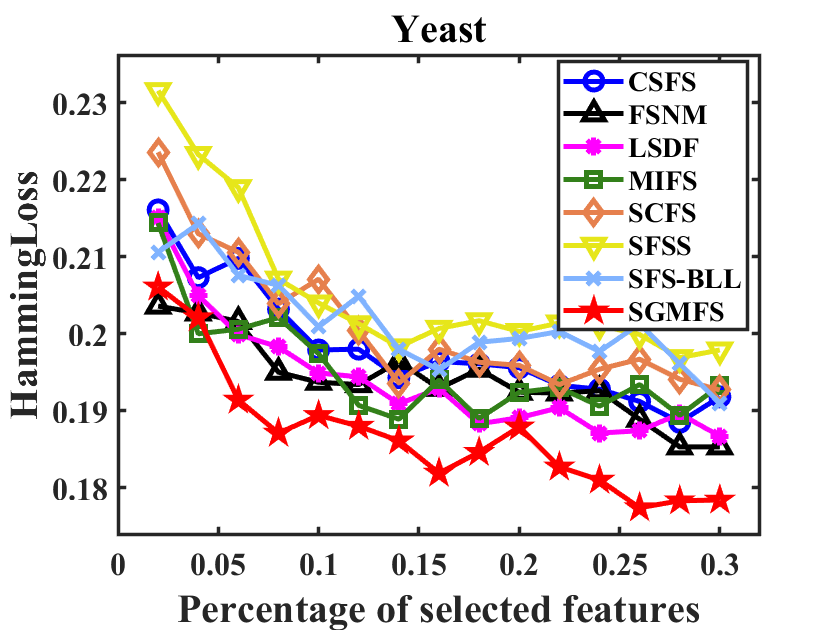}}
	\hspace{-3.1mm}
	\subfigure[]{
		\label{fig:subfig:j} 
		\includegraphics[width=0.25\columnwidth]{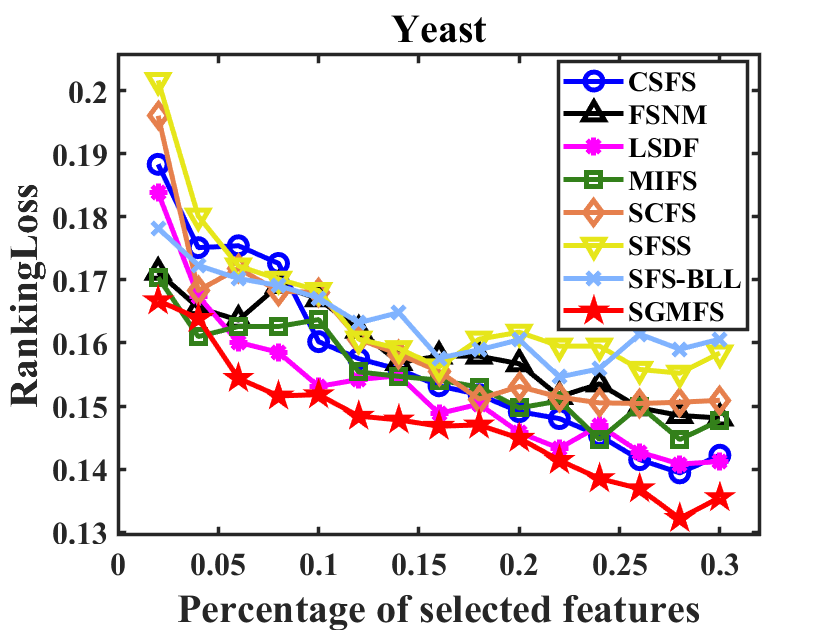}}
	\hspace{-3.1mm}
	\vspace{-0mm}
	\subfigure[]{
		\label{fig:subfig:k} 
		\includegraphics[width=0.25\columnwidth]{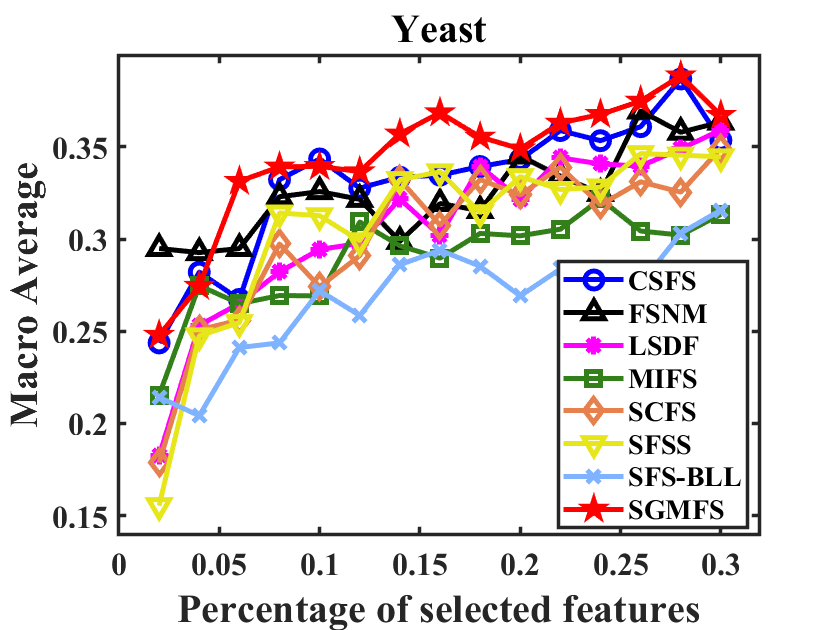}}
	\hspace{-3.1mm}
	\subfigure[]{
		\label{fig:subfig:l} 
		\includegraphics[width=0.25\columnwidth]{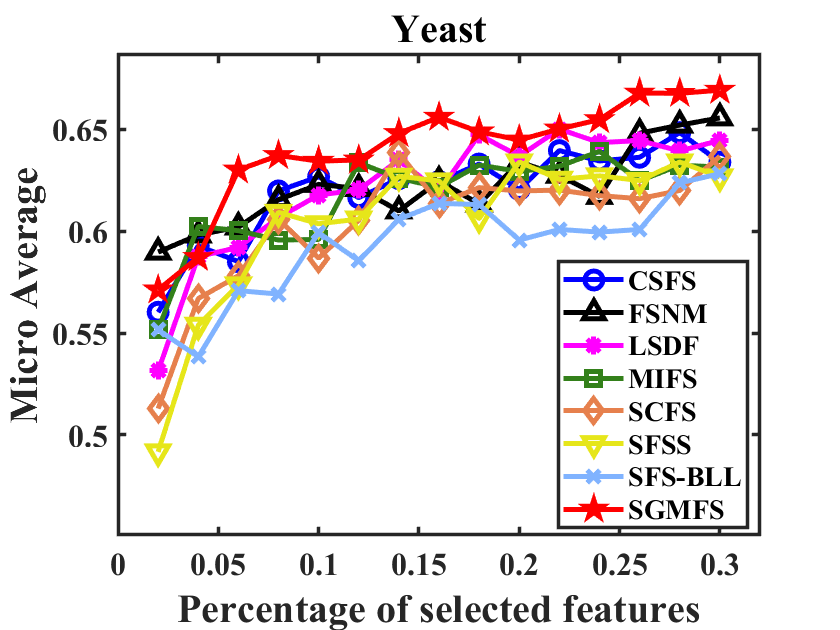}}
	\hspace{-3.1mm}
	\subfigure[]{
		\label{fig:subfig:m} 
		\includegraphics[width=0.25\columnwidth]{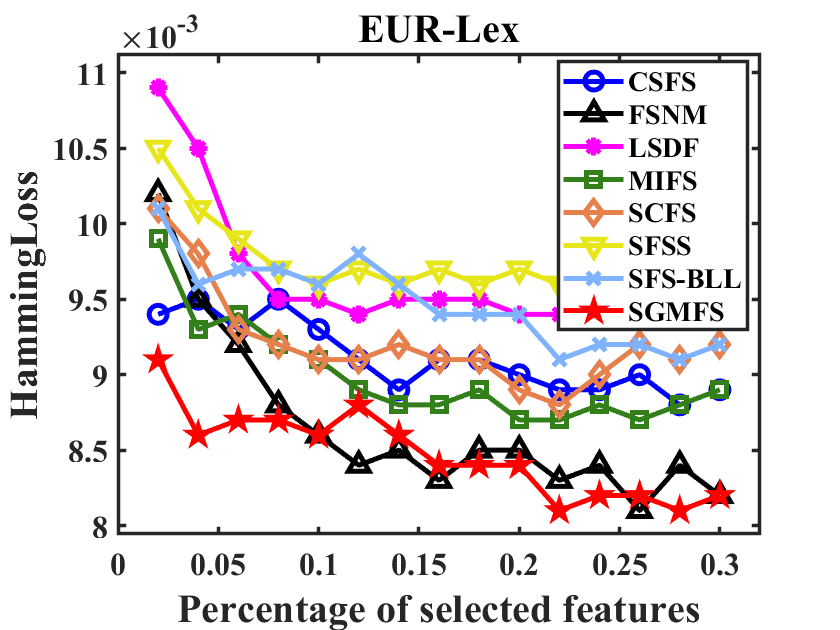}}
	\hspace{-3.1mm}
	\subfigure[]{
		\label{fig:subfig:n} 
		\includegraphics[width=0.25\columnwidth]{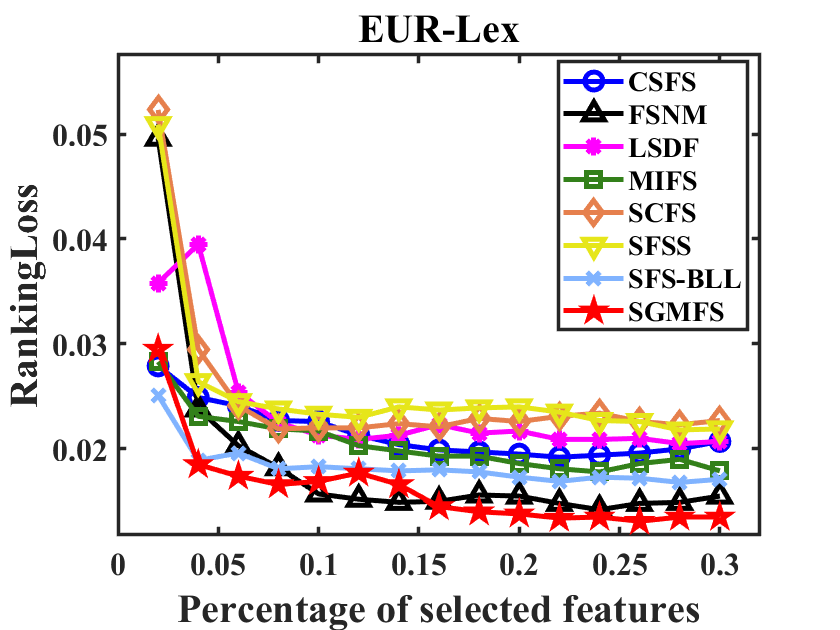}}
	\hspace{-3.1mm}
	\subfigure[]{
		\label{fig:subfig:o} 
		\includegraphics[width=0.25\columnwidth]{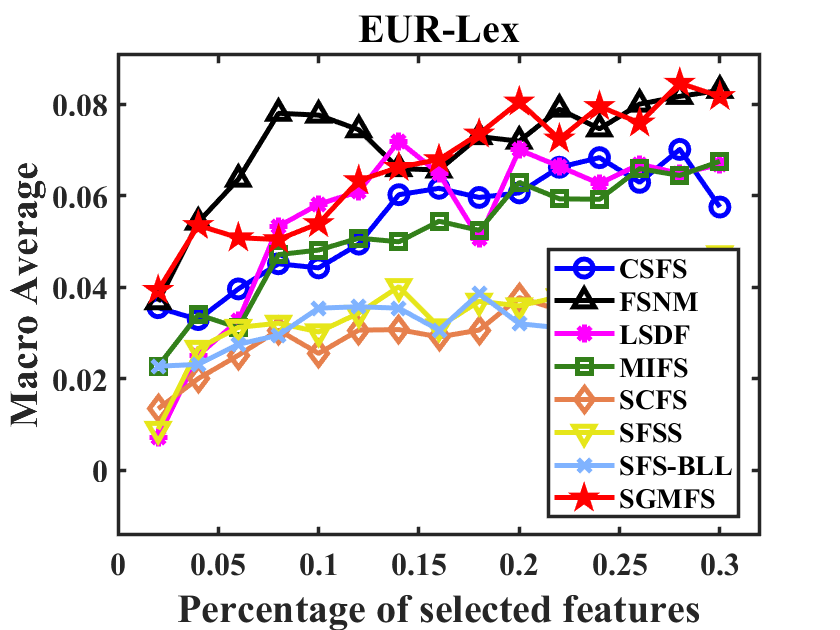}}
	\hspace{-3.1mm}
	\subfigure[]{
		\label{fig:subfig:p} 
		\includegraphics[width=0.25\columnwidth]{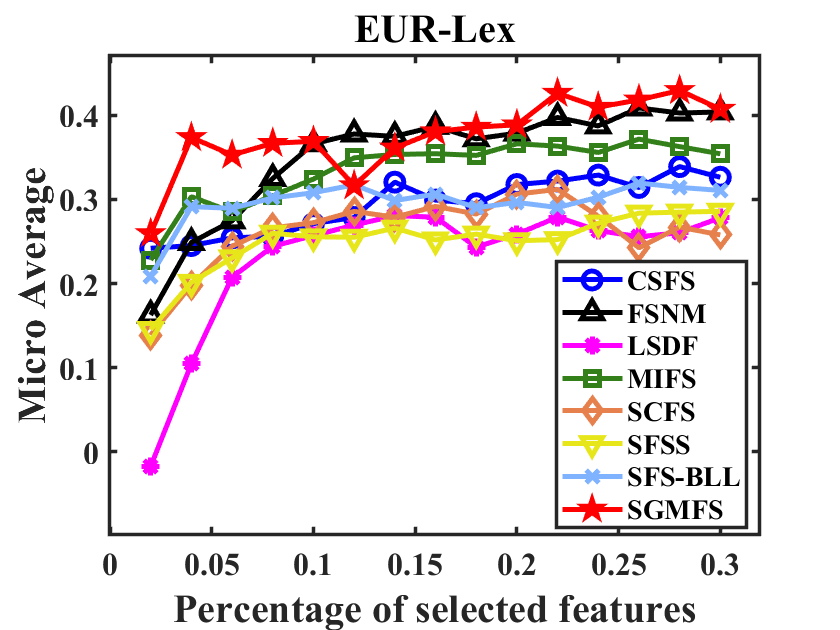}}
	\hspace{-3.1mm}
	\subfigure[]{
		\label{fig:subfig:q} 
		\includegraphics[width=0.25\columnwidth]{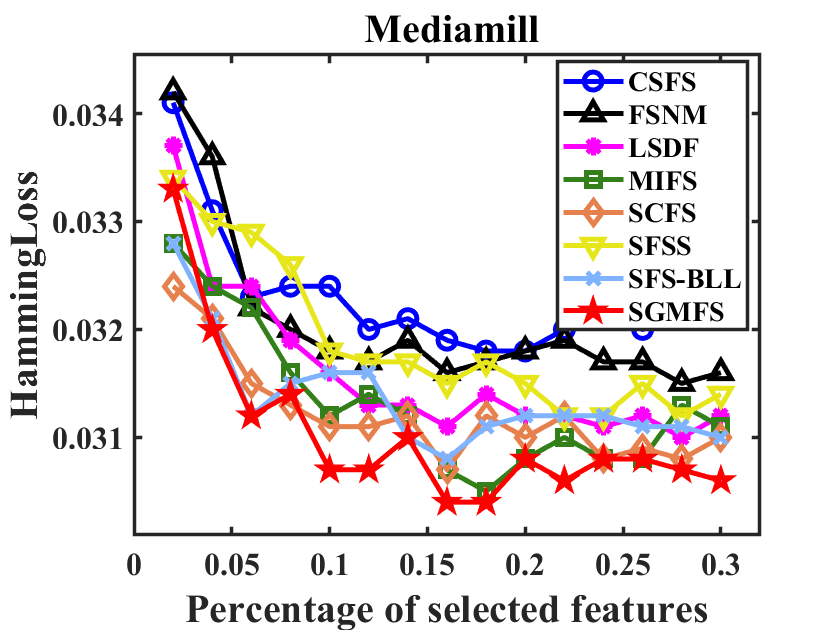}}
	\hspace{-3.1mm}
	\subfigure[]{
		\label{fig:subfig:r} 
		\includegraphics[width=0.25\columnwidth]{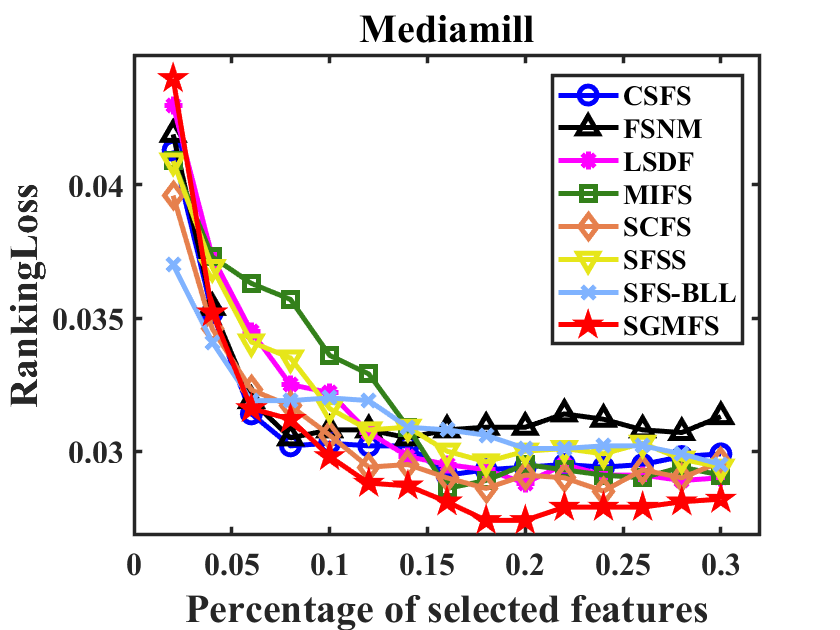}}
	\hspace{-3.1mm}
	\subfigure[]{
		\label{fig:subfig:s} 
		\includegraphics[width=0.25\columnwidth]{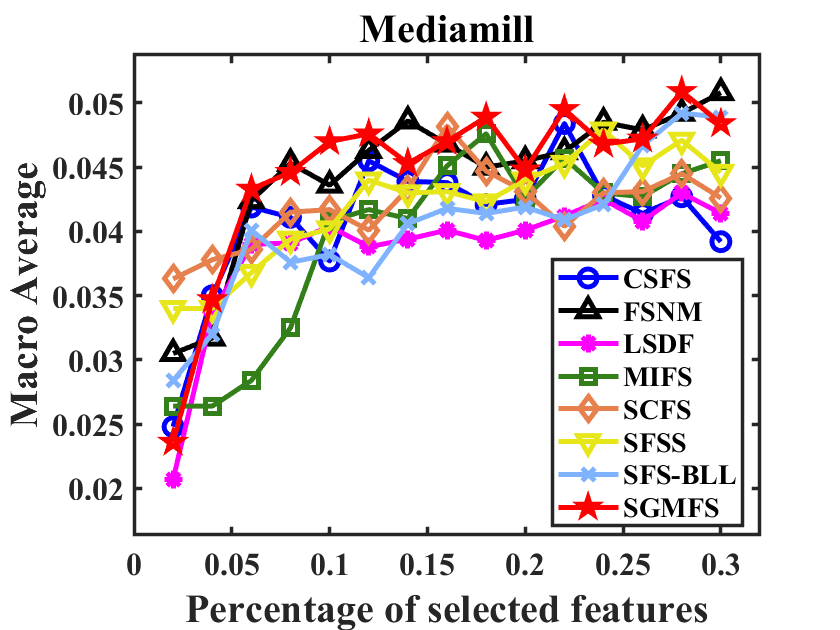}}
	\hspace{-3.1mm}
	\subfigure[]{
		\label{fig:subfig:t} 
		\includegraphics[width=0.25\columnwidth]{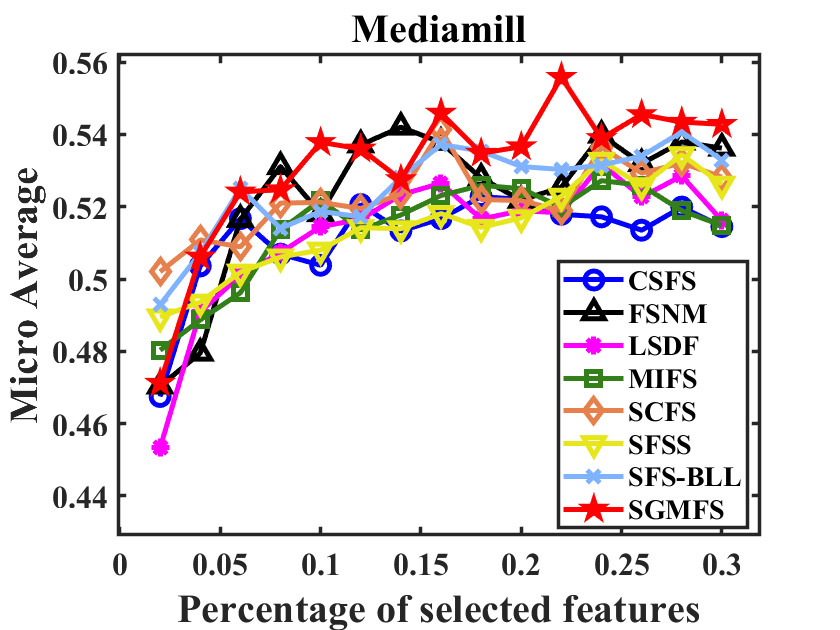}}
	\hspace{-3.1mm}
	\subfigure[]{
		\label{fig:subfig:s} 
		\includegraphics[width=0.25\columnwidth]{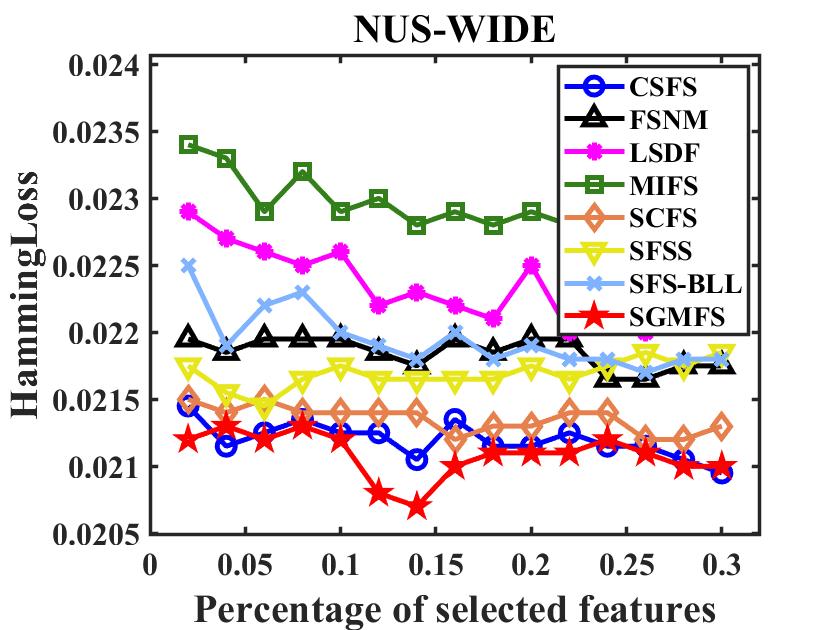}}
	\hspace{-3.1mm}
	\subfigure[]{
		\label{fig:subfig:s} 
		\includegraphics[width=0.25\columnwidth]{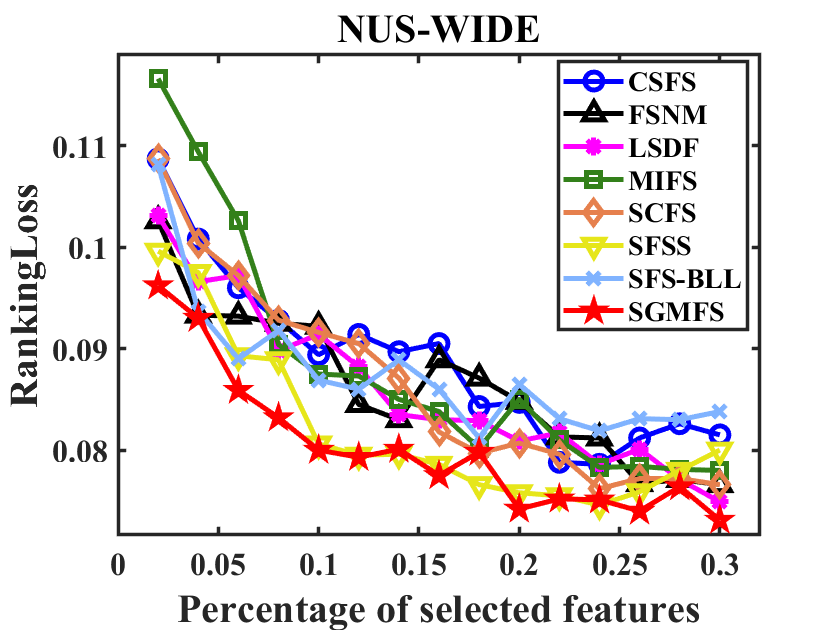}}
	\hspace{-3.1mm}
	\subfigure[]{
		\label{fig:subfig:s} 
		\includegraphics[width=0.25\columnwidth]{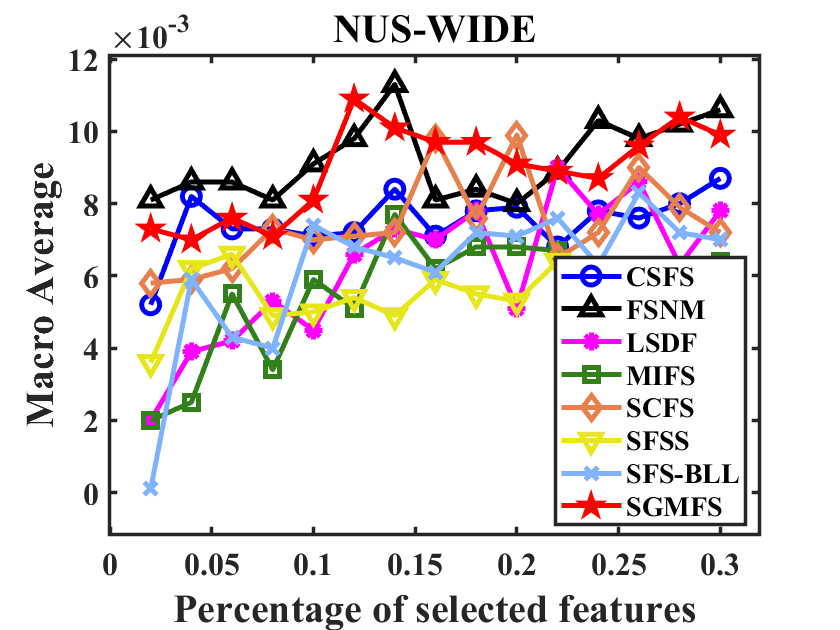}}
	\hspace{-3.1mm}
	\subfigure[]{
		\label{fig:subfig:s} 
		\includegraphics[width=0.25\columnwidth]{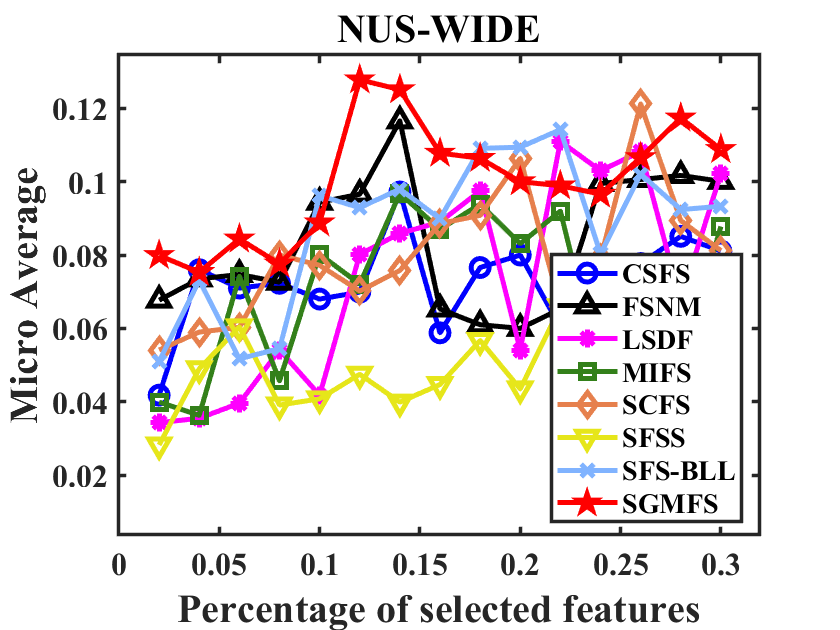}}
	\caption{
    Comparison of SGMFS with other feature selection algorithms (CSFS, FSNM, LSDF, MIFS, SCFS, SFSS, and SFS-BLL) on using four metrics: \emph{Hamming Loss}, \emph{Ranking Loss}, \emph{Macro Average}, and \emph{Micro Average}.
    }
	\label{fig3} 
\end{figure*}

\subsection{Experiment Setup and Performance Comparison}

In our experiments, ML-$k$NN is applied with the selected features for multi-label learning to compare the performance of various feature selection methods. For fairness, the parameters of ML-$k$NN are kept consistent for each feature selection method, with the number of nearest neighbors $k$ set to $10$ and the smoothing parameter fixed at $1$. We evaluate classification accuracy using feature selection proportions ranging from 2\% to 30\% in 2\% increments. Each regularization parameter is individually tuned from the set $\left\{10^{-3}, 10^{-2}, 10^{-1}, 1, 10, 10^{2}, 10^{3}\right\}$, and the optimal parameter is chosen for each method in every experiment. For the proposed SGMFS, the sparsity parameter $\gamma$ is set to 1, and the label subspace dimension $lsd$ is set to half of the initial number of labels ($c/2$). To ensure reliability, ML-$k$NN is run 10 times with the best-selected parameters for each method and feature proportion. In each run, training and test samples are randomly selected (non-overlapping) from the dataset. The average values of each metric across the 10 runs are then calculated to evaluate the performance of each method.

Due to the space limitation, we use the 10\% labeled and 90\% unlabeled training samples to select features according to SGMFS and other semi-supervised methods (CSFS, LSDF, SFSS, and SCFS), and only the 10\% labeled training samples are used for supervised feature selection methods (MIFS and FSNM). Finally, the comparison results of $Hamming$ $Loss$, $Ranking$ $Loss$, $F_{Macro}$ and $F_{Micro}$ between SGMFS and other methods are displayed in Fig.~\ref{fig3}, from which we can observe that:

(\romannumeral1). SGMFS has a better performance than other feature selection methods in most of the percentage of selected features, especially for the selection proportion ranging from 15\% to 30\%. The primary reason is that when the number of features is too limited, the accurate sparse graph structure of labeled and unlabeled data points cannot be adequately captured. This scenario underscores the critical significance of the space consistency and structural coherence during semi-supervised multi-label training.

(\romannumeral2). Not only small datasets, SGMFS can also perform better than other semi-supervised methods in three large datasets in general, which illustrates that SGMFS is able to handle large incompletely labeled datasets with more class labels because label correlations are considered in SGMFS. Although the best $Macro$ $Average$ value of SGMFS is worse than that of supervised method FSNM in three large datasets, it is reasonable since few methods can simultaneously perform best on all the multi-label metrics.

(\romannumeral3). For small datasets, $Hamming$ $Loss$ and $Ranking$ $Loss$ reduce as the percentage of features increases, while $F_{Macro}$ and $F_{Micro}$ are the opposite. However, they have no significant variation with feature proportion ranging from 20\% to 30\% in large datasets, which is reasonable due to the enough training samples.

In conclusion, our proposed semi-supervised multi-label feature selection method has a significant competitiveness with some state-of-the-art feature selection algorithms.

\begin{figure*}[t]
	\setlength{\abovecaptionskip}{0pt}
	\centering
	\subfigure[$Hamming$ $Loss$]{
		\label{fig:subfig:a} 
		\includegraphics[width=0.48\columnwidth]{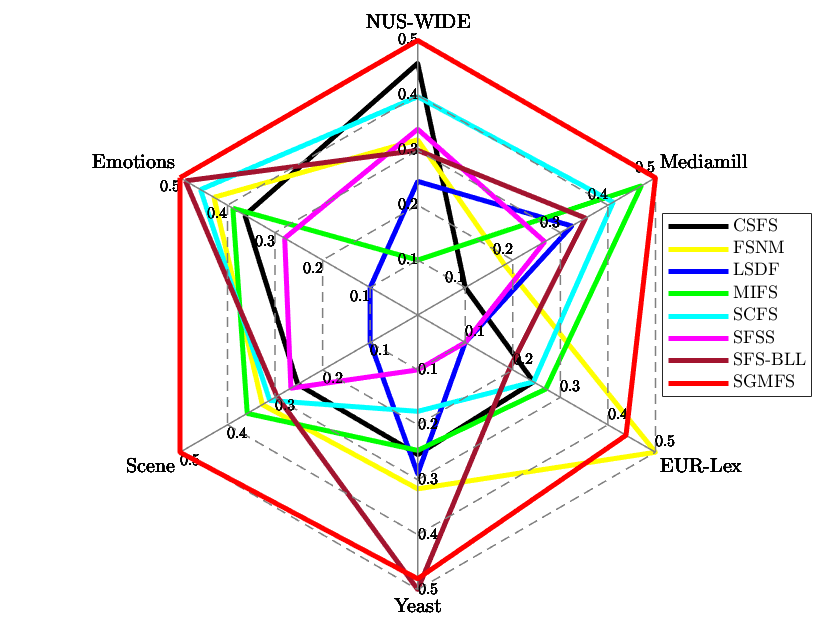}}
	\hspace{-3.1mm}
	\subfigure[$Ranking$ $Loss$]{
		\label{fig:subfig:b} 
		\includegraphics[width=0.48\columnwidth]{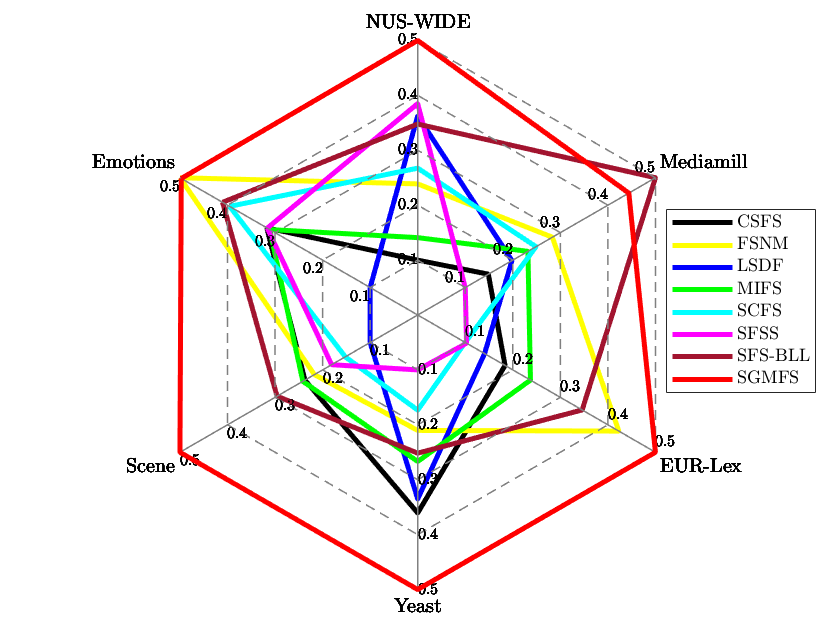}}
	\hspace{-3.1mm}
	\subfigure[$Macro$ $Average$]{
		\label{fig:subfig:c} 
		\includegraphics[width=0.48\columnwidth]{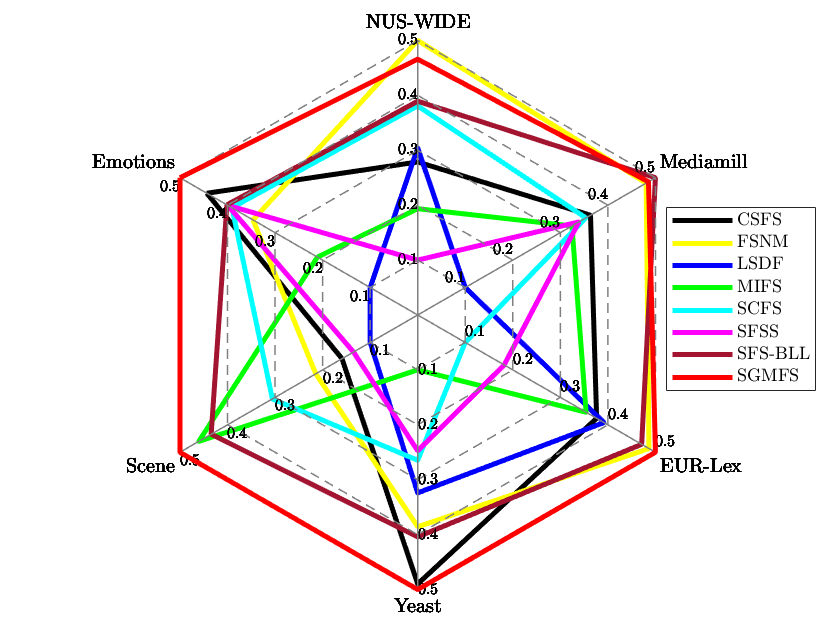}}
	\hspace{-3.1mm}
	\subfigure[$Micro$ $Average$]{
		\label{fig:subfig:d} 
		\includegraphics[width=0.48\columnwidth]{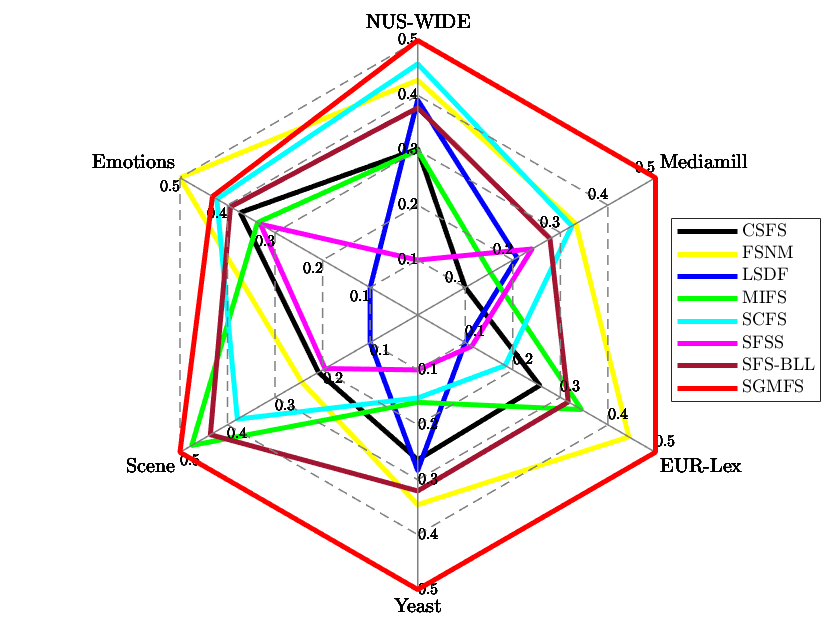}}
        \hspace{-3.1mm}
	\caption{
    Spider web diagrams comparing the stability of SGMFS (red hexagon) with seven state-of-the-art feature selection algorithms across seven multi-label datasets, evaluated using four metrics: $Hamming$ $Loss$, $Ranking$ $Loss$, $Macro$ $Average$, and $Micro$ $Average$.
    }
	\label{fig4} 
\end{figure*}

\begin{figure*}[p]
    \vspace{-2cm}
	\setlength{\abovecaptionskip}{0pt}
	\centering
	\subfigure[Predicted labels $F$]{
		\includegraphics[width=0.45\columnwidth]{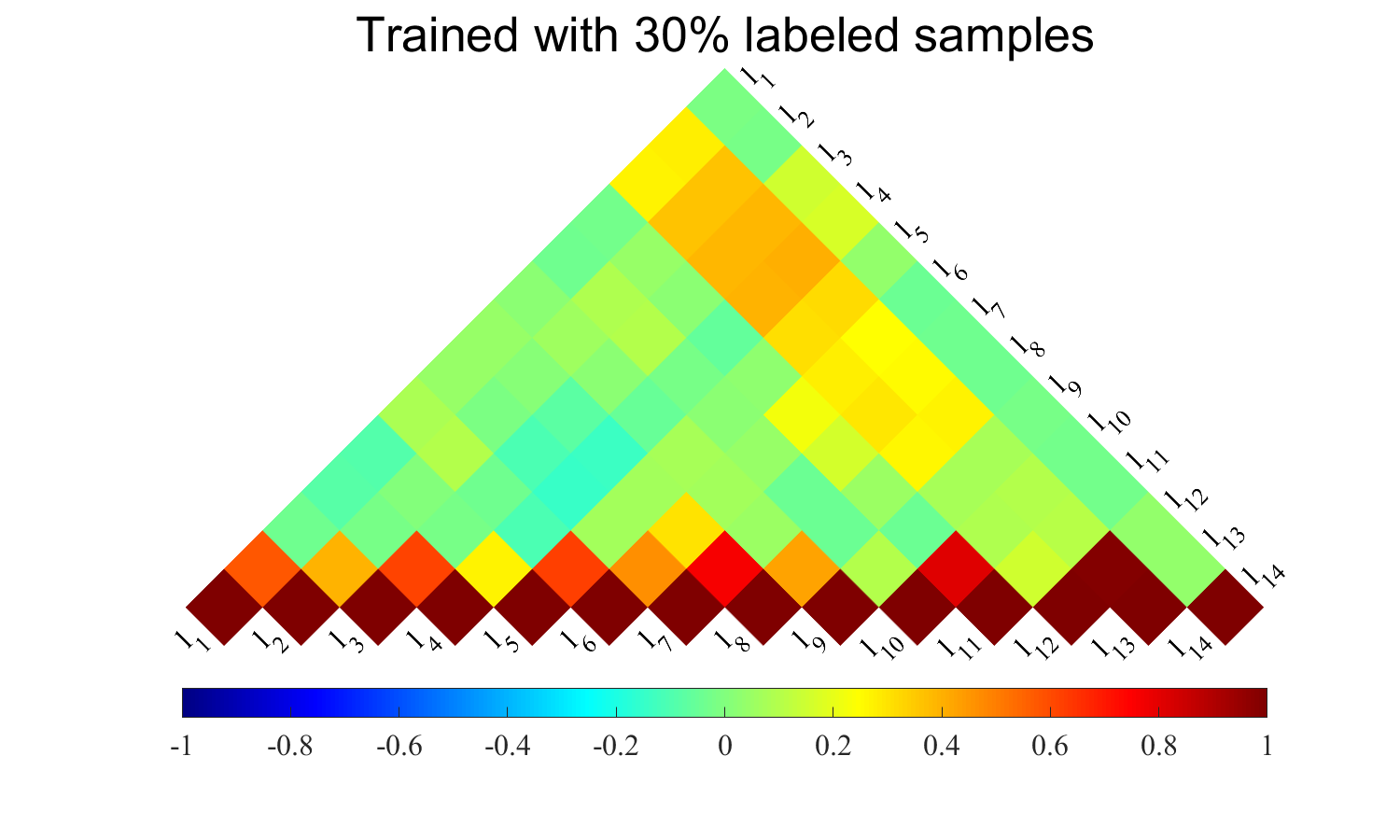}}
	\hspace{-3.1mm}
	\subfigure[Weight matrix $W$]{
		\label{fig:subfig:b} 
		\includegraphics[width=0.45\columnwidth]{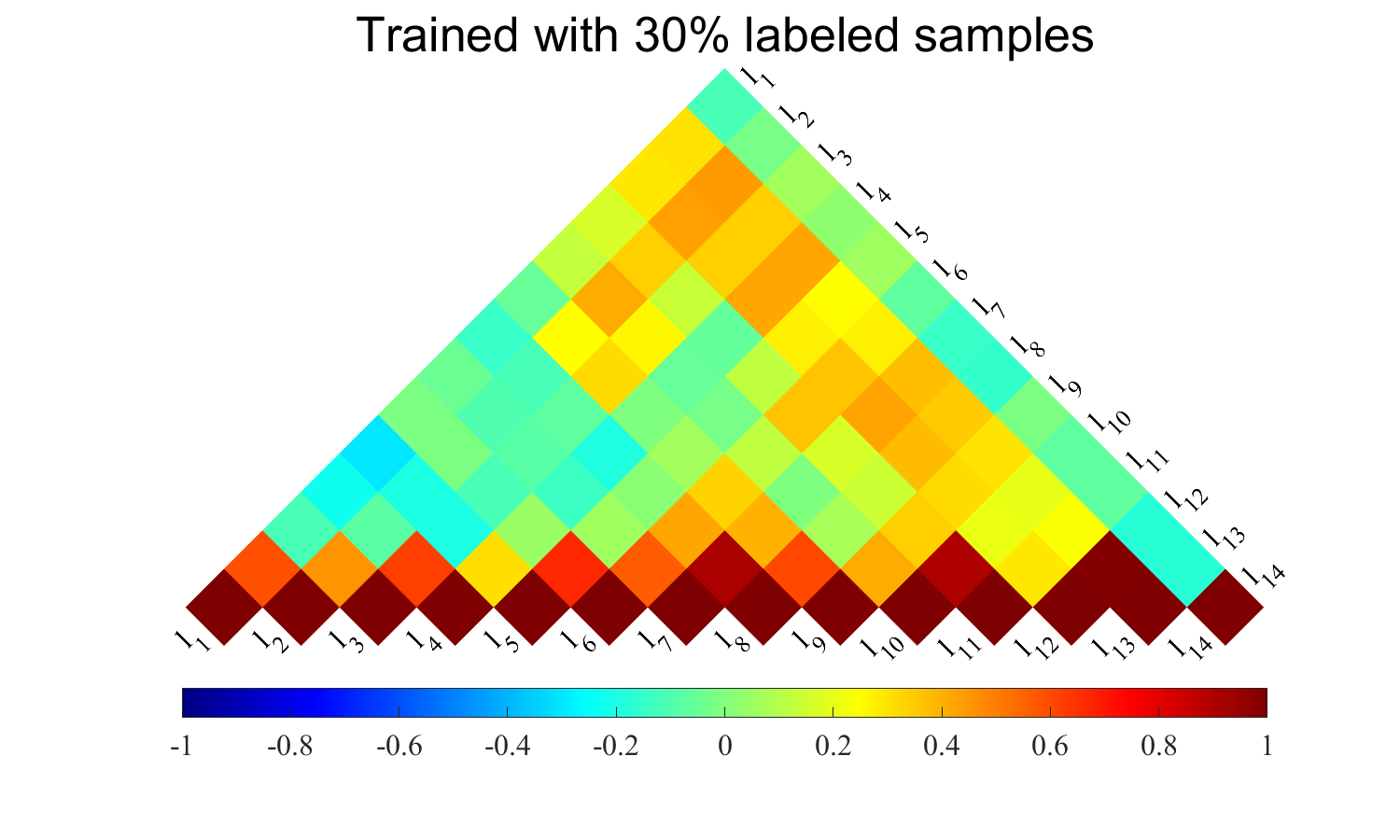}}
	\hspace{-3.1mm}
	\subfigure[GT labels $Y$ of labeled samples]{
		\label{fig:subfig:c} 
		\includegraphics[width=0.45\columnwidth]{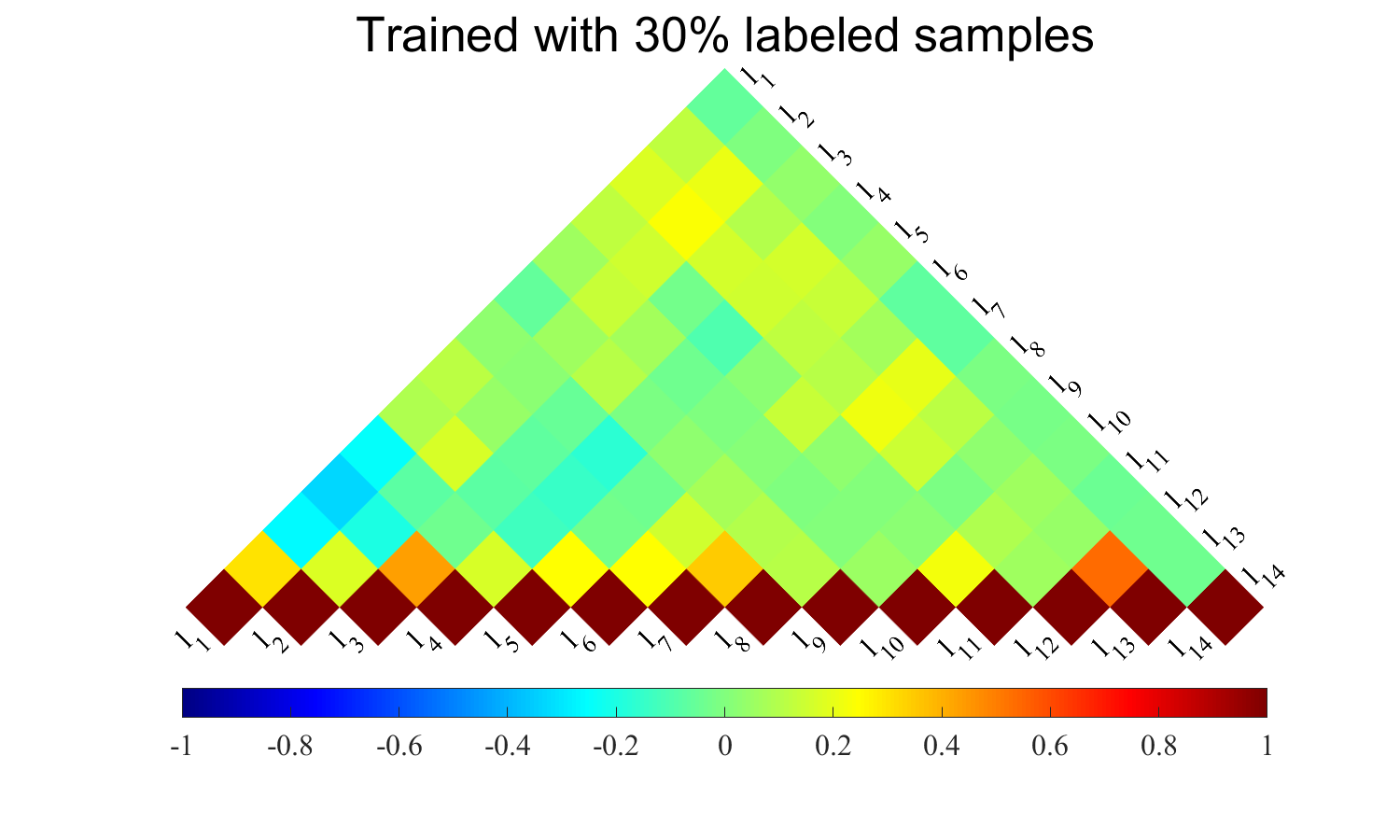}}
	\hspace{-3.1mm}
	\subfigure[Shared labels $Q$]{
		\label{fig:subfig:d} 
		\includegraphics[width=0.45\columnwidth]{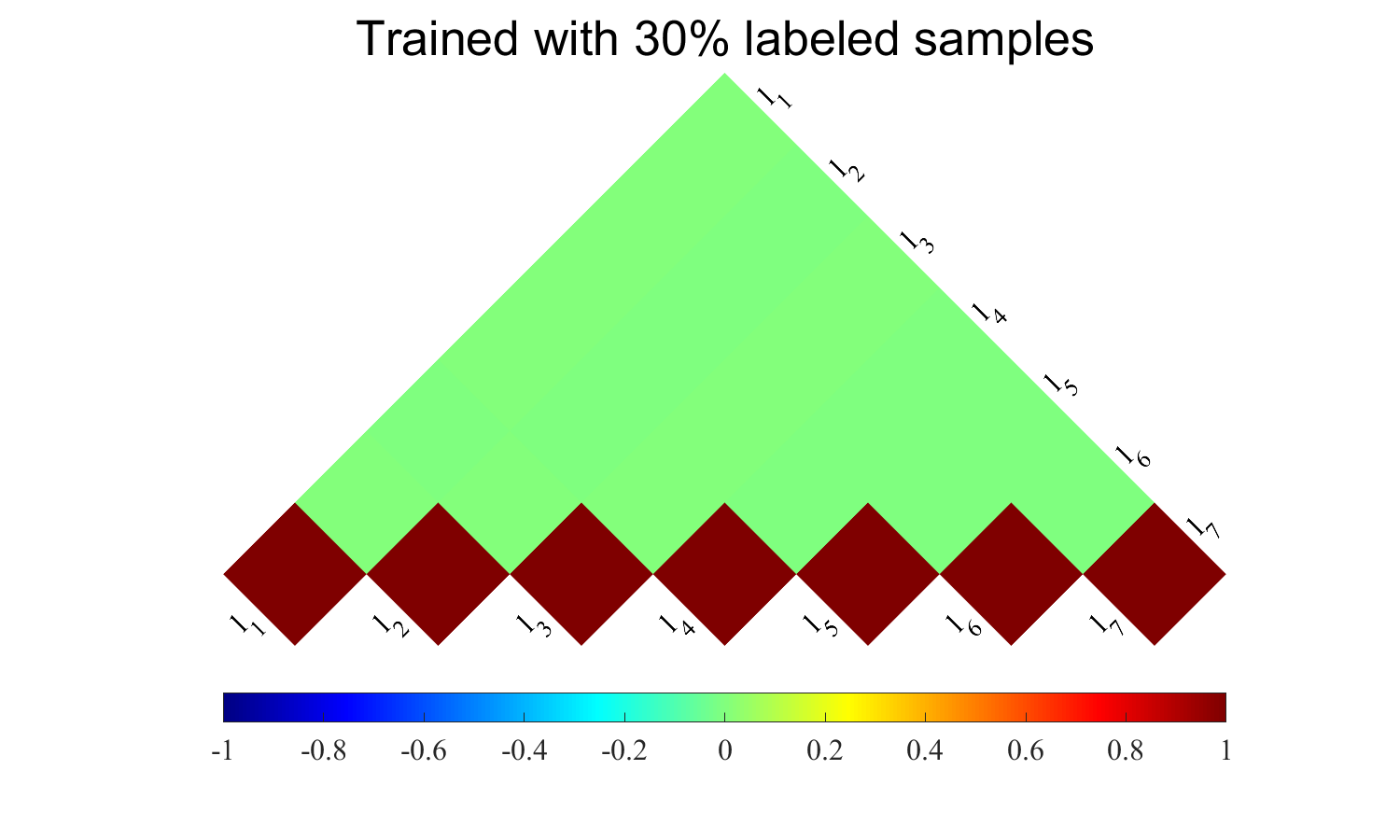}}
    \hspace{-3.1mm}
    \subfigure[Predicted labels $F$]{
		\label{fig:subfig:c} 
		\includegraphics[width=0.45\columnwidth]{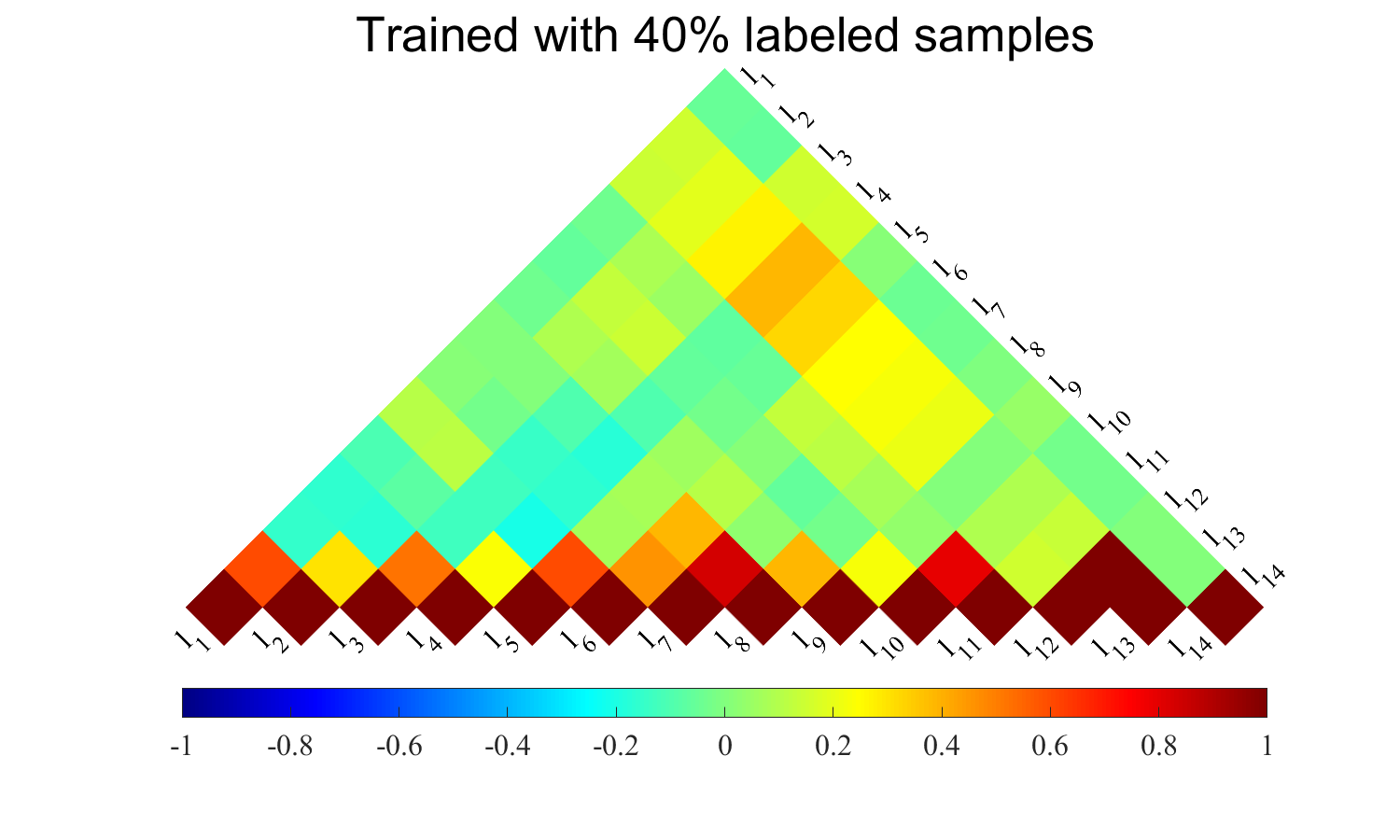}}
	\hspace{-3.1mm}
	\subfigure[Weight matrix $W$]{
		\label{fig:subfig:d} 
		\includegraphics[width=0.45\columnwidth]{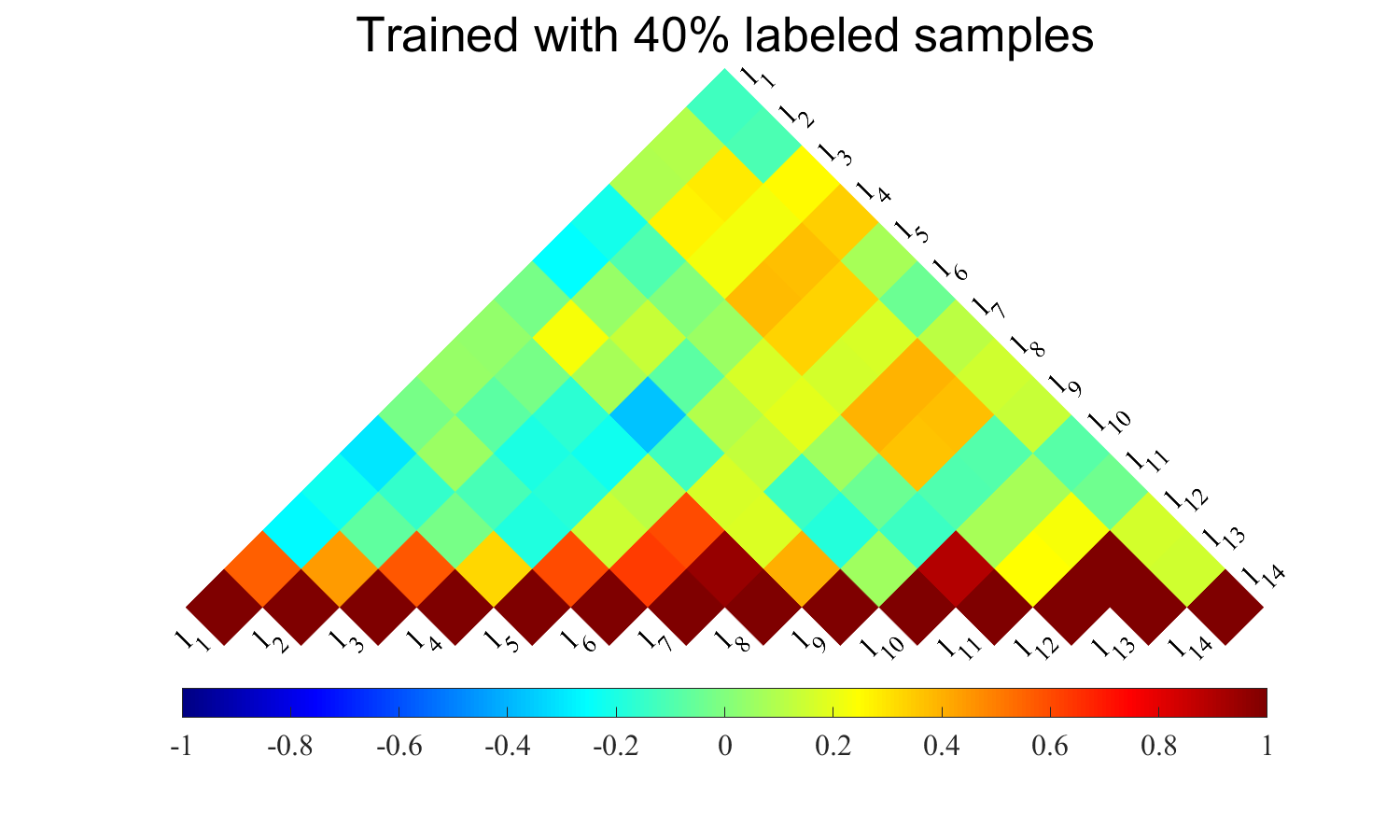}}
	\hspace{-3.1mm}
	\subfigure[GT labels $Y$ of labeled samples]{
		\label{fig:subfig:c} 
		\includegraphics[width=0.45\columnwidth]{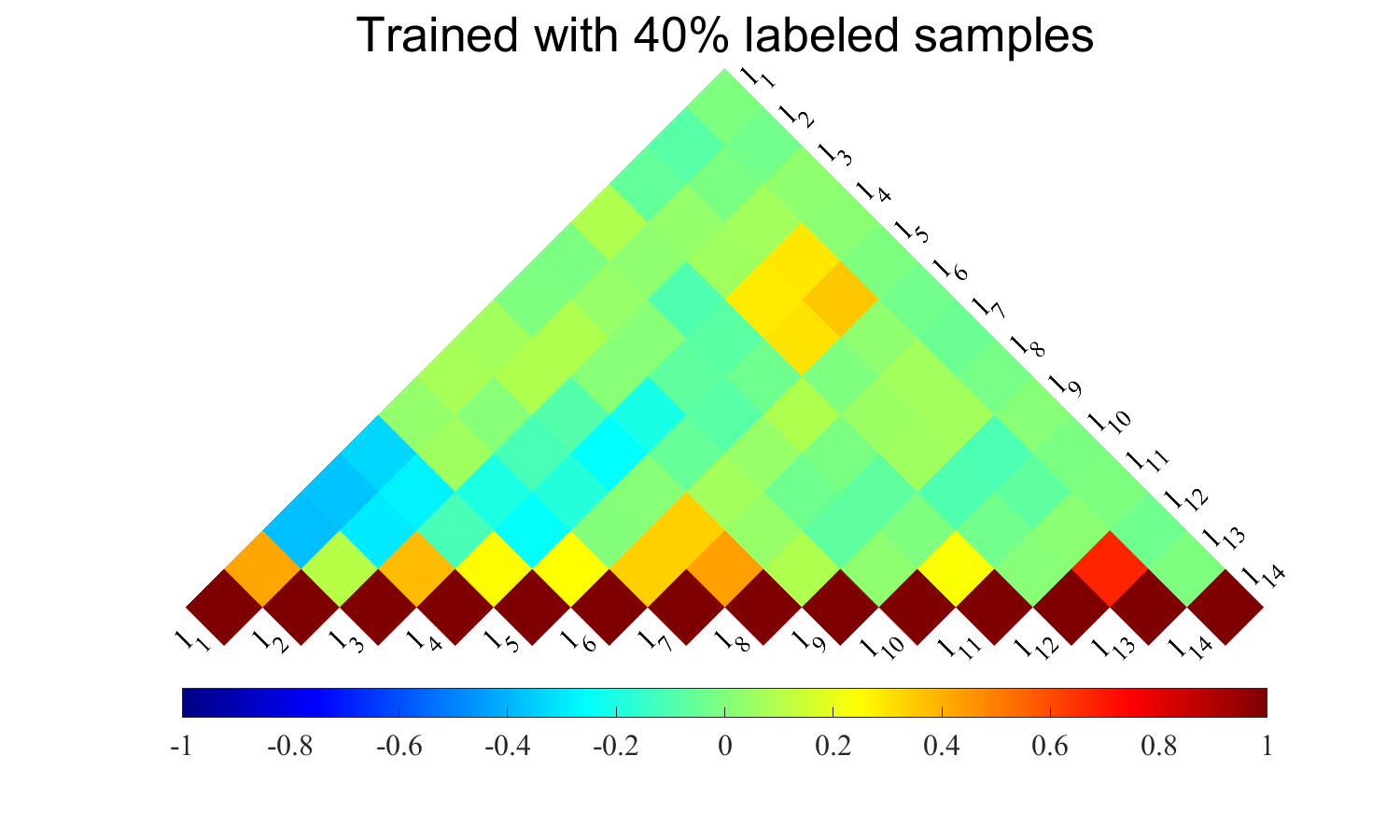}}
	\hspace{-3.1mm}
	\subfigure[Shared labels $Q$]{
		\label{fig:subfig:d} 
		\includegraphics[width=0.45\columnwidth]{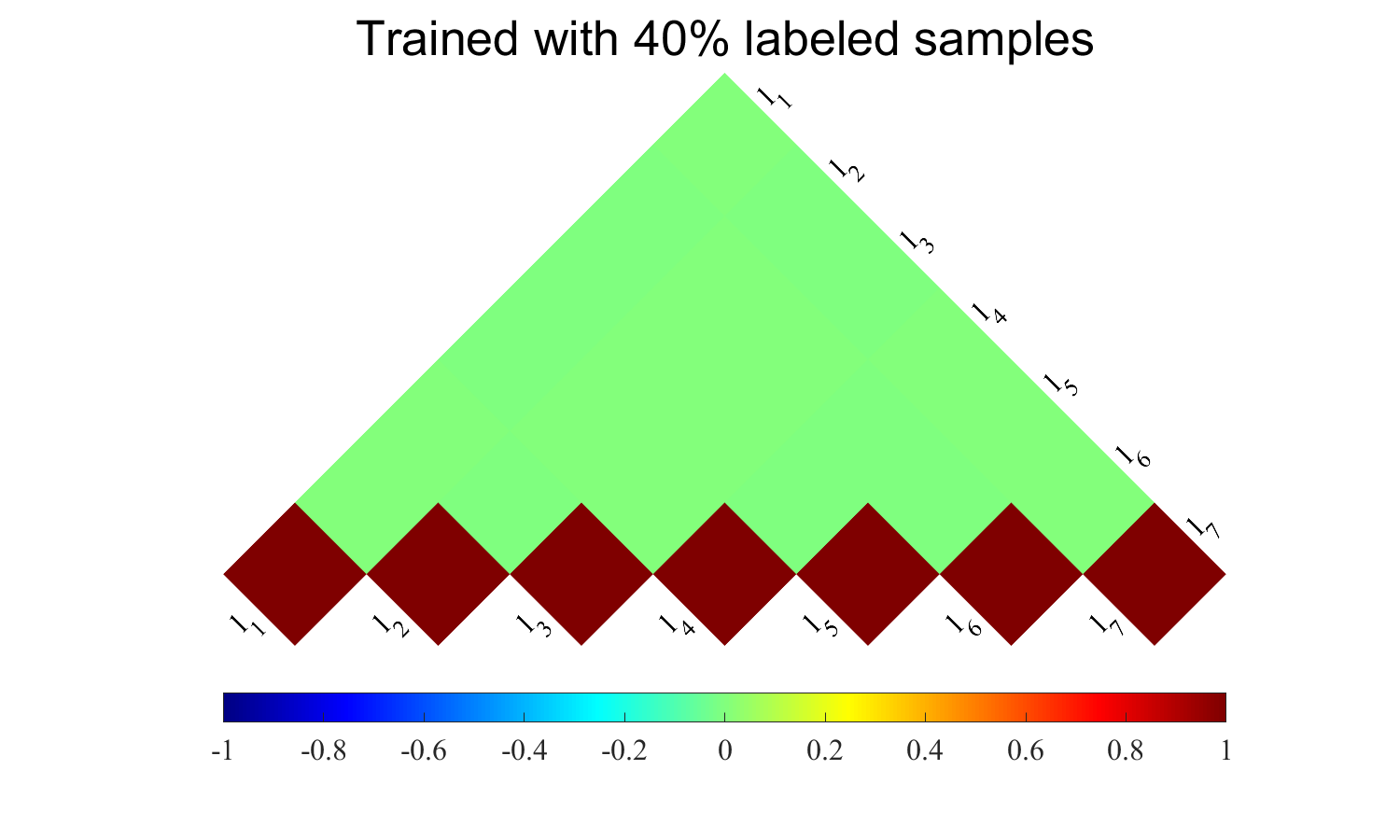}}
	\caption{The triangular heat maps about the distributions of label correlations learned by SGMFS on dataset Emotions with 30\% labeled samples (a, b, c, d) and 40\% labeled samples (e, f, g, h). The label correlations among the predicted label matrix $F$ (a,e), feature weight matrix $W$ (b,f), ground-truth labels $Y$ of labeled samples (c,g), and the shared labels $Q$ (d,h) are compared.
	}
	\label{fig6} 
\end{figure*}

\subsection{Stability Analysis}

To validate the stability of our proposed SGMFS, we extract the best results for $Hamming$ $Loss$, $Ranking$ $Loss$, $Macro$ $Average$, and $Micro$ $Average$ across feature selection methods, using 2\% to 30\% of selected features in each dataset with 20\% labeled samples. The top performance values of $Macro$ $Average$ and $Micro$ $Average$ are then normalized to a common range of $\left[0,0.5\right]$, as the ML-kNN classification results vary significantly across datasets, which could distort the comparison. Since lower values of $Hamming$ $Loss$ and $Ranking$ $Loss$ indicate better performance, we invert the best results of these two metrics before normalization. Finally, we present four spider web diagrams in Fig.\ref{fig4} to illustrate the stability of each feature selection algorithm, where the red polygon represents the performance of SGMFS. From Fig.\ref{fig4}, it is evident that:

(\romannumeral1). SGMFS is closest to standard hexagons on all four metrics, especially for $Ranking$ $Loss$ and $Micro$ $Average$, which illustrates that SGMFS can obtain a much more stable solution on small or large datasets than that of other state-of-the-art feature selection methods.

(\romannumeral2). The hexagons of SGMFS are the biggest, and the vast majority of the metrics of SGMFS can obtain the largest normalized value of 0.5 on most datasets except $Hamming$ $Loss$ and $Micro$ $Average$ on Emotions and $Macro$ $Average$ on NUS-WIDE, which means that SGMFS has the best performance compared with other methods to some extent.

\subsection{Consistency and correlation Analysis}
In order to measure the performance of SGMFS in maintaining feature-label consistency and learning label correlations in the semi-supervised scenario, we assess the label correlations among the predicted label matrix $F$, feature weight matrix $W$, ground-truth labels $Y$ of labeled samples, and the shared labels $Q$. In this regard, each vector in these matrices is normalized from the label dimension, then the correlation values between labels are computed by the cosine similarity of these normalized vectors.

As shown in Fig.~\ref{fig6}, we visualize the distributions of label correlations calculated by each matrix above, and these experiments are conducted on dataset Emotions with 30\% and 40\% labeled samples, from which we can observe that:

(\romannumeral1). The distributions of label correlations computed from $F$ and $Y$ are similar to a certain extent, which proves that SGMFS can learn label dependencies effectively in the semi-supervised scenario. 

(\romannumeral2). The distributions of label correlations computed from $F$ and $W$ are quite similar, which illustrates that the correlation information learned by SGMFS is preserved consistently from feature space and original predicted label space respectively. 

(\romannumeral3). According to the distribution of label correlations obtained by $Q$ in label subspace, each shared label is independent of each other, and this reveals that label subspace learning learns label correlation by reducing dimension and integrating each set of relevant labels to an individual shared label, which is independent to each other shared labels.

\begin{table*}[t]
	\centering
	\renewcommand\arraystretch{1.15}
	\setlength\tabcolsep{2.8pt}
	\footnotesize
	\caption{$Average$ $Precision$ comparison  when 15\% data points are labeled}
    \resizebox{\textwidth}{!}{
	\begin{tabular}{|c|c|c|c|c|c|c|c|c|}
		\hline  
		\multirow{2}*{Datasets}&\multicolumn{8}{|c|}{Algorithms}\\
		\cline{2-9}  
		~&CSFS&SMFS&SSLR&MIFS&SCFS&HSFSG&SFS-BLL&SGMFS (ours)\\
		\hline 
		Emotions&$0.746\pm0.013$&$0.720\pm0.026$&$0.769\pm0.025$&$0.756\pm0.014$&$0.753\pm0.010$&$0.640\pm0.055$&$0.771\pm0.013$&$\bm{0.773}\pm\bm{0.014}$\\
		\hline
		Scene&$0.841\pm0.012$&$0.706\pm0.005$&$0.806\pm0.146$&$0.801\pm0.017$&$0.853\pm0.013$&$0.671\pm0.046$&$0.853\pm0.022$&$\bm{0.856}\pm\bm{0.017}$\\
		\hline
		Yeast&$0.769\pm0.014$&$0.727\pm0.006$&$0.747\pm0.107$&$0.740\pm0.021$&$0.756\pm0.011$&$0.712\pm0.003$&$0.751\pm0.008$&$\bm{0.773}\pm\bm{0.014}$\\
		\hline
		EUR-Lex&$0.611\pm0.012$&$0.604\pm0.012$&$0.617\pm0.031$&$0.627\pm0.022$&$0.587\pm0.006$&$0.582\pm0.017$&$0.598\pm0.024$&$\bm{0.632}\pm\bm{0.033}$\\
		\hline        Mediamill&$0.742\pm0.023$&$0.724\pm0.008$&$0.735\pm0.026$&$0.731\pm0.019$&$\bm{0.744}\pm\bm{0.020}$&$0.740\pm0.033$&$0.736\pm0.015$&$\bm{0.744}\pm\bm{0.022}$\\
		\hline
		NUS-WIDE&$0.519\pm0.022$&$0.494\pm0.014$&$0.507\pm0.132$&$0.513\pm0.012$&$0.526\pm0.009$&$0.522\pm0.024$&$0.514\pm0.018$&$\bm{0.533}\pm\bm{0.102}$\\
		\hline  Plant&$0.485\pm0.011$&$0.516\pm0.006$&$0.482\pm0.045$&$0.524\pm0.029$&$0.519\pm0.015$&$0.449\pm0.025$&$0.467\pm0.020$&$\bm{0.528}\pm\bm{0.021}$\\
		\hline
	\end{tabular}
    }
	\begin{tablenotes}
		\item[1] The optimal results are marked in bold. The number after $\pm$ is the standard deviation.
	\end{tablenotes}
	\label{tab2}
\end{table*}
\begin{table*}[t]
	\centering
	\renewcommand\arraystretch{1.15}
	\setlength\tabcolsep{2.8pt}
	\footnotesize
	\caption{$Average$ $Precision$ comparison when 25\% data points are labeled}
    \resizebox{\textwidth}{!}{
	\begin{tabular}{|c|c|c|c|c|c|c|c|c|}
		\hline  
		\multirow{2}*{Datasets}&\multicolumn{8}{|c|}{Algorithms}\\
		\cline{2-9}  
		
		~&CSFS&SMFS&SSLR&MIFS&SCFS&HSFSG&SFS-BLL&SGMFS (ours)\\
		\hline 
		Emotions&$0.750\pm0.018$&$0.727\pm0.004$&$\bm{0.779}\pm\bm{0.052}$&$0.750\pm0.031$&$0.753\pm0.011$&$0.695\pm0.014$&$0.763\pm0.016$&$0.774\pm0.031$\\
		\hline
		Scene&$0.845\pm0.063$&$0.743\pm0.023$&$0.824\pm0.037$&$0.822\pm0.032$&$0.845\pm0.047$&$0.695\pm0.043$&$0.849\pm0.019$&$\bm{0.862}\pm\bm{0.045}$\\
		\hline
		Yeast&$0.761\pm0.022$&$0.725\pm0.002$&$0.762\pm0.071$&$0.769\pm0.089$&$0.758\pm0.133$&$0.716\pm0.004$&$0.755\pm0.020$&$\bm{0.774}\pm\bm{0.046}$\\
		\hline
		EUR-Lex&$0.610\pm0.185$&$0.617\pm0.007$&$0.621\pm0.053$&$0.627\pm0.065$&$0.579\pm0.125$&$0.604\pm0.013$&$0.606\pm0.011$&$\bm{0.633}\pm\bm{0.043}$\\
		\hline
		Mediamill&$0.744\pm0.075$&$0.732\pm0.012$&$0.736\pm0.035$&$0.726\pm0.021$&$0.731\pm0.023$&$0.742\pm0.029$&$0.739\pm0.013$&$\bm{0.757}\pm\bm{0.034}$\\
		\hline
		NUS-WIDE&$0.522\pm0.019$&$0.511\pm0.006$&$0.523\pm0.040$&$0.499\pm0.033$&$0.528\pm0.031$&$0.531\pm0.015$&$0.516\pm0.021$&$\bm{0.538}\pm\bm{0.117}$\\
        \hline
		Plant&$0.500\pm0.005$&$0.528\pm0.003$&$0.506\pm0.031$&$0.521\pm0.017$&$0.525\pm0.028$&$0.512\pm0.019$&$0.495\pm0.009$&$\bm{0.531}\pm\bm{0.086}$\\
		\hline
	\end{tabular}
    }
	\begin{tablenotes}
		\item[1] The optimal results are marked in bold. The number after $\pm$ is the standard deviation.
	\end{tablenotes}
	\label{tab3}
\end{table*}
\begin{table*}[!t]
	\centering
	\renewcommand\arraystretch{1.15}
	\setlength\tabcolsep{2.8pt}
	\footnotesize
	\caption{$Average$ $Precision$ comparison when 35\% data points are labeled}
    \resizebox{\textwidth}{!}{
	\begin{tabular}{|c|c|c|c|c|c|c|c|c|}
		\hline  
		\multirow{2}*{Datasets}&\multicolumn{8}{|c|}{Algorithms}\\
		\cline{2-9}  
		
		~&CSFS&SMFS&SSLR&MIFS&SCFS&HSFSG&SFS-BLL&SGMFS (ours)\\
		\hline 
		Emotions&$0.758\pm0.022$&$0.756\pm0.009$&$\bm{0.788}\pm\bm{0.029}$&$0.769\pm0.016$&$0.762\pm0.035$&$0.702\pm0.014$&$0.776\pm0.012$&$0.781\pm 0.049$\\
		\hline
		Scene&$0.851\pm0.034$&$0.744\pm0.015$&$0.822\pm0.154$&$0.844\pm0.144$&$0.842\pm0.015$&$0.735\pm0.021$&$0.856\pm0.013$&$\bm{0.859}\pm\bm{0.071}$\\
		\hline
		Yeast&$0.764\pm0.020$&$0.726\pm0.004$&$0.765\pm0.082$&$0.771\pm0.026$&$0.761\pm0.074$&$0.731\pm0.012$&$0.761\pm0.021$&$\bm{0.777}\pm\bm{0.034}$\\
		\hline
		EUR-Lex&$0.603\pm0.054$&$0621\pm0.006$&$0.630\pm0.032$&$0.611\pm0.201$&$0.601\pm0.052$&$0.597\pm0.016$&$0.618\pm0.019$&$\bm{0.642}\pm\bm{0.049}$\\
		\hline
		Mediamill&$0.745\pm0.146$&$0.748\pm0.012$&$0.752\pm0.052$&$0.725\pm0.068$&$0.749\pm0.109$&$0.738\pm0.025$&$0.741\pm0.017$&$\bm{0.758}\pm\bm{0.024}$\\
		\hline
		NUS-WIDE&$0.529\pm0.112$&$0.536\pm0.020$&$0.538\pm0.039$&$0.512\pm0.095$&$0.533\pm0.130$&$0.522\pm0.018$&$0.525\pm0.014$&$\bm{0.541}\pm\bm{0.106}$\\
		\hline
        Plant&$0.511\pm0.104$&$0.539\pm0.036$&$0.513\pm0.036$&$0.527\pm0.114$&$0.520\pm0.032$&$0.523\pm0.015$&$0.513\pm0.022$&$\bm{0.543}\pm\bm{0.074}$\\
		\hline
	\end{tabular}
    }
	\begin{tablenotes}
		\item[1] The optimal results are marked in bold. The number after $\pm$ is the standard deviation.
	\end{tablenotes}
	\label{tab4}
\end{table*}

\subsection{Performance Evaluation with different proportion of labeled data}
Since semi-supervised feature selection methods have different performances with different proportions of labeled data, we use 25\% and 35\% labeled data to compare SGMFS with other feature selection methods, which are shown in Tables~\ref{tab2},~\ref{tab3} and~\ref{tab4} respectively. 
It is noteworthy that in this subsection of experiments, we not only conducted validation using the first six benchmark datasets listed in Table~\ref{tab1}, but also specifically introduced the Plant dataset as a supplementary experimental subject. To thoroughly validate the effectiveness of our proposed method, we compared our method with extra three state-of-the-art semi-supervised multi-label feature selection approaches, including SMFS~\cite{sheikhpour2025robust}, SSLR~\cite{zhao2024sparse}, and HSFSG~\cite{sheikhpour2023hessian}. These extended experimental results can further demonstrate the enhanced effectiveness and robustness of our proposed method.

In order to demonstrate the results more accurately, we use another example-based $Average$ $Precision$ metric.
The larger $Average$ $Precision$ is, the better the corresponding methods are \cite{tang2014feature}.

In our experiments, we compute the average value of $Average$ $Precision$ metric with 2\% to 30\% selected features for each method first, which is calculated for 10 times, and then we obtain the final average value and standard deviation to evaluate the performances. According to  Tables~\ref{tab2},~\ref{tab3} and~\ref{tab4}, we can observe that:

(\romannumeral1). SGMFS has the max $Average$ $Precision$ on all three small datasets with all three different proportions of labeled data and achieves the best performance upon most occasions on the other three large datasets. Overall, SGMFS can persistently get better performances than other state-of-the-art semi-supervised methods.

(\romannumeral2). The $Average$ $Precision$ values of SGMFS have lesser improvements on some datasets such as Emotions, Scene, and NUS-WIDE, while $Average$ $Precision$ increases obviously in other methods with the proportion of labeled data improving. Therefore, SGMFS is superior to other methods when there are only a few training samples are labeled, which illustrates the competitive advantage of sparse graph learning for semi-supervised feature selection.

(\romannumeral3). The effect of SGMFS is gradually improved as the proportion of labeled samples increases, while some other semi-supervised methods cannot support this property. For example, the $Average$ $Precision$ value of SCFS with 35\% labeled samples is less than that with 25\% labeled samples.

In summary, SGMFS has a competitive superiority with different proportions of labeled data.

\begin{table}[t]
	\centering
	\renewcommand\arraystretch{1.15}
	\setlength\tabcolsep{2.8pt}
	\footnotesize
	\caption{Ablation study of $Average$ $Precision$ comparison ($\bm$Standard Deviation) when 35\% data points are labeled}
	\begin{tabular}{|c|c|c|c|}
		\hline  
		\multirow{2}*{Algorithms}&\multicolumn{3}{|c|}{Datasets}\\
		\cline{2-4}  
		~&Emotions&Scene&Yeast \\
		\hline 
		SGMFS\textbackslash$lsc$ &$0.732\pm0.013$&$0.803\pm0.022$&$0.713\pm0.086$\\
		\hline
		SGMFS\textbackslash$sc$ &$0.764\pm0.093$&$0.836\pm0.017$&$0.748\pm0.052$\\
		\hline
		SGMFS\textbackslash$lc$ &$0.771\pm0.042$&$0.828\pm0.109$&$0.762\pm0.074$\\
		\hline
		SGMFS &$\bm{0.781}\pm\bm{0.049}$&$\bm{0.859}\pm\bm{0.071}$&$\bm{0.777}\pm\bm{0.034}$\\
		\hline
	\end{tabular}
	\begin{tablenotes}
		\item[1] The best results are highlighted in bold.
	\end{tablenotes}
	\label{tab5}
\end{table}

\subsection{Ablation Study}
To further investigate whether the effectiveness of our methods is derived from label correlation learning and space consistency preserving, we compare the impact of different components of the proposed SGMFS in this experiment. To be specific, we compare the $Average$ $Precision$ performances of SGMFS\textbackslash$lsc$, SGMFS\textbackslash$lc$, SGMFS\textbackslash$sc$ and SGMFS, where SGMFS\textbackslash$lsc$ denotes training our SGMFS model without considering both the label correlations and space consistency, and SGMFS\textbackslash$lc$ and SGMFS\textbackslash$sc$ mean training SGMFS without label correlations learning and without space consistency preserving respectively. In other words, we set $\alpha = \beta = 0$ and $\Omega\left(\textbf{W},\textbf{M}\right)=\|\textbf{W}\|_{2,1}$ in SGMFS\textbackslash$lsc$, which is the same as Eq.~(\ref{eq4}). $\beta = 0$ and $\Omega\left(\textbf{W},\textbf{M}\right)=\|\textbf{W}\|_{2,1}$ in SGMFS\textbackslash$sc$ and  $\alpha = 0$ in SGMFS\textbackslash$lc$. The results are recorded in Table~\ref{tab5}.

According to Table~\ref{tab5}, we can observe that: SGMFS\textbackslash$lc$ and SGMFS\textbackslash$sc$ perform better than SGMFS\textbackslash$lsc$, yet perform worse than SGMFS, which proves the effectiveness of label correlations learning and space consistency preserving during training with Algorithm~\ref{Alg1}.

\begin{figure*}[p]
    \vspace{-2.5cm}
	\setlength{\abovecaptionskip}{0pt}
	\centering
	\subfigure[Step 25 iteration on Emotions]{
		\label{fig:subfig:a} 
		\includegraphics[width=0.42\columnwidth]{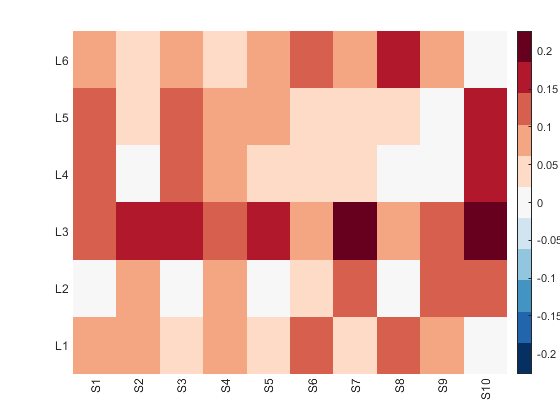}}
	\hspace{3.1mm}
	\subfigure[Step 50 iteration on Emotions]{
		\label{fig:subfig:b} 
		\includegraphics[width=0.42\columnwidth]{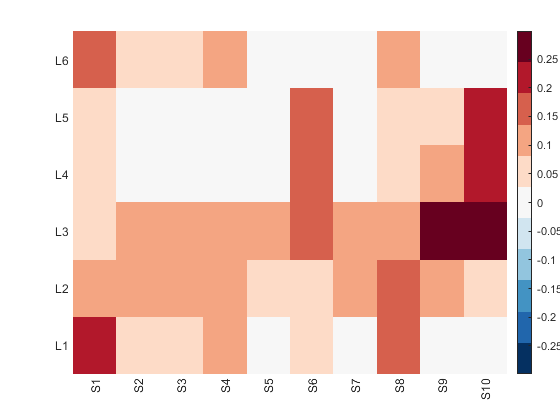}}
	\hspace{3.1mm}
	\subfigure[Step 75 iteration on Emotions]{
		\label{fig:subfig:c} 
		\includegraphics[width=0.42\columnwidth]{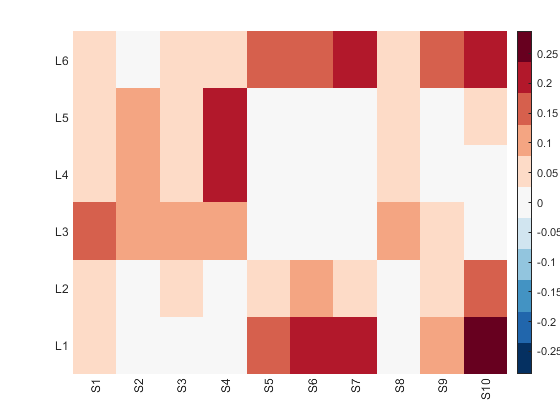}}
	\hspace{3.1mm}
	\subfigure[Step 100 iteration on Emotions]{
		\label{fig:subfig:d} 
		\includegraphics[width=0.42\columnwidth]{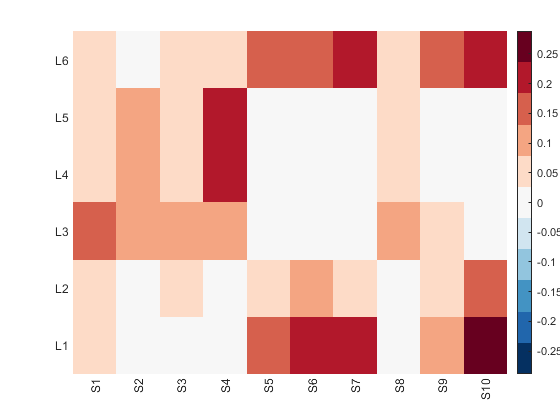}}
    \hspace{3.1mm}
	\subfigure[Step 25 iteration on Scene]{
		\label{fig:subfig:a} 
		\includegraphics[width=0.42\columnwidth]{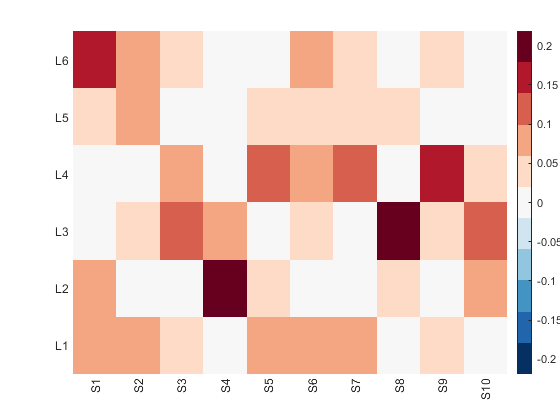}}
	\hspace{3.1mm}
	\subfigure[Step 50 iteration on Scene]{
		\label{fig:subfig:b} 
		\includegraphics[width=0.42\columnwidth]{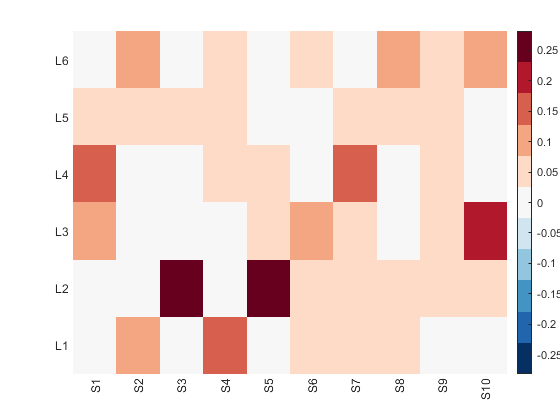}}
	\hspace{3.1mm}
	\subfigure[Step 75 iteration on Scene]{
		\label{fig:subfig:c} 
		\includegraphics[width=0.42\columnwidth]{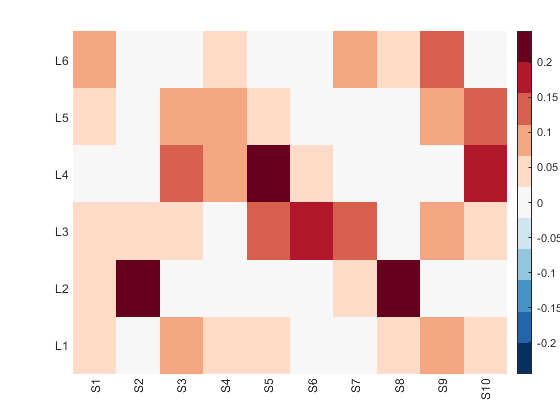}}
	\hspace{3.1mm}
	\subfigure[Step 100 iteration on Scene]{
		\label{fig:subfig:d} 
		\includegraphics[width=0.42\columnwidth]{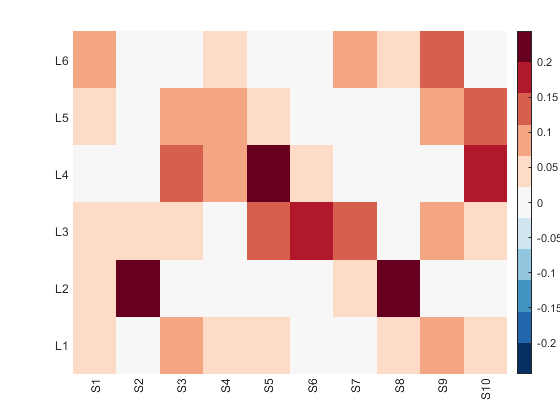}}
	\caption{The visual representation of the predicted label matrix for 10 unlabeled training samples throughout the iterative process of Algorithm~\ref{Alg1} on the Emotions dataset (a-d) and the Scene dataset (e-h).}
	\label{fig9} 
\end{figure*}

\subsection{Visual Analysis of Soft-labels}
Figure \ref{fig9} illustrates the visual representation of predicted label matrices for 10 randomly selected unlabeled training samples throughout the iterative process detailed in Algorithm~\ref{Alg1}, implemented on the Emotions and Scene datasets, respectively. Each cell in the predicted label matrix is displayed as a 6$\times$10 pixel block, where the horizontal axis indicates different samples and the vertical axis represents various labels. Darker colors denote higher weights, while white corresponds to 0 weight. All parameters are set to median values within the tuned range. It is noticeable that in the 25th iteration, the predicted label matrix contains several mixed elements. Subsequent iterations lead to a gradual increase in white cells, indicating a clearer structure. By the 75th iteration, the heatmap stabilizes, signifying convergence.

\begin{figure}[t]
	\setlength{\abovecaptionskip}{0pt}
	\centering
	\subfigure[Emotions ($\alpha$)]{
		\label{fig:subfig:a} 
		\includegraphics[width=0.48\columnwidth]{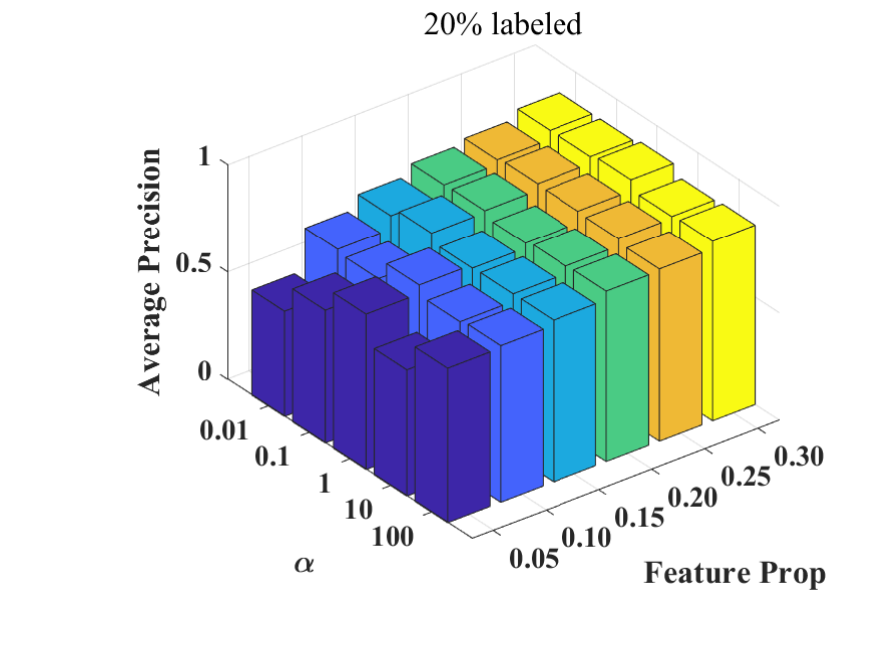}}
	\hspace{-2.1mm}
	\subfigure[Emotions ($\beta$)]{
		\label{fig:subfig:a} 
		\includegraphics[width=0.48\columnwidth]{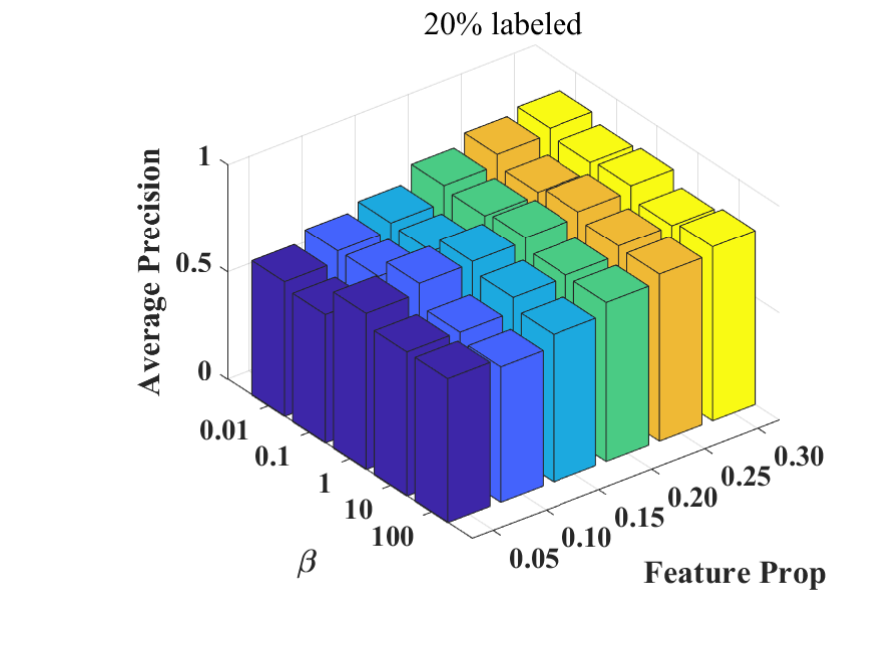}}
	\hspace{-2.1mm}
	\hspace{-2.1mm}
	\subfigure[Emotions ($\gamma$)]{
		\label{fig:subfig:a} 
		\includegraphics[width=0.48\columnwidth]{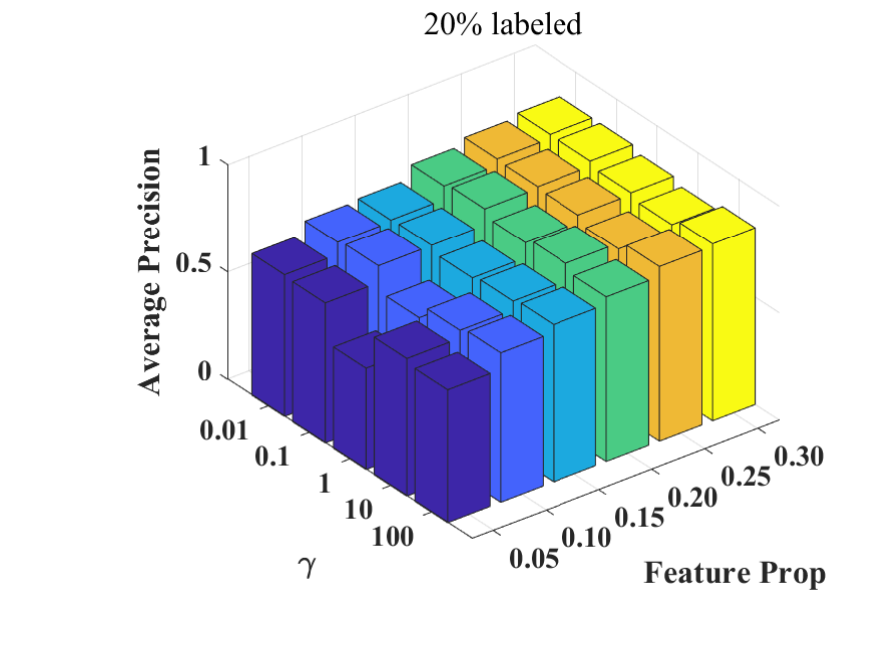}}
	\hspace{-2.1mm}
	\subfigure[Emotions ($lsd$)]{
		\label{fig:subfig:a} 
		\includegraphics[width=0.48\columnwidth]{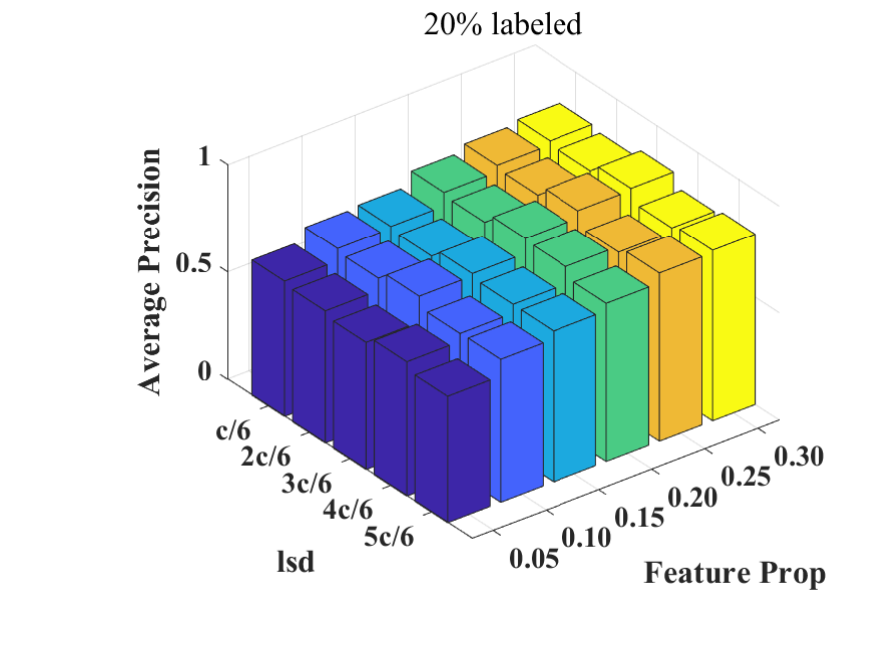}}
	\hspace{-2.1mm}
	\hspace{-2.1mm}
	\caption{Sensitivity analysis about parameters $\alpha$, $\beta$, $\gamma$ and $lsd$ on dataset Emotions (20\% labeled samples).} 
	\label{fig5} 
\end{figure}

\subsection{Sensitivity Analysis about Parameters}

In this subsection, we study the parameter sensitivity of SGMFS through tuning regularization parameters $\alpha$, $\beta$, $\gamma$ among $\left\{0.01,0.1,1,10,100\right\}$ and the dimension of shared label subspace $lsd$ in $\{c/6,2c/6,3c/6,4c/6,5c/6\}$ respectively, where $c$ is the dimension of original label space. For each parameter, we set other regularization parameters being 1 and $lsd$ being $c/2$, then the sensitivity analysis diagrams are drawn to show the variation of $Average$ $Precision$ with feature proportion changed in $\left\{0.05,0.1,0.15,0.20,0.25,0.30\right\}$, which are demonstrated in Fig.~\ref{fig5}. Due to the space limitation, we show the results on datasets Emotions, and results on Scene are shown in \ref{apen4}.

According to Fig.~\ref{fig5}, we can observe that: (\romannumeral1). Regularization parameters $\alpha$, $\beta$ and $\gamma$ are not sensitive in Algorithm~\ref{Alg1}, especially with a higher proportion of selected features, since the percentages of 5\%-10\% for feature selection may lead to some key feature information of datasets missing. (\romannumeral2). From (d) of Fig.~\ref{fig5}, $lsd$ could influence the performance of SGMFS to some extent, which illustrates the non-trivial role of shared label subspace learning and label correlations in multi-label feature selection. And the $Average$ $Precision$ of SGMFS with $lsd=c/2$ is slightly higher than that with $lsd=c/6$ and $lsd=5c/6$. That is the mezzo value is propitious to the dimension of shared label subspace in SGMFS.

In general, the four parameters are not sensitive to SGMFS, and the best dimension for label subspace to explore label correlations around $c/2$, which is different in different datasets. The same conclusions can be drawn from the other four datasets.

\begin{figure}[t]
	\setlength{\abovecaptionskip}{0pt}
	\centering
	\subfigure[20\% labeled on Yeast]{
		\label{fig:subfig:a} 
		\includegraphics[width=0.42\columnwidth]{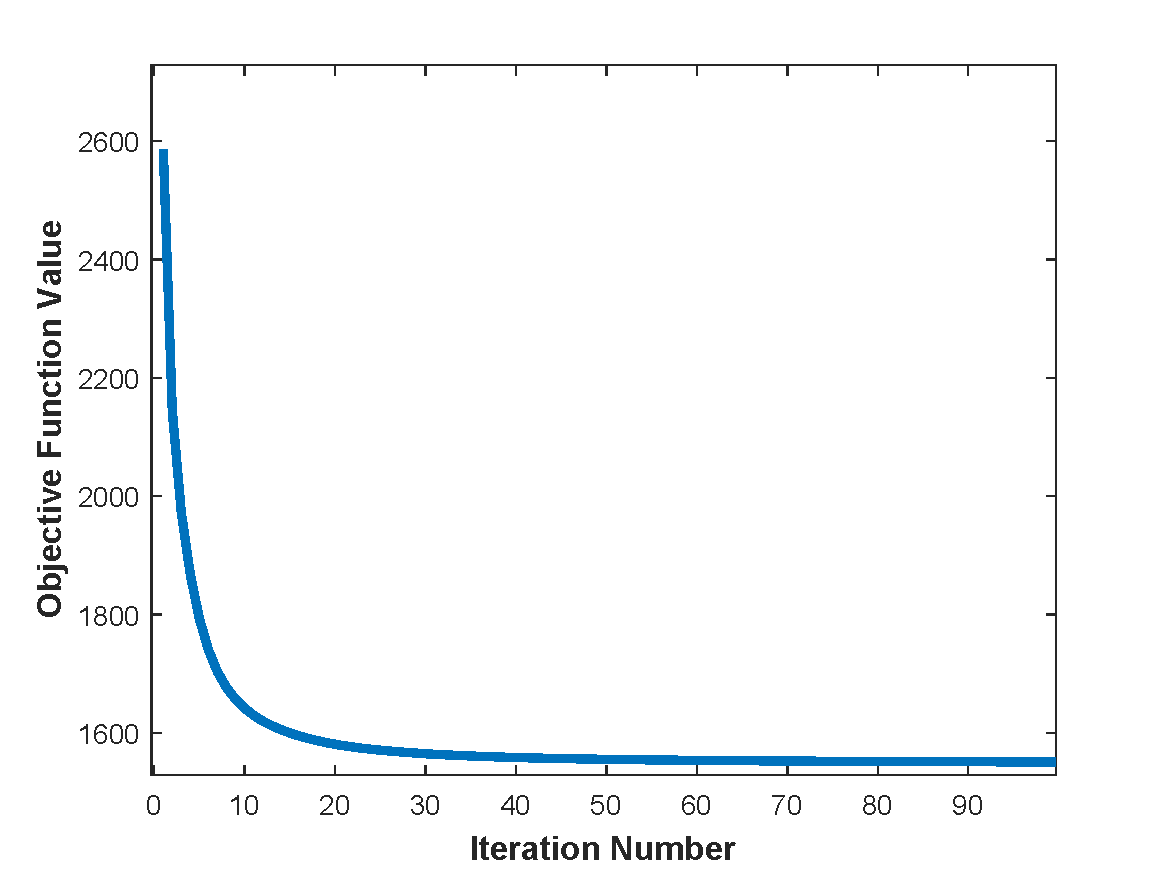}}
	\subfigure[40\% labeled on Yeast]{
		\label{fig:subfig:b} 
		\includegraphics[width=0.42\columnwidth]{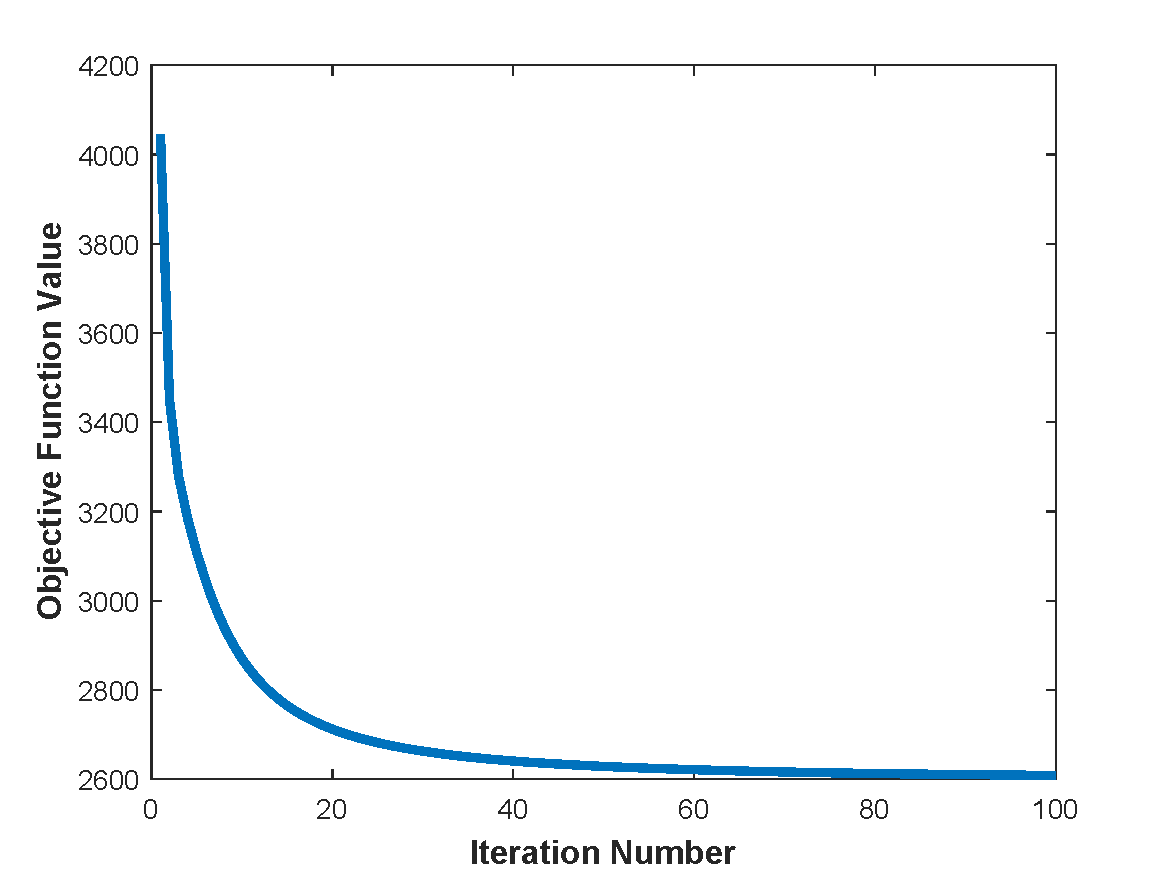}}
	\caption{The convergence curves on dataset Yeast with (a) 20\% labeled data and (b) 40\% labeled data.}
	\label{fig7} 
\end{figure}

\subsection{Convergence Study}

To demonstrate the rapid convergence of SGMFS, we calculate the objective function value from Eq.(\ref{eq11}) using the Yeast dataset as an example, setting $\alpha=1$, $\beta=1$, $\gamma=1$, and $lsd=c/2$. The convergence curves are shown in Fig.\ref{fig7} for (a) 20\% labeled data and (b) 40\% labeled data. The results indicate that the objective function converges in approximately 20 iterations. Similar findings are observed across other datasets with varying proportions of labeled samples, providing visual confirmation of Theorem~\ref{theo2}.

\section{Conclusion}
\label{S6}

This paper integrates semi-supervised feature selection, multi-label learning, and sparse graph learning into a unified framework, proposing an efficient method named SGMFS. The optimization algorithm of SGMFS exhibits rapid convergence, enabling it to handle both small and large datasets effectively. Compared to other semi-supervised methods, SGMFS offers two key advantages for performance enhancement:
(\romannumeral1) It introduces a novel shared label subspace learning approach, which effectively captures label correlations in semi-supervised settings.
(\romannumeral2) It ensures consistency between the label and feature spaces by adaptively identifying sparse neighbors for each sample during soft multi-label learning, without relying on inaccurate prior knowledge, thereby enhancing multi-label feature selection performance.
Comprehensive experiments demonstrate the effectiveness and robustness of SGMFS against seven state-of-the-art methods in supervised and semi-supervised multi-label feature selection. Future research will focus on developing semi-supervised methods capable of automatically determining optimal feature selection proportions and addressing multi-view scenarios.

%



\appendix

\section{Complexity Analysis of Computing $\textbf{W}$}
\label{apen1}
According to  Algorithm~\ref{Alg1}, the optimal $\textbf{W}$ is obtained by:
\begin{equation*}
	\begin{array}{l}
		2\left(\textbf{S}-\alpha \textbf{XQQ}^{T} \textbf{X}^{T}\right) \textbf{W}=2 \textbf{XHF} \\
		\Rightarrow \textbf{W}=\left(\textbf{S}-\alpha \textbf{XQQ}^{T} \textbf{X}^{T}\right)^{-1} \textbf{XHF}\quad\quad
	\end{array}
\end{equation*}
where $\textbf{S}=\textbf{XHX}^{T}+\gamma \textbf{D}+\alpha \textbf{X} \textbf{X}^{T}$. According to Woodbury Matrix Identity~\cite{woodbury1950inverting}, we have:
\begin{equation*}
	\begin{array}{l}
		\left(\textbf{S}-\alpha \textbf{XQQ}^{T} \textbf{X}^{T}\right)^{-1} 
		=\left[\gamma \textbf{D}+\textbf{X}\left(\textbf{H}+\alpha\textbf{I}-\alpha\textbf{QQ}^{T}\textbf{X}^{T}\right)\right]^{-1} \\
		= \frac{1}{\gamma} \textbf{D}^{-1}\!\!\!-\!\!\frac{1}{\gamma^2}\textbf{D}^{-1} \textbf{X}\left[\left(\textbf{H}\!+\!\alpha \textbf{I}\!-\!\alpha \textbf{Q} \textbf{Q}^{T}\right)^{-1}\!\!\!\!+\!\!\frac{1}{\gamma} \textbf{X}^{T} \textbf{D}^{-1} \textbf{X}\right]^{-1}\!\!\!\!\! \textbf{X}^{T} \textbf{D}^{-1}
	\end{array}
\end{equation*}
where $\textbf{D}$ is a diagonal matrix with the size of $d\times d$, thus the complexity of computing $\textbf{D}^{-1}$ is $\mathcal{O}\left(d\right)$. Note that $\left(\textbf{H}+\alpha \textbf{I}-\alpha \textbf{Q} \textbf{Q}^{T}\right)\in\mathcal{R}^{n\times n}$, so calculating the inverse of matrix for it requires $\mathcal{O}\left(d^3\right)$.

Therefore, if $n> d$, then we can consume less computing by updating $\textbf{W}$ with $\left(\textbf{S}-\alpha \textbf{XQQ}^{T} \textbf{X}^{T}\right)^{-1} \textbf{XHF}$, which requires $\mathcal{O}\left(nd^2\right)$ (The process of inverting the matrix requires $\mathcal{O}\left(d^3\right)$, and the process of matrix multiplication requires $\mathcal{O}\left(nd^2\right)$, while $nd^2>d^3$). If $n < d$, then we can consume less computing by updating $\textbf{W}$ with the aforementioned method derived from Woodbury Matrix Identity, which requires $\mathcal{O}\left(dn^2\right)$ (The process of inverting the matrix requires $\mathcal{O}\left(n^3\right)$, and the process of matrix multiplication requires $\mathcal{O}\left(dn^2\right)$, while $dn^2>n^3$). 

Finally, we can conclude that the computational complexity of calculating the inverse of matrices for $\textbf{W}$ is $\mathcal{O}\left(nd*\text{min}\left\{n,d\right\}\right)$.

\section{The Proof of Theorem~\ref{theo1}}
\label{apen2}
\noindent \textbf{Theorem~\ref{theo1}}.
The objective function defined in Eq.~(\ref{eq22}) in the main text satisfy $J\left(\textbf{M}_{t}\right) \geq J\left(\textbf{M}_{t+1}\right)$ under the update rule of Eq.~(\ref{eq23}) in the main text.
\begin{proof}
	We derive inequations for each term of Eq.~(\ref{eq22}) in the main text. Following the method in \cite{ding2008convex}, we can set $\textbf{A} \leftarrow \textbf{I}, \textbf{B} \leftarrow \textbf{A}^{+}$ and obtain:
	\begin{equation}
		Tr\left(\textbf{M}^{T} \textbf{A}^{+} \textbf{M}\right) \leq \sum_{i k} \frac{\left(\textbf{A}^{+} \textbf{M}^{\prime}\right)_{i k} \textbf{M}_{i k}^{2}}{\textbf{M}_{i k}^{\prime}}
		\label{eq30}
	\end{equation}
	Then we can get the following inequation according to $2ab \leq\left(a^{2}+b^{2}\right) $:
	\begin{equation}
		\sum_{i k} \textbf{B}_{i k}^{-} \frac{\textbf{M}_{i k}^{2}+\textbf{M}_{i k}^{\prime 2}}{2 \textbf{M}_{i k}^{\prime}}\ge \sum_{i k} \textbf{M}_{i k} \textbf{B}_{i k}^{-}=Tr\left(\textbf{M}^{T} \textbf{B}^{-}\right)
		\label{eq31}
	\end{equation}
	Based on $z \geq 1+\log z,z>0 $, we can obtain:
	\begin{equation}
		\frac{\textbf{M}_{i k}}{\textbf{M}_{i k}^{\prime}} \geq 1+\log \frac{\textbf{M}_{i k}}{\textbf{M}_{i k}^{\prime}}
		\label{eq32}
	\end{equation}
	and
	\begin{equation}
		\frac{\textbf{M}_{i k} \textbf{M}_{i \ell}}{\textbf{M}_{i k}^{\prime} \textbf{M}_{i \ell}^{\prime}} \geq 1+\log \frac{\textbf{M}_{i k} \textbf{M}_{i \ell}}{\textbf{M}_{i k}^{\prime} \textbf{M}_{i \ell}^{\prime}}
		\label{eq33}
	\end{equation}
	From which the following inequation holds:
	\begin{equation}
		Tr\left(\textbf{M}^{T} \textbf{B}^{+}\right)=\sum_{i k} \textbf{B}_{i k}^{+} \textbf{M}_{i k} \geq \sum_{i k} \textbf{B}_{i k}^{+} \textbf{M}_{i k}^{\prime}\left(1+\log \frac{\textbf{M}_{i k}}{\textbf{M}_{i k}^{\prime}}\right)
		\label{eq34}
	\end{equation}
	In a similar way, we have:
	\begin{equation}
		Tr\left(\textbf{MA}^{-} \textbf{M}^{T}\right) \geq \sum_{i k \ell} \textbf{A}_{k \ell}^{-} \textbf{M}_{i k}^{\prime} \textbf{M}_{i \ell}^{\prime}\left(1+\log \frac{\textbf{M}_{i k} \textbf{M}_{i \ell}}{\textbf{M}_{i k}^{\prime} \textbf{M}_{i \ell}^{\prime}}\right)
		\label{eq35}
	\end{equation}
	Since $\textbf{A}^{+},\textbf{A}^{-},\textbf{B}^{+},\textbf{B}^{-}$ are all nonnegative, $J(\textbf{M}) = \mathscr{P}\left(\textbf{M}, \textbf{M}\right) \leq \mathscr{P}\left(\textbf{M}, \textbf{M}^{\prime}\right)$ holds through the comparison of Eq.~(\ref{eq22}) and Eq.~(\ref{eq24}). Therefore, we can have $J\left(\textbf{M}_{t}\right)=\mathscr{P}\left(\textbf{M}_{t}, \textbf{M}_{t}\right)$ and $\mathscr{P}\left(\textbf{M}_{t+1}, \textbf{M}_{t}\right)\geq J\left(\textbf{M}_{t+1}\right)$. Note that $\mathscr{P}\left(\textbf{M}_{t}, \textbf{M}_{t}\right)\geq \mathscr{P}\left(\textbf{M}_{t+1}, \textbf{M}_{t}\right)$ according to Proposition~\ref{prop3}, thus $J\left(\textbf{M}_{t}\right) \geq J\left(\textbf{M}_{t+1}\right)$.
	
\end{proof}

\begin{figure}[t]
	\setlength{\abovecaptionskip}{0pt}
	\centering
	\subfigure[Scene ($\alpha$)]{
		\label{fig:subfig:b} 
		\includegraphics[width=0.48\columnwidth]{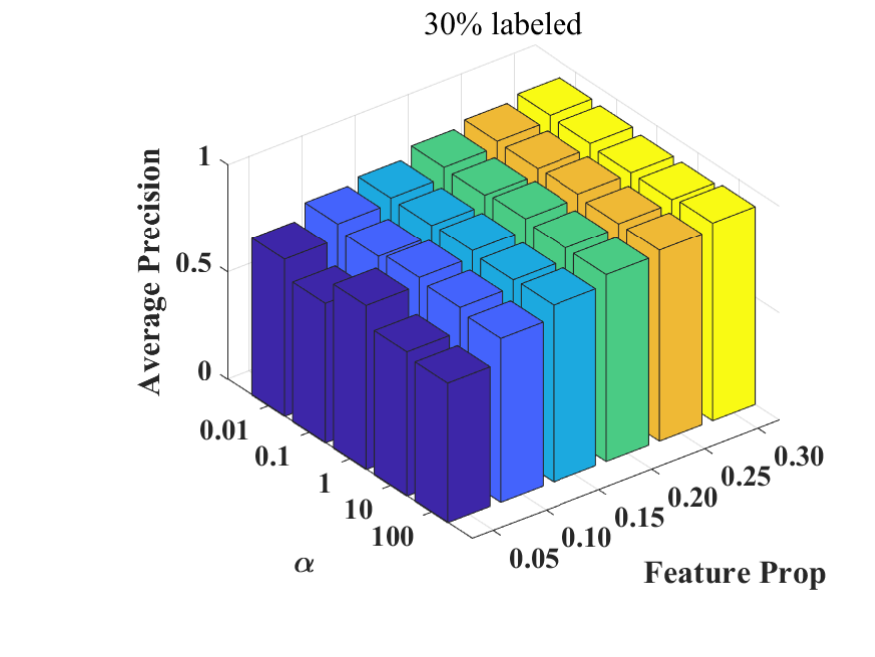}}
        \hspace{-3.1mm}
	\subfigure[Scene ($\beta$)]{
		\label{fig:subfig:a} 
		\includegraphics[width=0.48\columnwidth]{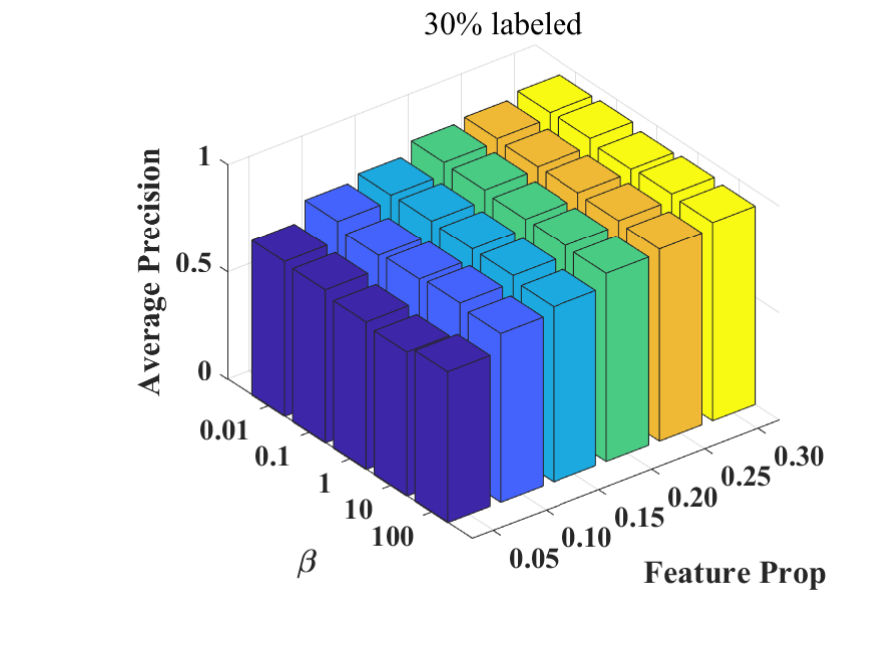}}
		\hspace{-3.1mm}
	\subfigure[Scene ($\gamma$)]{
		\label{fig:subfig:a} 
		\includegraphics[width=0.48\columnwidth]{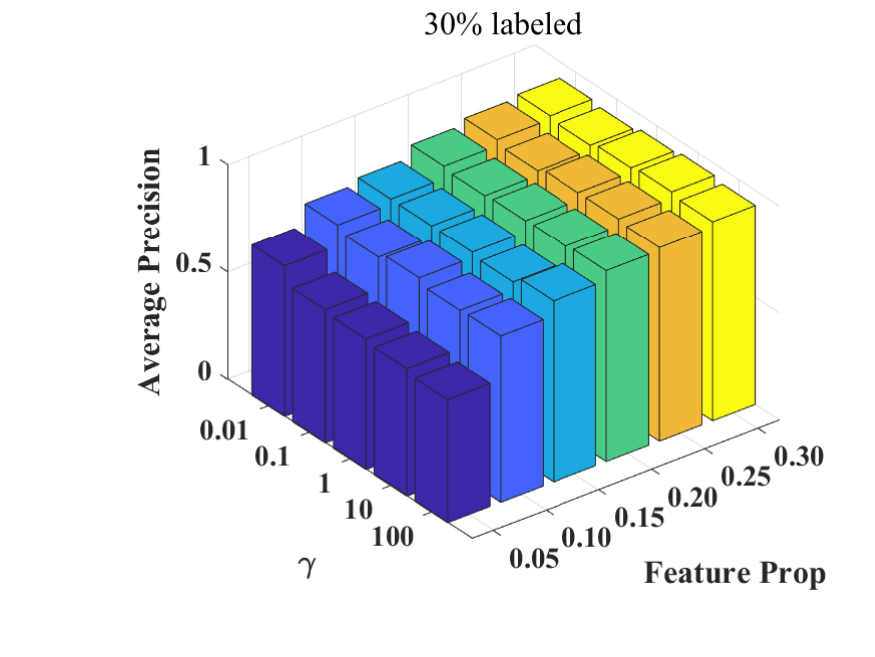}}
        \hspace{-3.1mm}
	\subfigure[Scene ($lsd$)]{
		\label{fig:subfig:a} 
		\includegraphics[width=0.48\columnwidth]{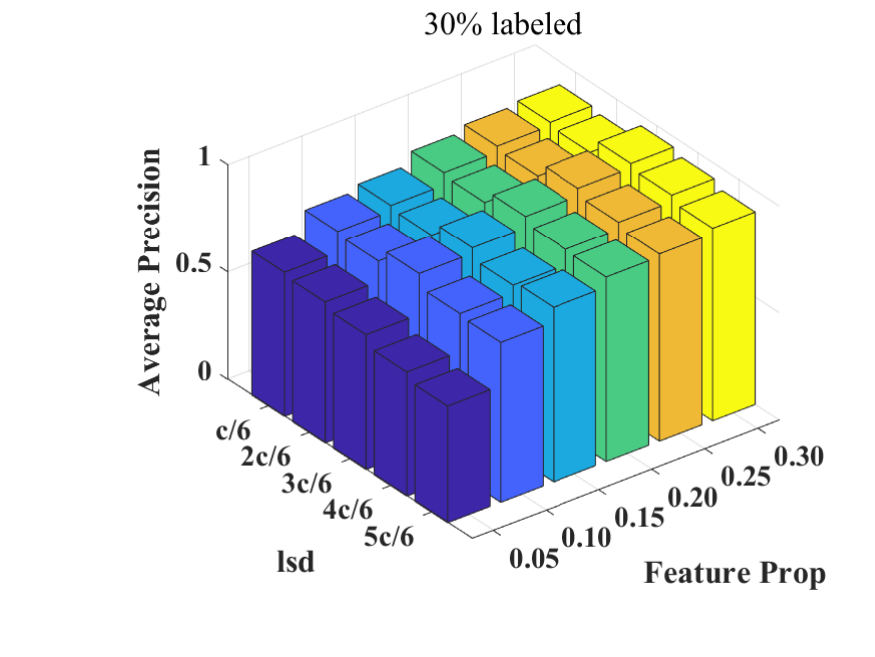}}
	\hspace{-3.1mm}
	\caption{Sensitivity analysis about parameters $\alpha$, $\beta$, $\gamma$ and $lsd$ on dataset Scene (30\% labeled samples).} 
	\label{fig5} 
\end{figure}

\section{The Proof of Theorem~\ref{theo2}}
\label{apen3}
\noindent \textbf{Theorem~\ref{theo2}}.
The objective function of the proposed SGMFS in Eq.~(\ref{eq11}) in the main text decreases monotonically with the optimization process in Algorithm~\ref{Alg1}.

\begin{proof}
	For the sake of convenience, we rewrite the objection function of SGMFS Eq.~(\ref{eq11}) in the main text as:
	\begin{equation}
		\begin{aligned}
			\psi\left(\textbf{M}, \textbf{b}, \textbf{F},\textbf{W},\textbf{Q}\right)=h\left(\textbf{M}, \textbf{b}, \textbf{F},\textbf{W},\textbf{Q}\right)+\gamma\|\textbf{W}\|_{2,1}
		\end{aligned}
		\label{eq39}
	\end{equation}
	where $h\left(\textbf{M}, \textbf{b}, \textbf{F},\textbf{W},\textbf{Q}\right)=\left\|\textbf{X}^{T} \textbf{W}\!+\!\mathbf{1}_{n} \textbf{b}^{T}\!-\!\textbf{F}\right\|_{F}^{2}\!\!+\alpha\|\textbf{X}^{T}\textbf{W}-\textbf{QQ}^{T}\textbf{X}^{T}\textbf{W}\|_{F}^{2}+\beta\|\textbf{MF}-\textbf{F}\|_{F}^{2}+\beta\|\textbf{MQ}-\textbf{Q}\|_{F}^{2}+\gamma\|\textbf{M}\|_{1}$.
	
	According to Eq.~(13) in the main text, we can denote
	\begin{equation}
		\textbf{Q}_{t+1}=\min _{\textbf{Q}^{T}\textbf{Q}=\textbf{I}}\alpha\|\textbf{X}^{T}\textbf{W}_{t}-\textbf{QQ}^{T}\textbf{X}^{T} \textbf{W}_{t}\|_{F}^{2}\!+\beta\|\textbf{M}_{t} \textbf{Q}-\textbf{Q}\|_{F}^{2}
		\label{eq40}
	\end{equation}
	Then we have
	\begin{equation}
		\begin{aligned}
			\psi\left(\textbf{M}_{t}, \textbf{b}_{t}, \textbf{F}_{t},\textbf{W}_{t},\textbf{Q}_{t+1}\right)\leq\psi\left(\textbf{M}_{t}, \textbf{b}_{t}, \textbf{F}_{t},\textbf{W}_{t},\textbf{Q}_{t}\right)
		\end{aligned}
		\label{eq41}
	\end{equation}
	According to fixing $\textbf{Q}=\textbf{Q}_{t+1}$ and denoting that $g\left(\textbf{W}\right)=\left\|\textbf{X}^{T} \textbf{W}\!+\!\mathbf{1}_{n} \textbf{b}^{T}_{t}\!-\!\textbf{F}_{t}\right\|_{F}^{2}\!\!+\!\alpha\|\textbf{X}^{T}\textbf{W}-\textbf{Q}_{t+1} \textbf{Q}^{T}_{t+1}\textbf{X}^{T}\textbf{W}\|_{F}^{2}$, we define
	\begin{equation}
		\textbf{W}_{t+1}=\arg \min _{\textbf{W}} g\left(\textbf{W}_{t}\right)+\alpha Tr\left(\textbf{W}_{t}^{T} \textbf{D}_{t} \textbf{W}_{t}\right)
		\label{eq42}
	\end{equation}
	Then the following inequation holds:
	\begin{equation}
		g\left(\textbf{W}_{t+1}\right)+\alpha Tr\left(\textbf{W}_{t+1}^{T} \textbf{D}_{t} \textbf{W}_{t+1}\right) \leq g\left(\textbf{W}_{t}\right)+\alpha Tr\left(\textbf{W}_{t}^{T} \textbf{D}_{t} \textbf{W}_{t}\right)
		\label{eq43}
	\end{equation}
	Following the same method in \cite{nie2010efficient}, we can obtain:
	\begin{equation}
		\|\textbf{W}_{t+1}\|_{2,1}- Tr\left(\textbf{W}_{t+1}^{T} \textbf{D}_{t} \textbf{W}_{t+1}\right) \leq \|\textbf{W}_{t}\|_{2,1}- Tr\left(\textbf{W}_{t}^{T} \textbf{D}_{t} \textbf{W}_{t}\right)
		\label{eq44}
	\end{equation}
	According to Eq.~(\ref{eq43}) and Eq.~(\ref{eq44}), we have:
	\begin{equation}
		\begin{aligned}
			\psi\left(\textbf{M}_{t},\textbf{ b}_{t}, \textbf{F}_{t},\textbf{W}_{t+1},\textbf{Q}_{t+1}\right)\leq\psi\left(\textbf{M}_{t}, \textbf{b}_{t}, \textbf{F}_{t},\textbf{W}_{t},\textbf{Q}_{t}\right)
		\end{aligned}
		\label{eq45}
	\end{equation}
	Next, we fix $\textbf{W}=\textbf{W}_{t+1}$, and Eq.~(\ref{eq39}) is monotonically decreasing with respected to $\textbf{b}$ according to Eq.~(\ref{eq16}) in the main text, so Eq.~(\ref{eq45}) can be rewritten as
	\begin{equation}
		\begin{aligned}
			\psi\left(\textbf{M}_{t},\textbf{ b}_{t+1}, \textbf{F}_{t},\textbf{W}_{t+1},\textbf{Q}_{t+1}\right)\leq\psi\left(\textbf{M}_{t}, \textbf{b}_{t}, \textbf{F}_{t},\textbf{W}_{t},\textbf{Q}_{t}\right)
		\end{aligned}
		\label{eq46}
	\end{equation}
	Similarly, according to fixing $\textbf{b}=\textbf{b}_{t+1}$ and Eq.~(\ref{eq19}) in the main text, we have
	\begin{equation}
		\begin{aligned}
			\psi\left(\textbf{M}_{t},\textbf{ b}_{t+1}, \textbf{F}_{t+1},\textbf{W}_{t+1},\textbf{Q}_{t+1}\right)\leq\psi\left(\textbf{M}_{t}, \textbf{b}_{t}, \textbf{F}_{t},\textbf{W}_{t},\textbf{Q}_{t}\right)
		\end{aligned}
		\label{eq47}
	\end{equation}
	Eventually, the limiting solution of $M$ updated by rule Eq.~(\ref{eq23}) in the main text is correct since it satisfies the KKT condition~\cite{boyd2004convex} proved in Proposition~\ref{prop4} in the main text. Therefore, we can obtain the final inequation based on $J\left(\textbf{M}_{t}\right) \geq J\left(\textbf{M}_{t+1}\right)$ proved in Theorem~\ref{theo1}.
	\begin{equation}
		\begin{aligned}
			\psi\left(\textbf{M}_{t+1},\textbf{ b}_{t+1}, \textbf{F}_{t+1},\textbf{W}_{t+1},\textbf{Q}_{t+1}\right)\leq\psi\left(\textbf{M}_{t}, \textbf{b}_{t}, \textbf{F}_{t},\textbf{W}_{t},\textbf{Q}_{t}\right)
		\end{aligned}
		\label{eq48}
	\end{equation}
	Therefore, Eq.~(\ref{eq11}) in the main text decreases monotonically with the optimization process in Algorithm~\ref{Alg1} in the main text, and it is easy to know that objective function Eq.~(\ref{eq11}) in the main text has the zero lower bound, so Algorithm~\ref{Alg1} in the main text can reach convergence.
\end{proof}

\section{Additional Experimental Results}
\label{apen4}
Here we show the experimental results of the sensitivity analysis about parameters $\alpha$, $\beta$, $\gamma$ and $lsd$ on dataset Scene (30\% labeled samples in Figure~\ref{fig5}, where the example-based metric $Average$ $Precision$ is used in this experiment.
The larger $Average$ $Precision$ is, the better the corresponding methods are~\cite{zhang2013review}. According to these experimental results, we can obtain the same conclusion illustrated in the main text.

 \bibliographystyle{elsarticle-num} 
 \bibliography{main}





\end{document}